\documentclass{article}

\usepackage{amsmath,amsfonts,bm,amsthm}

\newtheorem{theorem}{Theorem}[section]
\newtheorem{lemma}{Lemma}[section]
\newtheorem{proposition}{Proposition}[section]

\newtheorem{definition}{Definition}[section]

\newtheorem*{theorem*}{Theorem}
\newtheorem*{proposition*}{Proposition}

\newenvironment{proofsketch}[1][]
{%
  \par\noindent
  \textit{Proof sketch}%
  \ifx\relax#1\relax\else\ (#1)\fi.\ %
  \itshape
}
{%
  \par
}

\def\eqref#1{equation~\ref{#1}}

\def\1{\bm{1}}

\newcommand{\Dset}[1]{D_{\mathrm{#1}}}
\newcommand{\esssup}{\operatorname{ess\,sup}}

\def\vpi{{\bm{\pi}}}

\DeclareMathAlphabet{\mathsfit}{\encodingdefault}{\sfdefault}{m}{sl}
\SetMathAlphabet{\mathsfit}{bold}{\encodingdefault}{\sfdefault}{bx}{n}

\newcommand{\renyito}{D_{\alpha}(\pi^{t,1}_{R}\| \pi^{t,1}_{L})}
\newcommand{\renyit}{D_{\alpha}(\pi^{t}_{R}\| \pi^{t}_{L})}
\newcommand{\nf}{n_{\mathrm{forget}}}
\newcommand{\np}{n_{\mathrm{priv}}}
\newcommand{\npub}{n_{\mathrm{public}}}

\usepackage{microtype}
\usepackage{graphicx}
\usepackage{subcaption}
\usepackage{booktabs} 

\usepackage{hyperref}

\usepackage[accepted]{icml2026}

\usepackage{amsmath}
\usepackage{amssymb}
\usepackage{mathtools}
\usepackage{amsthm}
\usepackage{wrapfig}

\usepackage{thmtools}
\usepackage{thm-restate}

\usepackage[capitalize,noabbrev]{cleveref}

\theoremstyle{plain}
\theoremstyle{definition}
\theoremstyle{remark}

\usepackage[textsize=tiny]{todonotes}

\icmltitlerunning{Unlearning with Asymmetric Sources: Improved Unlearning-Utility Trade-off with Public Data}

\AddToHook{cmd/appendix/before}{%
    \crefalias{section}{appendix}%
    \crefalias{subsection}{appendix}
}

\begin{document}

\twocolumn[
  \icmltitle{Unlearning with Asymmetric Sources: \\
Improved Unlearning-Utility Trade-off \\ with Public Data}



  \icmlsetsymbol{equal}{*}
  
    \begin{icmlauthorlist}
      \icmlauthor{Ahmed Mehdi Inane}{yyy,sch}
      \icmlauthor{Vincent Quirion}{yyy,sch}
      \icmlauthor{Gintare Karolina Dziugaite}{comp}
      \icmlauthor{Ioannis Mitliagkas}{comp}
    \end{icmlauthorlist}
    \icmlaffiliation{yyy}{Universit\'e de Montr\'eal}
    \icmlaffiliation{comp}{Google DeepMind}
    \icmlaffiliation{sch}{Mila, Quebec AI Institute}
    
  \icmlcorrespondingauthor{Ahmed Mehdi Inane}{mehdi-inane.ahmed@mila.quebec}

  \icmlkeywords{Machine Learning, ICML}

  \vskip 0.3in
]



\printAffiliationsAndNotice{}  

\begin{abstract}
Noise-based certified machine unlearning currently faces a hard ceiling: the noise magnitude required to certify unlearning typically destroys model utility, particularly for large-scale deletion requests.
While leveraging public data is a standard technique in differential privacy to relax this tension, its role in unlearning remains unexplored. We address this gap by introducing \textbf{Asymmetric Langevin Unlearning (ALU)}, a framework that uses public data to mitigate privacy costs. We prove that public data injection suppresses the unlearning cost by a factor of $O(1/n_{\mathrm{pub}}^2)$, guaranteeing a strict computational advantage over retraining. This establishes a new control mechanism: practitioners can mitigate the need for high noise—and the associated utility loss—by increasing the volume of public data. Crucially, we analyze the realistic setting of \textbf{distribution mismatch}, explicitly characterizing how shifts between public and private sources impact utility.  
We show that ALU enables ``mass unlearning'' of constant dataset fractions -- a regime where standard symmetric methods become impractical -- while maintaining high utility.
Empirical evaluations using variational Rényi divergence and membership inference attacks confirm that ALU effectively thwarts privacy attacks while preserving utility under reasonable distribution shifts.
\end{abstract}

\section{Introduction}

The widespread adoption of machine learning across diverse applications has prompted regulatory responses aimed at protecting user privacy and data rights. Legislative frameworks such as the European Union's AI Act \citep{eu_ai_act_2024} and Canada's Artificial Intelligence and Data Act (AIDA) \citep{canada_aida_2022} establish fundamental principles including the ``right to be forgotten,'' which mandates that individuals can request removal of their personal data from trained systems. While the most straightforward approach to these requests—retraining from scratch—provides perfect guarantees, it is computationally prohibitive for modern deep learning models. Consequently, the field has gravitated toward approximate methods, particularly the family of \textit{noise-based injection and fine-tuning}, such as Langevin Unlearning \citep{chien_langevin_2024,koloskova_certified_2025}. These methods offer certifiable privacy guarantees but operate under a strict trade-off: the magnitude of noise required to certify the data erasure degrades the model's utility.

In this work, we address this limitation by exploring an idea established in Differential Privacy (DP) but unexplored in machine unlearning: the integration of \textit{public data}. We operate under the realistic assumption that while sensitive user data must be unlearnable, there often exists a corpus of public data that is not subject to retraction requests. 
We propose \textbf{Asymmetric Langevin Unlearning (ALU)}, a framework that leverages this public data as a mechanism to improve the privacy-utility trade-off. 
To our knowledge, the only prior work exploring mixed-privacy unlearning is \citet{golatkar_mixed-privacy_2021}, who introduced Mixed-Linear Forgetting for computer vision tasks. Their approach requires architectural modifications to achieve forgetting through network linearization, limiting its applicability. 
In contrast, ALU operates directly on standard training pipelines. Intuitively, public data acts as a stability anchor; it ensures that the weight distributions of the originally trained model and the retrained model remain naturally close. This proximity reduces the need for noise injection to bridge the gap between distributions, thereby preserving utility.

\begin{itemize}
\item We prove that injecting public data creates a more favorable initialization for the unlearning process, reducing the unlearning cost by a factor of $\mathcal{O}(1/n_{\text{pub}}^2)$ (\cref{theorem:learning_retraining,thm:unlearning_init_bound}). This structural advantage enables two capabilities:
\begin{itemize}
\item \textbf{Mass Unlearning:} Unlike prior methods where noise requirements are independent of total dataset size, ALU allows for unlearning \textit{constant fractions} of the private dataset (\cref{cor:non_vacuous_divergence}), ensuring robustness to large-scale deletion.
\item \textbf{Computational Advantage:} We revisit the efficiency of unlearning compared to retraining from scratch. While standard Langevin Unlearning is known to be efficient asymptotically, we establish a stronger result: ALU maintains a strict computational advantage over retraining even in finite-step regimes.
\end{itemize}
\item  We depart from the idealized assumption of identical private-public distributions. We derive a new generalization bound \cref{prop:unlearning_da_risk} that explicitly quantifies the trade-off between the benefits of noise reduction, and the penalties of distribution mismatch between public and private sources.

\item We introduce a rigorous evaluation methodology using variational estimation of the Rényi divergence to validate our bounds. Our experiments confirm that ALU successfully defends against membership inference attacks (U-LiRA) while preserving significantly higher utility than symmetric baselines.
\end{itemize}

\section{Related work}
\subsection{Machine unlearning}
Machine unlearning algorithms eliminate the influence of designated training data (the \textit{forget set}) while balancing unlearning efficacy, model utility, and computational efficiency. Three canonical strategies illustrate the trade-offs: random re-initialization achieves perfect unlearning but destroys utility; retraining from scratch provides optimal guarantees but incurs prohibitive costs; no intervention preserves utility but achieves no unlearning.
Unlearning paradigms diverge between \textbf{exact unlearning}, which matches the retraining baseline but limits expressivity or efficiency \citep{cao_towards_2015, yan_arcane_2022}, and \textbf{approximate unlearning}, which provides certified approximations of retraining \citep{nguyen_variational_2020,guo_certified_2023, chien_stochastic_2024,koloskova_certified_2025}.

Recent advancements in machine unlearning have explored many strategies to mitigate the computational burden of retraining. Some methods prioritize efficiency by exclusively modifying or isolating specific components of the model, such as the final layers or via parameter-efficient fine-tuning modules \citep{chowdhury2025scalableexactmachineunlearning}. While effective in practice, these approaches either lack formal theoretical guarantees or require fundamental alterations to standard training pipelines and architectures, akin to SISA \citep{bourtoule_machine_2020}. Another line of work investigates unlearning in restricted settings, such as when the source data is entirely inaccessible. However, these solutions rely on deterministic algorithms that assume model convergence \citep{ahmed2025sourcefreemachineunlearning}.

In contrast to these paradigms, our work operates within the randomized frameworks of approximate unlearning, namely $(\varepsilon,\delta)$-unlearning and Rényi unlearning, where schemes like noisy fine-tuning on the retain set have been extensively studied \citep{chien_langevin_2024,chien_stochastic_2024}. We investigate specifically the unexplored benefits of incorporating public data, in the general framework of Langevin Unlearning. We highlight that this is not an algorithmic contribution, but a characterization of how the unlearning utility trade-off is improved by the asymmetric nature of this setting.

\subsection{Langevin Unlearning}

A common approach to machine unlearning is to run a noisy projected gradient method starting from the trained weights, targeting a distribution close to retraining. Formally, at iteration $t$,
\begin{equation}\label{eq:pngd-update}
    \theta_{t+1} = \Pi_{\Theta}\!\left[\theta_t - \eta \nabla_\theta \mathcal{L}(\theta_t) + \xi_t \right],
\end{equation}
where $\mathcal{L}$ is a surrogate loss (e.g., empirical loss on a retain set), $\eta$ is the step size, and $\xi_t$ is injected noise (often Gaussian) controlling distributional closeness.

Langevin Unlearning (LU) \citep{chien_langevin_2024} instantiates this scheme with
$\mathcal{L} = \mathcal{L}_{\mathcal{D}_r}$, the loss on the retain set, and 
$\xi_t \sim \mathcal{N}(0, I_d)$. This reduces to projected noisy gradient descent (PNGD) (pseudocode in \cref{subseq:lu_pseudocode}).

LU provides certifiable approximate unlearning guarantees by minimizing the Rényi divergence between post-unlearning and post-retraining weight distributions \citep{chien_langevin_2024,chien_stochastic_2024}. However, these guarantees require that the \textit{entire original training process} satisfies differential privacy (DP). This necessitates injecting substantial noise starting from the very first training iteration, which compromises the utility of the base model even before any unlearning request. In this work, we improve upon \citet{chien_langevin_2024} by leveraging public data to mitigate this ``privacy tax''. We demonstrate that anchoring the training process with public data allows us to reduce the noise magnitude required during both learning and unlearning while satisfying the same guarantees. By assuming the initialization satisfies a log-Sobolev inequality—a mild condition satisfied by Gaussian initialization—we derive data-dependent bounds showing that public data acts as a stabilizer, effectively ``subsidizing'' the privacy cost of the private data. Concurrent approaches like \citep{koloskova_certified_2025} require only smoothness assumptions, but such data-agnostic bounds depend primarily on projection set geometry rather than exploiting the structural advantage of public data.

\section{Asymmetric Langevin Unlearning}
\subsection{Preliminaries}
\textbf{Motivation.} Our approach is motivated by a realistic data setting, well-established in the privacy machine learning literature \citep{alon2019limitsprivatelearningaccess,amid2022publicdataassistedmirrordescent,ganesh_why_2023,lowy2024optimal}, that leverages public data to improve the privacy-utility trade-off. We introduce this asymmetric data model to Langevin Unlearning, which allows us to relax the restrictive Differential Privacy (DP) assumption over the entire dataset. By explicitly modeling this asymmetry, we can leverage public data to enhance the unlearning process to improve both efficacy and model performance without compromising privacy guarantees.

\textbf{Notation.} We consider probability distributions defined over a compact parameter space $\Theta$, where stochasticity arises from three sources: the weight initialization distribution $\vpi_0$, the training data distribution $P_{\mathrm{train}}$, and the inherent randomness of the optimization procedure. We denote by $\mathcal{P}(\Theta)$ the set of probability distributions supported on $\Theta$. We study the weight distributions $\pi_S^t$, where $S \in \{L, U, R\}$ identifies the training regime and $t$ denotes the iteration count. A key quantity in our analysis is the R\'{e}nyi divergence of order $\alpha$ between distributions $P$ and $Q$, denoted $D_{\alpha}(P\|Q)$ (\cref{def:renyi}). We use $P_{\mathrm{pub}}$ and $P_{\mathrm{priv}}$ to represent the distributions of public and private data, respectively.

\textbf{Problem Setting.} We consider empirical risk minimization over a dataset $D = \Dset{pub} \cup D_{\mathrm{priv}} $ comprising two components: a public set $\Dset{pub}$ with $n_{\mathrm{pub}}$ samples from a distribution $P_{\mathrm{pub}}$, and a private set $\Dset{priv}$ with $n_{\mathrm{priv}}$ samples from a distribution $P_{\mathrm{priv}}$. The training loss is $\mathcal{L}_D(\theta) = \frac{1}{n_{\mathrm{pub}} + n_{\mathrm{priv}}}\sum_{x \in D} \ell(\theta, x)$. Only the private data is subject to unlearning requests, while public data remains permanently available. We employ $T$ PNGD iterations with projections onto $\Theta \subset \mathbb{R}^d$ (radius $R$) to obtain $\theta_T$. Since PNGD injects Gaussian noise at each step, it induces probability distributions over the parameter space. To ensure convergence and certifiable guarantees, we assume the initialization distribution satisfies a Log-Sobolev inequality (LSI):
\begin{definition}{(Log-Sobolev inequality \citep{gross1975logarithmic})}
    A probability measure $P \in \mathcal{P}(\mathbb{R}^d)$ satisfies a Log-Sobolev inequality with constant $C$ if
    \begin{align}
        \forall Q \in \mathcal{P}(\mathbb{R}^d), D_{KL}(Q\|P) &\leq \frac{C}{2}I(Q,P),
    \end{align}
    where $D_{KL}$ denotes the KL divergence and $I(Q,P) = E_Q\left[\|\nabla \log \frac{q}{p}\|^2\right]$ is the relative Fisher information. 
\end{definition}

\paragraph{Weight Distributions}
We analyze three distributions induced by PNGD on $D = D_{\text{pub}} \cup D_{\text{priv}}$ given a request $D_{\text{forget}} \subseteq D_{\text{priv}}$ (\cref{fig:unlearning-pipeline}):
\begin{itemize}
    \item \textit{Learning distribution} $\pi_L^T$: results from $T$ iterations on $D$ with $\theta_0 \sim \pi_0$, representing the pre-unlearning model;
    \item \textit{Unlearning distribution} $\pi_U^K$: results from $K$ fine-tuning iterations on $D \setminus D_{\text{forget}}$ initialized from $\theta \sim \pi_L^T$;
    \item \textit{Retraining distribution} $\pi_R^T$: results from $T$ iterations on $D \setminus D_{\text{forget}}$ with $\theta_0 \sim \pi_0$, serving as the retraining baseline.
\end{itemize}
\begin{figure}[ht]
    \centering
    \includegraphics[width=\columnwidth]{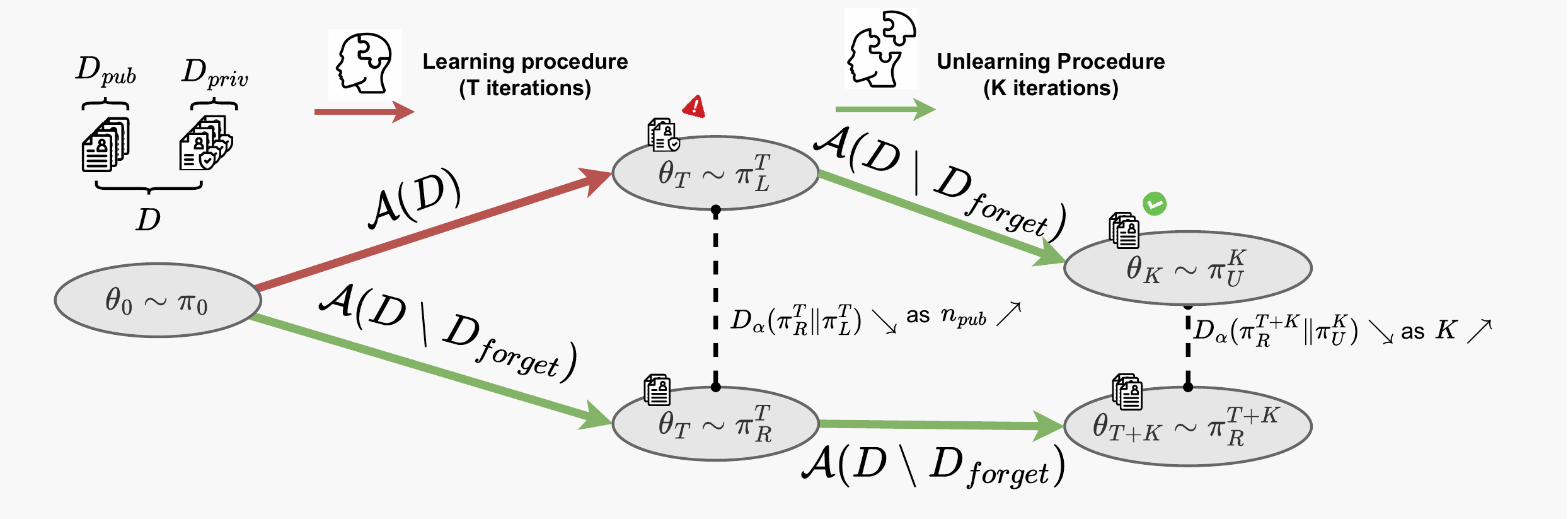}
    \caption{Training pipelines showing the relationship between learning, unlearning, and retraining with public data injection. The divergence $D_{\alpha}(\pi_R^T\|\pi_L^T)$ quantifies how public data helps maintain similarity between retraining and original learning distributions, facilitating subsequent unlearning.}
    \label{fig:unlearning-pipeline}
\end{figure}

Following \citet{chien_langevin_2024}, we measure unlearning quality via Rényi divergence.

\begin{definition}\label{def:renyi}
For probability measures $P, Q$ with $P \ll Q$, their Rényi divergence of order $\alpha \in (0,+\infty) \setminus \{1\}$ is
\begin{align*}
    D_{\alpha}(P\|Q) &= \frac{1}{\alpha -1}\log\mathbb{E}_{Q}\left[\left(\frac{dP}{dQ}\right)^{\alpha}\right],
\end{align*}
where $\frac{dP}{dQ}$ is the Radon-Nikodym derivative. This generalizes KL divergence ($\alpha \rightarrow 1$), reverse-KL ($\alpha \rightarrow 0$), and connects to $\varepsilon$-differential privacy in the limit $\alpha \rightarrow \infty$ \citep{mironov_renyi_2017}.
\end{definition}

The effectiveness of unlearning is measured by $D_{\alpha}(\pi_U^K\|\pi_R^{T+K})$, while the presence of public data helps control $D_{\alpha}(\pi_R^T\|\pi_L^T)$, creating favorable conditions for the unlearning process.

\subsection{Unlearning Performance}
We now present theoretical guarantees for asymmetric Langevin unlearning that demonstrate how public data improves unlearning efficiency. Our analysis adapts the prior work of \citet{chien_langevin_2024} by relaxing global differential privacy assumptions, and providing explicit characterization of how public and private data contributions differ in the unlearning bounds. The following result explains how public data reduces reliance on differential privacy constraints:

\begin{restatable}[The role of public data in shrinking the learning / retraining mismatch.]{theorem}{learningretraining}\label{theorem:learning_retraining} 
    Suppose that the loss is $L$-smooth and $M$-Lipschitz, and that the initialization distribution satsifies a $C_0$-log Sobolev inequality. Moreover, suppose that the PNGD updates project onto a compact set $\Theta$ of radius $R$.\\
    Then at learning iteration T, we have the following upper bound on the R\'enyi divergence between the retraining $\pi_{R}^{T}$ and learning $\pi_{L}^{T}$ distributions:
    \begin{align*}
   \frac{D_{\alpha}(\pi_{R}^{T}\| \pi_{L}^{T})}{\alpha}
   &\leq \frac{2M^2\eta^2 n_{\mathrm{forget}}^2}{(n_{\mathrm{pub}}+n_{\mathrm{priv}})^2\sigma^{2}}\sum_{t=1}^{T-1}\prod_{t'=t}^{T-1}h(t',\eta,\sigma),
\end{align*}
where $h(t',\eta,\sigma) = \left(  1+\frac{\eta\sigma^{2}}{C_{t',1}}\right)^{-1}$, and $0 < C_{t',1} \leq (1+\eta L)^{2K}C_0 + 2\eta\sigma^2\frac{(1+\eta L)^{2K} - 1}{(1+\eta L)^2 -1}$ are log Sobolev constants of the distributions of the intermediate PNGD updates. Using the support's radius allows to loosely upper bound those constants \citep{chien_langevin_2024}: $C_{t',1} \leq 6e^{\frac{4\tau}{\eta\sigma^2}}(4\tau^2 + \eta\sigma^2)$ with $\tau = R + \eta M$.
\end{restatable}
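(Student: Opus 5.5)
The bound will come from the shifted-divergence / privacy-amplification-by-iteration argument for Langevin dynamics used in \citet{chien_langevin_2024}, itself descending from the Rényi-LSI contraction analysis of Vempala--Wibisono. We compare two coupled PNGD chains started from the same $\pi_0$: one driven by $\mathcal{L}_D$, giving $\pi_L^t$, and one driven by $\mathcal{L}_{D\setminus D_{\mathrm{forget}}}$, giving $\pi_R^t$. We track $D_\alpha(\pi_R^t\|\pi_L^t)/\alpha$ through each iteration, which is split into three elementary operations: a gradient map, a Gaussian convolution, and a projection. The public data enters only through the normalization $1/(n_{\mathrm{pub}}+n_{\mathrm{priv}})$ in the gradient discrepancy.

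\textbf{Step 1 (gradient discrepancy).} We write $\mathcal{L}_{D\setminus D_{\mathrm{forget}}}$ with the same normalization convention and bound the difference of the two update maps. The terms for the retained points coincide, so only the $n_{\mathrm{forget}}$ forgotten points contribute, each with a gradient of norm at most $M$. This gives
\[
\sup_\theta \bigl\|\eta\nabla \mathcal{L}_D(\theta)-\eta\nabla\mathcal{L}_{D\setminus D_{\mathrm{forget}}}(\theta)\bigr\| \le \frac{2\eta M n_{\mathrm{forget}}}{n_{\mathrm{pub}}+n_{\mathrm{priv}}}.
\]
If instead the retrain loss is renormalized by $n-n_{\mathrm{forget}}$, the retained terms differ slightly; the factor $2$ absorbs this. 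The key point is that the bound scales as $n_{\mathrm{forget}}/(n_{\mathrm{pub}}+n_{\mathrm{priv}})$, not as $1/n_{\mathrm{priv}}$.

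\textbf{Step 2 (one-step recursion).} Both chains share the $L$-smooth map $\psi(\theta)=\theta-\eta\nabla\mathcal{L}_D(\theta)$ up to a bounded shift of size $\Delta$, the bound from Step 1. Adding Gaussian noise of variance $\eta\sigma^2$ to such a shift costs at most $\Delta^2/(2\eta\sigma^2)$ in $D_\alpha/\alpha$, by the Gaussian mechanism. This produces the additive term $2M^2\eta n_{\mathrm{forget}}^2/((n_{\mathrm{pub}}+n_{\mathrm{priv}})^2\sigma^2)$, up to the exact constant and power of $\eta$ fixed by the noise-scaling convention in PNGD. The prefactor $\eta^2/\sigma^2$ in the statement suggests noise variance $\eta\sigma^2$ paired with shift $\eta M$, or equivalently an absorbed convention.

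The existing divergence is then contracted by the Langevin/heat-flow step. If the current $\pi_L^t$ satisfies LSI with constant $C_{t,1}$, Rényi divergence contracts under the Gaussian convolution by the factor $(1+\eta\sigma^2/C_{t,1})^{-1}=h(t,\eta,\sigma)$. This uses the simultaneous Rényi–LSI decay lemma, with $\alpha$ possibly changing, as in \citet{chien_langevin_2024}; we absorb that as they do. Projection onto the convex set $\Theta$ is $1$-Lipschitz, and by data processing it does not increase $D_\alpha$. Together these give a recursion of the form $a_{t+1}\le h(t,\eta,\sigma)\,a_t + c$. Since $a_0=0$ (same initialization), unrolling it yields exactly $c\sum_{t}\prod_{t'=t}^{T-1}h(t')$.

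\textbf{Step 3 (LSI constants).} We propagate LSI along the learning chain. A pushforward by an $(1+\eta L)$-Lipschitz map multiplies the constant by $(1+\eta L)^2$; Gaussian convolution adds $2\eta\sigma^2$, the variance term; projection preserves LSI under the same Lipschitz rule. Iterating from $C_0$ gives the geometric-sum bound in the statement. Alternatively, since every iterate is supported in a ball of radius $\tau=R+\eta M$ and then convolved with a Gaussian, the bounded-perturbation (Holley--Stroock) argument from \citet{chien_langevin_2024} gives the uniform bound $6e^{4\tau/(\eta\sigma^2)}(4\tau^2+\eta\sigma^2)$. \textbf{Main obstacle:} Step 2. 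Combining the shift cost with the LSI contraction at the correct Rényi order requires the weak triangle inequality or the shifted Rényi divergence. We will follow Chien et al.'s Lemma verbatim, substituting only the new sensitivity from Step 1.
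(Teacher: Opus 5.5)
Your route is the paper's. It uses the same sensitivity bound: the paper normalizes the retain loss by its own size, and the retained and forgotten terms each contribute $\eta M n_{\mathrm{forget}}/(n_{\mathrm{pub}}+n_{\mathrm{priv}})$, which is exactly your factor $2$. It then splits each step's noise into two halves of variance $\eta\sigma^2$, pays a Gaussian-mechanism cost on the first half, and applies the LSI contraction on the second (the paper invokes Ye--Shokri's lemma with the shrinking order $\alpha(t)$). The projection is removed by data processing, and the recursion is unrolled from $a_0=0$.

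A few details need fixing.

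First, the ordering in your recursion. The shift cost must be paid \emph{before} the contraction of the same step, giving $a_{t+1}\le h(t,\eta,\sigma)\,(a_t+c)$. Your $a_{t+1}\le h\,a_t+c$ leaves an uncontracted copy of $c$. It therefore does not unroll to the stated sum, in which every term carries at least one factor $h$.

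Second, the step you flag as the main obstacle needs neither a weak triangle inequality nor a shifted divergence. Adaptive R\'enyi composition handles it: bound the joint law of $(\theta^t,\theta^{t,1})$, then marginalize. This gives $D_\alpha(\pi_R^{t,1}\|\pi_L^{t,1})/\alpha\le D_\alpha(\pi_R^{t}\|\pi_L^{t})/\alpha+\Delta^2/(2\eta\sigma^2)$ at a fixed order, because $D_\alpha/\alpha$ between equal-covariance Gaussians does not depend on $\alpha$. The order only moves inside the contraction lemma.

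Third, the power of $\eta$. Your $\Delta^2/(2\eta\sigma^2)=2M^2\eta\,n_{\mathrm{forget}}^2/((n_{\mathrm{pub}}+n_{\mathrm{priv}})^2\sigma^2)$ is the consistent computation. The reconciliation you propose (variance $\eta\sigma^2$ with a shift of order $\eta M$) gives $\eta$, not $\eta^2$. The paper's $\eta^2$ comes from writing the Gaussian-mechanism cost as $\Delta_F^2/(2\sigma^2)$ for a half-step whose noise variance is $\eta\sigma^2$. With total noise $2\eta\sigma^2$ and the contraction factor $h$ as stated, no consistent bookkeeping yields $\eta^2$. So this argument, yours or the paper's done consistently, establishes the displayed bound with $\eta^2$ replaced by $\eta$, which is weaker when $\eta<1$.

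Finally, a small attribution point: the uniform LSI constant comes from Chen et al.'s bound for a measure supported in radius $\tau$ convolved with a Gaussian, not from a Holley--Stroock perturbation argument.
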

\begin{proofsketch}
    The proof follows the analytical framework of {\citet[Theorem 3.3]{chien_langevin_2024}}, adapted to leverage the presence of public data in the training set. By distinguishing between public and private data contributions in the gradient updates, we reduce the privacy erosion \citep{chourasia_differential_2021} of each PNGD update. Full proof details are presented in \cref{app:learning_retraining_proof}.
\end{proofsketch}

This bound reveals that we can fix noise magnitude $\sigma$ to be arbitrarily small to preserve performance while controlling the divergence through public data volume. When $n_{\mathrm{pub}} \gg n_{\mathrm{forget}}$, the learning and retraining distributions remain close regardless of noise level, providing favorable initial conditions for unlearning (\cref{fig:renyi_initi_estimation}). 
Specifically, the public data reduces the squared sensitivity of the gradient updates by a factor of $(n_{pub} + n_{priv})^2$. 

This dependency allows us to characterize a regime where asymmetric unlearning provides meaningful guarantees while symmetric methods fail: the unlearning of a constant fraction of the private dataset.

\begin{restatable}[LU noise vs ALU noise]{corollary}{constantfraction}
\label{cor:non_vacuous_divergence}
Consider the setting where the forget set size constitutes a constant fraction of the private data (i.e., $n_{\mathrm{forget}} = c n_{\mathrm{priv}}$ for some $c \in (0, 1]$). Let $\sigma^2_{\mathrm{sym}}$ and $\sigma^2_{\mathrm{asym}}$ be the noise variances required to bound the Rényi divergence between the learning and retraining distributions by a fixed $\varepsilon$ in the symmetric and asymmetric settings, respectively. These variances satisfy the following lower bounds:
\begin{align*}
    \sigma^2_{\mathrm{sym}} &\geq \frac{2M^2\eta^2 c^2}{\varepsilon} \sum_{t=1}^{T-1}\prod_{t'=t}^{T-1}h(t',\eta,\sigma),\\
    \sigma^2_{\mathrm{asym}} &\geq \frac{2M^2\eta^2 c^2}{\varepsilon} \left( \sum_{t=1}^{T-1}\prod_{t'=t}^{T-1}h(t',\eta,\sigma) \right) \left( \frac{n_{\mathrm{priv}}}{n_{\mathrm{priv}} + n_{\mathrm{pub}}} \right)^2.
\end{align*}
\end{restatable}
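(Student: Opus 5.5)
The corollary follows by specialising \cref{theorem:learning_retraining} to the two data regimes and inverting the bound for $\sigma^2$. Write
\[
S_T(\eta,\sigma) \;=\; \sum_{t=1}^{T-1}\prod_{t'=t}^{T-1}h(t',\eta,\sigma).
\]
We read ``required to bound the Rényi divergence by $\varepsilon$'' as requiring the certified upper bound of \cref{theorem:learning_retraining} to be at most $\varepsilon$, with $\varepsilon$ measuring $D_\alpha/\alpha$, i.e.\ $\alpha$ is absorbed into $\varepsilon$. This is the only reading under which the corollary is a statement about the theorem, since the theorem gives an upper bound and not a lower bound.

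\textbf{Asymmetric case.} Substitute $n_{\mathrm{forget}} = c\,n_{\mathrm{priv}}$ into the theorem. The sensitivity prefactor becomes
\[
\frac{2M^2\eta^2 c^2}{\sigma^2}\Big(\frac{n_{\mathrm{priv}}}{n_{\mathrm{pub}}+n_{\mathrm{priv}}}\Big)^2 ,
\]
so the certified bound equals this quantity times $S_T$. Requiring it to be $\le\varepsilon$ and multiplying both sides by $\sigma^2/\varepsilon>0$ gives exactly the stated inequality for $\sigma^2_{\mathrm{asym}}$.

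\textbf{Symmetric case.} Set $n_{\mathrm{pub}}=0$ in the same bound. This recovers the setting of \citet[Theorem 3.3]{chien_langevin_2024}, where the whole dataset is private. The ratio $n_{\mathrm{priv}}/(n_{\mathrm{priv}}+n_{\mathrm{pub}})$ is then $1$, and the same rearrangement gives the stated inequality for $\sigma^2_{\mathrm{sym}}$. The resulting bound depends only on $c$ and not on $n_{\mathrm{priv}}$, which is the claimed ``independent of dataset size'' phenomenon for mass unlearning.

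\textbf{Main obstacle.} $S_T$ itself depends on $\sigma$ through $h$ and the LSI constants $C_{t',1}$, so the displayed inequalities are implicit rather than closed-form lower bounds. We will state them as necessary conditions that the required $\sigma^2$ must satisfy, i.e.\ fixed-point inequalities, evaluated at that $\sigma$. We will also observe that $h\in(0,1)$, which gives $S_T \le T-1$, and that $S_T$ takes the same functional form in both settings, since $h$ does not involve the data sizes. The comparison between the two settings is therefore exactly the factor $\big(n_{\mathrm{priv}}/(n_{\mathrm{priv}}+n_{\mathrm{pub}})\big)^2$, which is the $O(1/n_{\mathrm{pub}}^2)$ suppression when $n_{\mathrm{pub}}\gg n_{\mathrm{priv}}$. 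If a cleaner explicit statement is wanted, one could instead lower bound $h$ using the crude constant $C_{t',1}\le 6e^{4\tau/(\eta\sigma^2)}(4\tau^2+\eta\sigma^2)$, but we expect to keep the implicit form to match the statement as written.
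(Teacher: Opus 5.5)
Your proposal is correct and follows the paper's own argument: substitute $n_{\mathrm{forget}}=c\,n_{\mathrm{priv}}$ into the bound of \cref{theorem:learning_retraining}, require the certified bound to be at most $\varepsilon$ (the paper explicitly calls this a sufficient condition), rearrange for $\sigma^2$, and take $n_{\mathrm{pub}}=0$ for the symmetric case. One discrepancy is worth noting: the paper bounds $D_\alpha$ itself, so its derivation carries an extra factor $\alpha$ that the stated corollary drops. Your convention that $\varepsilon$ bounds $D_\alpha/\alpha$ reconciles this exactly; alternatively, $\alpha>1$ makes the $\alpha$-free inequality a weaker consequence of the paper's version.
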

\begin{proofsketch}
    The proof follows immediately from \cref{theorem:learning_retraining}, by considering that $\frac{\nf}{\npub + \np} = c \frac{\np}{\npub + \np}$.
\end{proofsketch}
\textbf{Remark.} A limitation of the standard symmetric setting ($n_{\mathrm{pub}}=0$) is that the required noise variance $\sigma^2_{\mathrm{sym}}$ has \textbf{no dependency on the total number of training points} when unlearning a constant fraction $c$. This implies that increasing the dataset size does not mitigate the unlearning cost, rendering the bound vacuous for large cohorts where the required high noise magnitude destroys utility.

In contrast, our framework introduces an explicit dependency on the dataset composition. The required noise scales with the ratio $\frac{n_{\mathrm{priv}}}{n_{\mathrm{priv}} + n_{\mathrm{pub}}}$, meaning the sensitivity of the distribution effectively decays as public data is added. This allows the learning distribution to remain arbitrarily close to the retraining distribution—even for large forget sets—provided sufficient public data is available. We illustrate this advantage in \cref{fig:sigmas_strgly_cvx}.

\begin{figure}
    \centering
    \includegraphics[width=\linewidth]{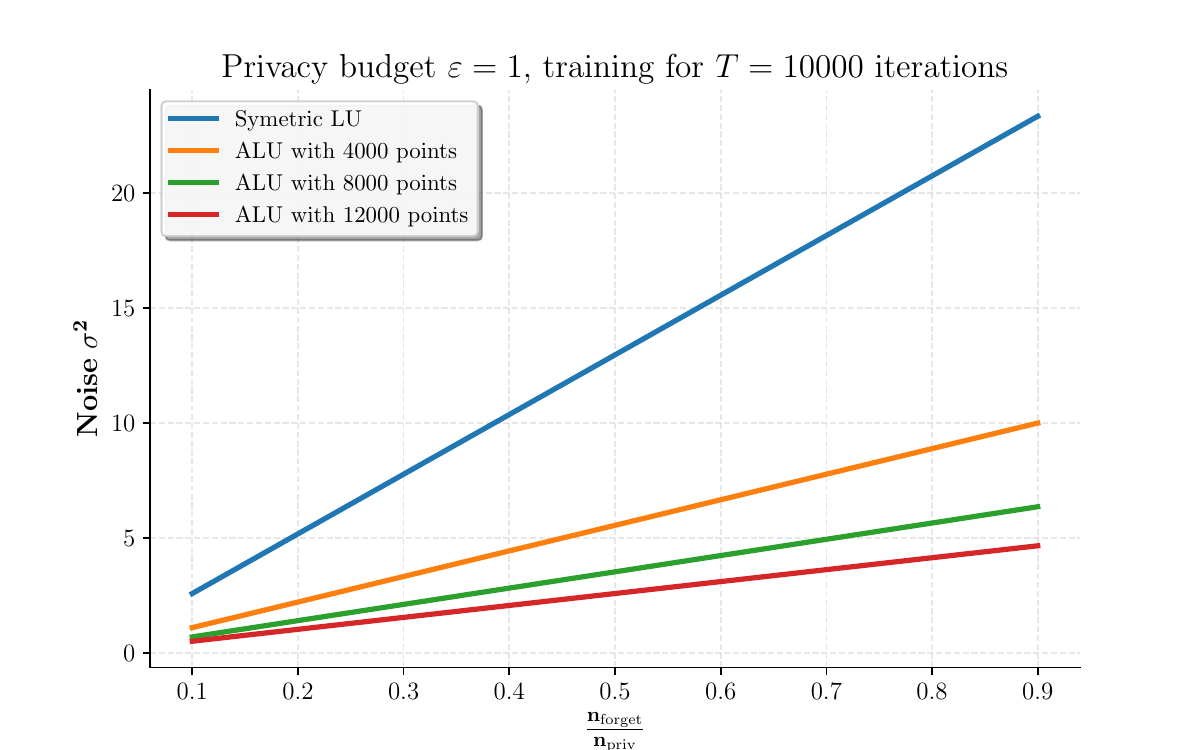}
    \caption{\textbf{Required noise magnitude $\sigma$ to bound $D_{\alpha}(\pi_{R}^{T}\| \pi_{L}^{T})$ as a function of the forget fraction $c = n_{\mathrm{forget}}/n_{\mathrm{priv}}$.} Values are computed assuming a strongly convex loss \citep{chien_langevin_2024}, for a \textit{binary} classification task. Details are deferred to \cref{app:toy_plot_details}.} 
    \label{fig:sigmas_strgly_cvx}
\end{figure}
\begin{theorem}[Convergence guarantee of Langevin unlearning  {\citep[Theorem 3.2]{chien_langevin_2024}}]
\label{thm:unlearning_init_bound}
    Suppose that the loss is $L$-smooth and $M$-Lipschitz, and that the learning distribution of weights at time $T$ satisfies a $C$ log-Sobolev inequality. Then, the Rényi divergence between $\pi_U^K$ (the unlearning distribution after $K$ iterations) and the retraining distribution after $T+K$ iterations is upper bounded by
   \begin{align*}
    D_{\alpha}(\pi_R^{T+K}\|\pi_U^{K})
    &\leq D_{\alpha} (\pi_L^T\|\pi_R^T)\\
     &\times\min\left(g_{\alpha,\eta,L}(k,\sigma), \exp\left(-\frac{2K\sigma^2\eta}{\alpha \Tilde{C}}\right)\right),
\end{align*}
    where $g_{\alpha,\eta,L}(k,\sigma)=\prod_{k=1}^{K}\left(1 + \frac{2\eta\sigma^2}{(1+\eta L)^2C_{U,k}}\right)^{-1/\alpha}$, and $0<C_k\!\leq\!(1+\eta L)^{2K}C\!+\!2\eta\sigma^2\!\frac{(1+\eta L)^{2K}\!-\!1}{(1+\eta L)^2\!-\!1}$, and $\Tilde{C}\!\leq\!6\left(4\tau^2\!+\!2\eta\sigma^2\right)\!\exp\left(\frac{4\tau^2}{2\eta\sigma^2}\right)$.
    
    Moreover, if the loss function is $m$-strongly convex and the initial log-Sobolev constant satisfies $C > \frac{\sigma^2}{m}$, we get the following exponential decay of the Rényi divergence with respect to the unlearning iteration:
    \begin{align*}
        D_{\alpha}(\pi_R^{T+K}\|\pi_U^{K}) &\leq D_{\alpha} (\pi_L^T\|\pi_R^T)\exp\left(-\frac{2K\sigma^2\eta}{C\alpha}  \right).
    \end{align*}
\end{theorem}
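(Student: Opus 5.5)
The approach is to treat the unlearning phase as two coupled runs of the same Markov kernel. Let $\mathcal{K}$ denote one PNGD step on the retain set $D\setminus D_{\mathrm{forget}}$, with the Gaussian noise included. The retraining distribution after $T+K$ steps is $\pi_R^{T+K} = \pi_R^T \mathcal{K}^K$. The unlearning distribution is $\pi_U^K = \pi_L^T \mathcal{K}^K$. The claim is therefore a contraction statement: the same noisy kernel, applied $K$ times to two different initial laws, shrinks their Rényi divergence by the stated factor, and the factor multiplies the initial gap $D_\alpha(\pi_L^T\|\pi_R^T)$. (The ordering of the two arguments is handled exactly as in \citet{chien_langevin_2024}; I keep their convention throughout.)

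\textbf{Step 1: decompose each step.} I split one step into three maps: the gradient map $\phi(\theta)=\theta-\eta\nabla\mathcal{L}(\theta)$, Gaussian convolution, and the projection $\Pi_\Theta$. Three facts control these maps.
\begin{itemize}
\item $\phi$ is a bijection, and it is $(1+\eta L)$-Lipschitz by $L$-smoothness. A bijective pushforward leaves Rényi divergence unchanged, and a Lipschitz map multiplies the LSI constant by at most $(1+\eta L)^2$.
\item Gaussian convolution with variance $2\eta\sigma^2$ adds $2\eta\sigma^2$ to the LSI constant.
\item Projection is a deterministic post-processing, so by the data-processing inequality it never increases divergence. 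It also keeps iterates in a set of radius $R$.
\end{itemize}
Iterating the first two facts gives the recursion $C_k \le (1+\eta L)^2 C_{k-1} + 2\eta\sigma^2$. Unrolling it yields the stated geometric-sum bound on $C_{U,k}$, starting from the assumed constant $C$ at time $T$.

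\textbf{Step 2: contraction under the Gaussian channel.} This is the core step. I would use the Vempala--Wibisono style argument: interpolate the Gaussian convolution by a heat flow (equivalently, a Langevin SDE run for time $\eta$), differentiate the Rényi divergence along the flow, and bound the derivative by $-\frac{(\text{noise rate})}{\alpha C_t}$ times a Rényi-information quantity. Applying the LSI of the reference measure along the flow (Rényi-LSI) closes the differential inequality. Integrating over one step gives the factor $\bigl(1+\frac{2\eta\sigma^2}{(1+\eta L)^2 C_{U,k}}\bigr)^{-1/\alpha}$, because the relevant LSI constant is the one obtained after the gradient map. Multiplying over $k=1,\dots,K$ gives $g_{\alpha,\eta,L}$.

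\textbf{Step 3: the uniform alternative and the strongly convex case.} For the second term in the minimum, I replace the step-dependent constants by the uniform bound $\tilde C$. The bounded support (radius $\tau=R+\eta M$) yields this bound via the Holley--Stroock perturbation argument applied to a Gaussian convolved with a measure of bounded support. With a uniform constant, $(1+x)^{-1/\alpha}\le e^{-x/\alpha}$ turns the product into the exponential. In the $m$-strongly convex case, $\phi$ becomes a contraction for small $\eta$, so $(1+\eta L)$ is replaced by a factor at most $1$. The recursion then keeps $C_k\le C$ whenever $C>\sigma^2/m$, and the per-step factor is $\ge$ its $C$-version. This gives $\exp(-2K\sigma^2\eta/(C\alpha))$.

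\textbf{Main obstacle.} I expect the hardest part to be Step 2. The order-$\alpha$ differential inequality has to be handled carefully, including keeping track of which measure must satisfy the LSI along the flow. Justifying absolute continuity and regularity through the projection also requires care. I would first try to import the one-step lemma of \citet{chien_langevin_2024} directly rather than rederive it.
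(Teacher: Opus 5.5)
Your Steps 1 and 2 follow the paper's argument: data processing through the gradient map and the projection, heat-flow interpolation of the Gaussian step with $\partial_t D_\alpha=-\alpha\sigma^2G_\alpha/F_\alpha$, the LSI bound $G_\alpha/F_\alpha\ge 2D_\alpha/(\alpha^2C)$ with the time-varying constant $(1+\eta L)^2C_{U,k}+2t\sigma^2$, Gr\"onwall, and the recursion for $C_{U,k}$. They correctly give $g_{\alpha,\eta,L}$.

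The gap is in Step 3. You turn the product into an exponential using $(1+x)^{-1/\alpha}\le e^{-x/\alpha}$, but this is false for every $x>0$. Since $\log(1+x)<x$, in fact $(1+x)^{-1/\alpha}>e^{-x/\alpha}$.

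For the $\tilde C$ entry of the minimum, this breaks the route itself, not just its justification. Substituting $\tilde C$ for $C_{U,k}$ gives the per-step factor $\bigl(1+\frac{2\eta\sigma^2}{(1+\eta L)^2\tilde C}\bigr)^{-1/\alpha}$. This is strictly larger than $\exp\bigl(-\frac{2\eta\sigma^2}{\alpha\tilde C}\bigr)$, both because of the logarithm and because of the extra $(1+\eta L)^2$. So $\exp(-2K\sigma^2\eta/(\alpha\tilde C))$ can never be extracted from $g_{\alpha,\eta,L}$. Your strongly convex conclusion is reached the same way, so it is also unjustified as written.

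The paper gets both exponentials without going through the product. In the Gr\"onwall step it lower-bounds $G_\alpha/F_\alpha$ with one $t$-independent LSI constant for the reference measure $\pi_U^{K,1,t}$ on all of $[0,\eta]$. The integrand is then the constant $-2\sigma^2/(\alpha\tilde C)$ (resp.\ $-2\sigma^2/(\alpha C)$), and each step contributes the factor $\exp(-2\eta\sigma^2/(\alpha\tilde C))$.

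In the strongly convex case, this uniformity is exactly what the hypotheses deliver. Under the step-size condition that makes the recursion contractive (nonvacuous only if $C>\sigma^2/m$), $(1-\eta m)^2C+2t\sigma^2\le(1-\eta m)^2C+2\eta\sigma^2\le C$ for all $t\le\eta$. Equivalently, you can repair your product there via $\log\bigl(1+\frac{2\eta\sigma^2}{(1-\eta m)^2C}\bigr)=\int_0^\eta\frac{2\sigma^2\,ds}{(1-\eta m)^2C+2s\sigma^2}\ge\frac{2\eta\sigma^2}{C}$. The repair uses the slack created by the contraction, not an elementary inequality.

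For the $\tilde C$ term, you need an LSI bound valid for every intermediate measure of the flow, not just for the iterates $\pi_U^k$. This is the delicate input:
\begin{itemize}
\item the tracking lemma only gives $(1+\eta L)^2\tilde C+2t\sigma^2$ for $\pi_U^{K,1,t}$, which leads back to the product form;
\item the compact-support bound degrades as the variance $2t\sigma^2\to0$;
\item the paper takes the uniform constant from the compact-support bound without addressing intermediate times.
\end{itemize}

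Smaller points:
\begin{itemize}
\item $\tilde C$ is the mixture-LSI bound of Chen et al.\ (Corollary~1). A plain Holley--Stroock comparison with the Gaussian does not give it, since the log-density ratio of $\mu*\mathcal N(0,sI_d)$ to $\mathcal N(0,sI_d)$ is unbounded.
\item Bijectivity of $\phi$ (guaranteed only when $\eta L<1$) is unnecessary; the data-processing inequality suffices.
\item Your coupling, like the paper's proof, yields $D_\alpha(\pi_R^T\|\pi_L^T)$ on the right, with the LSI needed along the $\pi_U$ chain. State that ordering explicitly rather than deferring to a convention.
\end{itemize}
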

This theorem establishes the convergence guarantee for Langevin unlearning by showing that the Rényi divergence between the unlearning and retraining distributions decreases exponentially with unlearning iterations
$K$, with the convergence rate controlled by the initial divergence $D_{\alpha}(\pi_R^{T+K}\|\pi_U^K)$. When combined with \cref{theorem:learning_retraining}, this reveals the mechanism by which public data improves unlearning: the quadratic reduction in initial divergence from public data injection translates directly into tighter convergence bounds.

\paragraph{Remark (On the tractability of Log-Sobolev constants)}
While \cref{theorem:learning_retraining,thm:unlearning_init_bound} provide explicit convergence bounds, we acknowledge that the Log-Sobolev constants (LSI) ($C_t, \tilde{C}$) are difficult to estimate for deep neural networks and may scale poorly with dimension. However, our core contribution—the role of public data—operates independently of these absolute constants. Specifically, the quadratic reduction in gradient sensitivity (\cref{theorem:learning_retraining}) acts as a multiplicative factor that improves the divergence bound relative to the baseline, regardless of the specific value of the LSI constant. Thus, even if the absolute convergence rate is loose due to a large $C$, the relative advantage of injecting public data remains theoretically guaranteed.

\paragraph{Computational advantage}Assuming the loss is $m$-strongly convex, the computational advantage of ALU over retraining is determined by the training budget $T$, the privacy requirement $\varepsilon$ and the fraction of private data to unlearn $\nf/\np$. When $T$ is small, unlearning is more expensive than retraining in traditional settings ($\npub=0$). ALU overcomes this via the public data ratio, remaining efficient ($K < T$) only if $\npub/\np \geq \mathcal{C} \cdot T^{\beta} \cdot \varepsilon^{-1/2} \cdot (\nf/\np) - 1$, where $\beta = \sigma^2\eta/C\alpha$. Here, public data acts as a buffer that facilitates distribution alignment without consuming the privacy budget. Conversely, in the large $T$ regime, the cost of unlearning scales as $\mathcal{O}(\log(n_{\text{total}}))$ \citep{chien_langevin_2024}. In this state of convergence, the addition of any data reduces the initial divergence, ensuring unlearning is strictly more efficient than retraining. See \cref{app:retrain_cost} and \cref{prop:sufficient_condition_retraining} for a derivation of a sufficient condition characterizing the computational advantage of unlearning over retraining.

\section{Performance Without Noise: The Role of Distribution Alignment}{\label{subseq:pubpriv}}
LU faces a fundamental dilemma: increasing noise improves unlearning guarantees but degrades model performance. Our asymmetric approach breaks this trade-off by leveraging public data abundance rather than noise amplification. However, the effectiveness of this strategy depends on the relationship between public and private data distributions.

We now analyze when public data injection preserves performance, and when it introduces new challenges. Our results reveal that performance preservation is not automatic -- it depends on the distributional alignment between public and private data. When these distributions are similar, public data acts as a performance stabilizer, allowing effective unlearning without quality degradation. Conversely, when distributions differ significantly, performance impacts emerge, though they remain more controlled than noise-based approaches.

We evaluate post-unlearning performance on the private data distribution \textit{only}, reflecting realistic deployment scenarios where the primary concern is maintaining model quality on the sensitive data that remains after unlearning. 
Performance analysis on the full mixture of public and private distributions is provided in  \cref{app:unlearning_da_risk} for completeness.

\begin{restatable}[Generalization Bound under Distribution Mismatch]{theorem}{unlearningrisk}\label{prop:unlearning_da_risk}
    Assuming the data generating distributions share the same support, that the weight space $\Theta$ is compact and that the loss is $M$-Lipschitz wrt $\theta$, we have the following upper bound on the generalization error on the \textit{private data} after performing $K$ iterations of unlearning, and initializing a weight $\theta_0$ from $\pi_L^{T}$:
\begin{equation*}
\begin{aligned}
    \mathbb{E}_{\pi_U^K}&[\mathcal{L}_{P_{\text{priv}}}] \leq \underbrace{e^{\frac{n_{\text{pub}}}{n_{\text{pub}}+n_{\text{retain}}}D_{\infty}(P_{\text{priv}}\|P_{\text{pub}})}}_{\text{distribution mismatch penalty}} \mathbb{E}_{\pi_R^{T+K}}[\mathcal{L}_{P_{\text{train}}}] \\
    &+ M \cdot \text{diam}(\Theta) \cdot \underbrace{\sqrt{\tfrac{1}{2}D_{\alpha}(\pi_R^{T+K}\|\pi_U^K)}}_{\text{unlearning approximation error}}
\end{aligned}
\end{equation*}
    where $\mathcal{L}_P(\theta) \coloneqq \mathbb{E}_{x \sim P}[\ell(\theta, x)]$ denotes the risk over distribution $P$; $P_{\mathrm{train}}$ is the mixture of $P_{\mathrm{pub}}$ and $P_{\mathrm{priv}}$ used during training; and $D_{\infty}(P\|Q) = \log\left(\esssup \frac{p}{q}\right)$ is the infinite Rényi divergence representing the worst-case regret \citep{erven_renyi_2014}.
\end{restatable}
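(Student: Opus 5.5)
We split the target quantity into a change of weight distribution (unlearning versus retraining) and a change of data distribution (private versus training mixture), and bound each separately. Write
\[
\mathbb{E}_{\pi_U^K}[\mathcal{L}_{P_{\mathrm{priv}}}] = \mathbb{E}_{\pi_R^{T+K}}[\mathcal{L}_{P_{\mathrm{priv}}}] + \Big(\mathbb{E}_{\pi_U^K}[\mathcal{L}_{P_{\mathrm{priv}}}] - \mathbb{E}_{\pi_R^{T+K}}[\mathcal{L}_{P_{\mathrm{priv}}}]\Big).
\]
The first term will produce the distribution mismatch penalty and the second the unlearning approximation error.

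For the second term, the function $\theta\mapsto \mathcal{L}_{P_{\mathrm{priv}}}(\theta)$ is $M$-Lipschitz, since it is an average of $M$-Lipschitz losses. Fix a reference point $\theta^\star\in\Theta$. On $\Theta$ the function $f=\mathcal{L}_{P_{\mathrm{priv}}}-\mathcal{L}_{P_{\mathrm{priv}}}(\theta^\star)$ then has oscillation at most $M\,\mathrm{diam}(\Theta)$. Hence
\[
|\mathbb{E}_{\pi_U^K} f-\mathbb{E}_{\pi_R^{T+K}} f|\le M\,\mathrm{diam}(\Theta)\,\mathrm{TV}(\pi_R^{T+K},\pi_U^K).
\]
Pinsker's inequality gives $\mathrm{TV}\le\sqrt{\tfrac12 D_{KL}}$. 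Rényi divergence is nondecreasing in its order, so $D_{KL}\le D_\alpha$ for $\alpha\ge1$, which is the regime of interest. This yields exactly $M\,\mathrm{diam}(\Theta)\sqrt{\tfrac12 D_\alpha(\pi_R^{T+K}\|\pi_U^K)}$. Compactness of $\Theta$ is what makes the diameter finite.

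For the first term we use that $\ell\ge 0$, which is implicit in the multiplicative form of the bound. Pointwise in $\theta$, we compare the risk under $P_{\mathrm{priv}}$ with the risk under the training mixture
\[
P_{\mathrm{train}}=\lambda P_{\mathrm{pub}}+(1-\lambda)P_{\mathrm{priv}},\qquad \lambda=\tfrac{n_{\mathrm{pub}}}{n_{\mathrm{pub}}+n_{\mathrm{retain}}},
\]
which is the mixture seen by retraining after deletion. Since the supports coincide, the density ratio $p_{\mathrm{priv}}/p_{\mathrm{train}}$ is well defined. We need the bound
\[
\frac{p_{\mathrm{priv}}}{\lambda p_{\mathrm{pub}}+(1-\lambda)p_{\mathrm{priv}}}\le e^{\lambda D_\infty(P_{\mathrm{priv}}\|P_{\mathrm{pub}})}.
\]
Set $r=p_{\mathrm{priv}}/p_{\mathrm{pub}}\le e^{D_\infty}$. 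The ratio equals $1/(\lambda/r+1-\lambda)$. By convexity of the exponential (weighted AM--GM), $\lambda r^{-1}+(1-\lambda)\cdot 1\ge r^{-\lambda}$. Therefore the ratio is at most $r^{\lambda}\le e^{\lambda D_\infty}$. Integrating the nonnegative loss against $p_{\mathrm{priv}}\le e^{\lambda D_\infty}p_{\mathrm{train}}$ gives $\mathcal{L}_{P_{\mathrm{priv}}}(\theta)\le e^{\lambda D_\infty}\mathcal{L}_{P_{\mathrm{train}}}(\theta)$ for each $\theta$. Taking expectation under $\pi_R^{T+K}$ produces the penalty term, and adding the two pieces gives the theorem.

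The main obstacle is this mixture density-ratio step. The naive bound gives $e^{D_\infty}$, and getting the exponent scaled by the public fraction $\lambda$ requires the AM--GM/convexity trick above. A secondary point is that nonnegativity of $\ell$ is needed for the multiplicative comparison. If $\ell$ is not assumed nonnegative, I would shift it by its infimum on the compact set $\Theta$ and absorb the constant.
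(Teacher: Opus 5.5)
Your proof is correct, and it differs from the paper's in an ordering that actually matters. Write $\lambda=n_{\mathrm{pub}}/(n_{\mathrm{pub}}+n_{\mathrm{retain}})$. The paper first applies the pointwise bound $\mathcal{L}_{P_{\mathrm{priv}}}\le e^{\lambda D_\infty(P_{\mathrm{priv}}\|P_{\mathrm{pub}})}\mathcal{L}_{P_{\mathrm{train}}}$ under $\pi_U^K$. Only then does it pass from $\pi_U^K$ to $\pi_R^{T+K}$, using its mixture-risk bound (\cref{prop:unlearning_risk}). Done literally, this leaves the factor $e^{\lambda D_\infty}\ge 1$ multiplying the approximation-error term as well, and the paper's final display silently drops that factor. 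You instead change the weight distribution first, working on $\mathcal{L}_{P_{\mathrm{priv}}}$ itself, which is $M$-Lipschitz just like $\mathcal{L}_{P_{\mathrm{train}}}$. You then change the data distribution under $\pi_R^{T+K}$. This order gives exactly the stated inequality.

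Your other deviations are cosmetic. The oscillation estimate $|\mathbb{E}_P f-\mathbb{E}_Q f|\le M\,\mathrm{diam}(\Theta)\,\mathrm{TV}(P,Q)$ replaces the paper's route through Kantorovich--Rubinstein duality and $W_1\le\mathrm{diam}(\Theta)\,\mathrm{TV}$, and gives the same constant. Your weighted AM--GM step is precisely the pointwise proof of the convexity of $D_\infty$ in its second argument, which the paper cites from van Erven and Harremo\"es (using $D_\infty(P_{\mathrm{priv}}\|P_{\mathrm{priv}})=0$).

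You correctly flag that $\alpha\ge 1$ and $\ell\ge 0$ are needed. However, your fallback of shifting a loss bounded below by $c<0$ does not recover the statement: it adds $(e^{\lambda D_\infty}-1)|c|$ to the right-hand side. In fact a constant negative loss violates the stated bound, since then $\pi_U^K=\pi_R^{T+K}$ and the right-hand side is $-e^{\lambda D_\infty}<-1$. So nonnegativity is a genuine hypothesis rather than a normalization.
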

\begin{proofsketch}
    The proof uses the Kantorovitch-Rubinstein duality (\cref{thm:kr_duality}) to bound the performance gap by the dual of the Wasserstein distance between $\pi_U^K$ and $\pi_L^{T+K}$, then relates this to Rényi divergence via standard inequalities leveraging the compactness of the weight space $\Theta$. For private data evaluation, importance weighting introduces a mismatch penalty controlled by the worst case regret, $D_\infty(P_\mathrm{priv}\|P_{\mathrm{pub})}$, weighted by the public data fraction. See \cref{app:unlearning_da_risk} for full proof details.
\end{proofsketch}

When $n_{\mathrm{pub}} \rightarrow \infty$:

\begin{enumerate}
    \item \textbf{Aligned distributions} ($D_{\infty}(P_{\mathrm{priv}}\|P_{\mathrm{pub}}) \approx 0$): The distribution mismatch penalty vanishes, and the unlearned model's performance on unseen private data is guaranteed to be at least as good as the retrained model's performance on the training mixture. This represents the ideal scenario where public data injection preserves performance.
    
    \item \textbf{Misaligned distributions} ($D_{\infty}(P_{\mathrm{priv}}\|P_{\mathrm{pub}}) \gg 0$): The exponential penalty term dominates, causing the upper bound to become vacuous. While this confirms that performance degradation will occur, the bound's looseness prevents us from quantifying the actual extent of this degradation. The true performance impact may be better than this worst-case guarantee suggests.
\end{enumerate}

\section{Experiments}
Our theoretical analysis provides upper bounds on the Rényi divergence $D_{\alpha}(\pi_R^{T+K}\|\pi_U^K)$ that governs unlearning performance. However, these bounds involve iteration-dependent log-Sobolev constants that are difficult to estimate in practice, making it unclear how tight our theoretical guarantees actually are. To gain empirical insight into the behavior of this divergence, we estimate its value using samples from the weight distributions. To our knowledge, this is the first attempt to evaluate unlearning performance through direct estimation of the Rényi divergence between the parameter distributions. Building on \citet{birrell_variational_2021,birrell_function-space_2023}, we leverage the variational representation of the Rényi divergence for numerical estimation.
\begin{theorem}{(Convex conjugate variational approximation of the Rényi divergence \citep{birrell_function-space_2023})}
\label{thm:variational}
Let P,Q two probability distributions supported on $\Omega$, such that $P \ll Q$, and let $\mathcal{M}_b$ be the space of bounded measurable functions on $\Omega$. Then, $\forall \alpha \in (0,+\infty) \setminus\{1\},$
    \begin{align*}\label{eq:cc_renyi}
        \frac{D_{\alpha}(P\|Q)}{\alpha} &= \sup_{g \in \mathcal{M}_b(\Omega),g<0}\int gdQ \\ &\quad+ \frac{1}{\alpha-1}\int|g|^{\frac{\alpha-1}{\alpha}}dP + \alpha^{-1}\left(\log \alpha +1\right).
    \end{align*}
\end{theorem}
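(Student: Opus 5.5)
The plan is to derive the identity from the Donsker--Varadhan-type representation of the Rényi divergence as a supremum over test functions, and then apply Legendre--Fenchel duality to reduce it to the stated convex-conjugate form. Throughout, write $r = dP/dQ$. By \cref{def:renyi}, $D_\alpha(P\|Q)/\alpha = \frac{1}{\alpha(\alpha-1)}\log \int r^\alpha dQ$.

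The first step is to recall the variational formula of \citet{birrell_variational_2021},
\[
\frac{1}{\alpha}D_\alpha(P\|Q)=\sup_{\phi\in\mathcal{M}_b}\Big\{\tfrac{1}{\alpha-1}\log\!\int e^{(\alpha-1)\phi}dP-\tfrac{1}{\alpha}\log\!\int e^{\alpha\phi}dQ\Big\}.
\]
I would verify it directly. For the upper bound, apply Hölder's inequality to $\int e^{(\alpha-1)\phi} r\,dQ$, split as $(e^{\alpha\phi})^{(\alpha-1)/\alpha}\cdot r$ with conjugate exponents $\alpha/(\alpha-1)$ and $\alpha$. The cases $\alpha>1$ and $\alpha<1$ are handled separately, using reverse Hölder when $\alpha<1$. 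Equality holds at $\phi=\log r$, which is truncated to make it bounded and then passed to the limit by monotone convergence.

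The second step converts this log-ratio form into the linear form of the statement. Both log terms are removed using $\log x=\inf_{c>0}\{cx-1-\log c\}$, equivalently $-\log y = \sup_{c}\{\dots\}$. Concretely, the scaling symmetry $\phi\mapsto\phi+\text{const}$ lets me rewrite the supremum as a joint supremum over $\phi$ and a scalar shift. Optimizing the scalar then produces the constant $\alpha^{-1}(\log\alpha+1)$. Finally I reparametrize with $g=-e^{\alpha\phi}/\alpha<0$, so that $e^{(\alpha-1)\phi}=(\alpha|g|)^{(\alpha-1)/\alpha}$. The constant factors $\alpha^{(\alpha-1)/\alpha}$ are absorbed by choosing the scalar appropriately.

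As a check, I would compute the pointwise optimizer by differentiating $\int g\,dQ+\frac{1}{\alpha-1}\int|g|^{(\alpha-1)/\alpha}r\,dQ$ in $g$. The resulting condition $1=\frac{1}{\alpha}|g|^{-1/\alpha}r$ gives $|g|=(r/\alpha)^\alpha$. Substituting this back should reproduce $\frac{1}{\alpha(\alpha-1)}\log\int r^\alpha dQ$ once the optimal normalization of $g$ is included. This step fixes the additive constant.

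I expect the main obstacle to be the bookkeeping of constants and signs across the two regimes $\alpha\in(0,1)$ and $\alpha>1$. In particular, for $\alpha<1$ the exponent $(\alpha-1)/\alpha$ is negative and the coefficient $1/(\alpha-1)$ is negative, so concavity and convexity flip. The justification of the supremum over bounded negative $g$ also needs care when $r$ is unbounded or vanishes, which requires truncation. Given that this is a known result, I would ultimately cite \citet{birrell_function-space_2023} for the measure-theoretic details and present only the derivation above.
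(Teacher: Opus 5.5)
Your Step 1 is fine, but Step 2 has a genuine gap. You claim that \emph{both} logarithms in the Donsker--Varadhan--Rényi formula can be removed; only the $Q$-term can.

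For the $Q$-term, $-\frac{1}{\alpha}\log\int e^{\alpha\phi}\,dQ=\sup_{c>0}\frac{1}{\alpha}\bigl(1+\log c-c\int e^{\alpha\phi}\,dQ\bigr)$ has the right sign to merge with the outer supremum. Substituting $\psi=\phi+\frac{1}{\alpha}\log c$ then eliminates $c$, because the $P$-term shifts by exactly $-\frac{1}{\alpha}\log c$. That uses up the only symmetry available, the shift invariance in $\phi$. A second scalar introduced to linearize $\frac{1}{\alpha-1}\log\int e^{(\alpha-1)\psi}\,dP$ has nothing to be absorbed into. For $\alpha>1$ that term is concave, so linearizing it would even produce an infimum inside the supremum.

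Carrying out your reparametrization $g=-e^{\alpha\psi}/\alpha$ correctly gives
\[
\frac{D_\alpha(P\|Q)}{\alpha}=\sup_{g\in\mathcal{M}_b(\Omega),\,g<0}\Bigl\{\int g\,dQ+\frac{1}{\alpha-1}\log\int|g|^{\frac{\alpha-1}{\alpha}}\,dP\Bigr\}+\frac{\log\alpha+1}{\alpha}.
\]
The factor $\alpha^{(\alpha-1)/\alpha}$ becomes the additive $\frac{\log\alpha}{\alpha}$ precisely through the surviving logarithm. This is the convex-conjugate formula of Birrell et al.\ (2023), which the paper only cites and does not prove. Your Steps 1--2, done this way, are essentially how it is derived there.

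The statement as printed, without that $\log$ on the $P$-integral, is false, and your own consistency check would have shown it. With no logarithm, the objective is an integral of a pointwise function of $g$. Its supremum is therefore $\int\sup_{u>0}\bigl\{-u+\frac{r}{\alpha-1}u^{(\alpha-1)/\alpha}\bigr\}\,dQ=\frac{\alpha^{-\alpha}}{\alpha-1}\int r^{\alpha}\,dQ$, attained (up to truncation) at your $|g|=(r/\alpha)^\alpha$. This value is affine in $\int r^\alpha\,dQ$, not logarithmic, and pointwise optimization has already fixed the scale, so no normalization of $g$ is left to adjust.

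Concretely, take $P=Q$ and $\alpha=2$. The left side is $0$, while the right side is $\frac14+\frac{1+\log 2}{2}>0$, attained at $g\equiv-\frac14$.

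What you can actually prove, by exactly your Steps 1--2 with only the $Q$-logarithm linearized, is the corrected identity above. You should state that version and flag the missing $\log$. The same omission appears in the discriminator objective written in the paper's appendix.
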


This variational representation of Rényi divergence allows us to obtain estimates of $D_{\alpha}(\pi_R^{T+K}\|\pi_U^K)$ using trained models as samples. We emphasize that \textbf{this is not intended as a practical evaluation methodology for machine unlearning}, as it requires training numerous models to obtain sufficient samples for reliable statistical estimation. Standard approaches like membership inference attacks (MIAs) \citep{shokri2017membershipinferenceattacksmachine,carlini2021membership,hayes_inexact_2024} remain more suitable for practical evaluation.
Our goal is purely investigative: to understand how the Rényi divergence behaves empirically and assess whether our theoretical bounds, despite containing intractable constants.

We present our findings in two parts: \cref{subseq:renyi_evals,subseq:da_evals} investigate the behaviour of the upper bounds provided respectively in \cref{thm:unlearning_init_bound} and \cref{prop:unlearning_da_risk}, while \cref{subseq:mias} provides standard membership inference attack and utility evaluations to contextualize our approach within existing unlearning assessment practices.
\subsection{Evaluating the Rényi Divergence}
\label{subseq:renyi_evals}
\textbf{Experimental Setup.} We evaluate our approach on a multi-class image classification task using two domains from the DomainNet dataset \citep{peng2019moment}: Quickdraw (sketches) and Clipart (stylized images), each containing 24 classes. We select these visually distinct domains to investigate how public-private data alignment affects unlearning and utility (\cref{fig:domainnetdata}).

The experimental configuration treats Clipart images as private data (subject to unlearning) and Quickdraw images as public data (permanently retained). For a training set of size $n = n_{\mathrm{pub}} + n_{\mathrm{priv}}$, we train models using cross-entropy loss and PNGD updates. To obtain samples from the weight distributions $\pi_U^K$ and $\pi_R^T$, we train $N$ models in parallel: one set undergoes unlearning (fine-tuning on the retain set after initial training), while another set trains from scratch on the retain set only. This procedure yields $N$ weight samples from each distribution. We approximate the variational Rényi representation (\cref{eq:cc_renyi}) using neural network discriminators to parameterize the function space $\mathcal{M}_b(\Omega)$. This approach follows established practices in divergence estimation \citep{birrell_variational_2021,birrell_function-space_2023,belghazi_mine_2021} (pseudo-code in \cref{subsubseq:renyi_pseudo}).Complete details on discriminator architecture and training procedures are provided in \cref{app:disc_details}.

\textbf{Results.}  \cref{fig:renyi_estimation} presents our Rényi estimation results, demonstrating the effectiveness of public data injection for improving unlearning efficiency. The experiments are conducted using $N=30 000$ models for each distribution and averaged across 5 discriminator trainings with spectral normalization. The PNGD noise scale is $\sigma=0.01$ and $\alpha=2$. The results show that increasing public data volume reduces $D_{\alpha}(\pi_R^{T+K}\|\pi_U^K)$, with the divergence decreasing both as a function of unlearning iterations and public data proportion. To understand the mechanism driving these improvements, we conduct an ablation study examining the initial conditions after a \textit{single} unlearning iteration.  \cref{fig:renyi_initi_estimation} isolates the effect of public data on the starting distributions by measuring $D_{\alpha}(\pi_R^{T+1}\|\pi_U^{1})$
as a function of public data volume. Rather than directly improving the unlearning procedure itself, public data creates more favorable initial conditions by ensuring the learning and retraining weight distributions begin in closer proximity. This mechanistic understanding validates our theoretical framework: public data primarily controls the initial gap between distributions (\cref{theorem:learning_retraining}), which then propagates through the unlearning iterations to produce the final performance gains.

\begin{figure*}[htbp]
    \begin{subfigure}[t]{0.48\textwidth}
        \includegraphics[width=\textwidth]{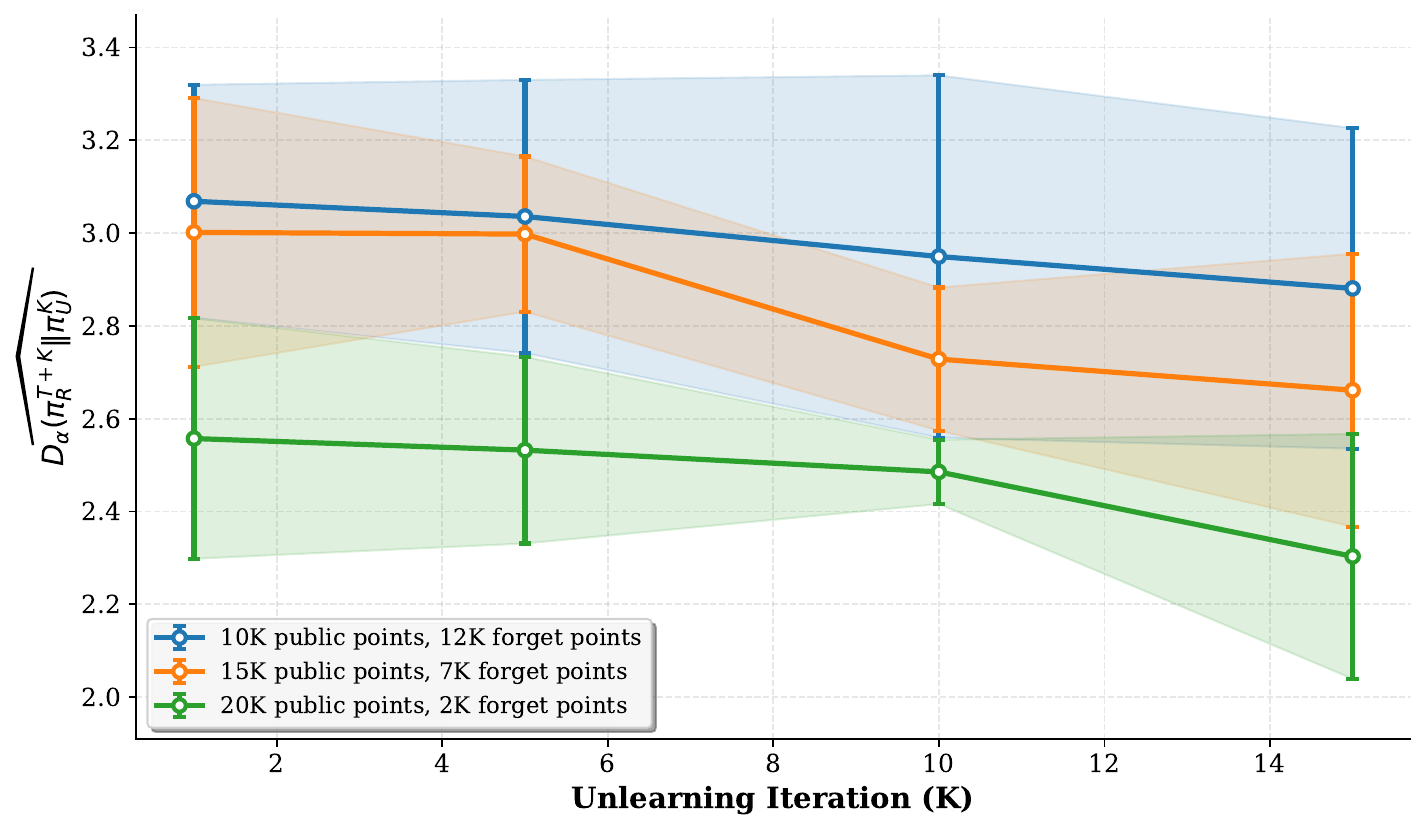}
        \caption{Variational Rényi divergence $D_{\alpha}(\pi_R^{T+K}\|\pi_U^K)$ vs. public data proportion. Higher volumes of public data and additional unlearning iterations both consistently reduce divergence. This demonstrates enhanced unlearning efficacy and highlights how public data facilitates unlearning large fractions of the training set.}
        \label{fig:renyi_estimation}
    \end{subfigure}%
    \hspace{0.04\textwidth}%
    \begin{subfigure}[t]{0.48\textwidth}
        \includegraphics[width=\textwidth]{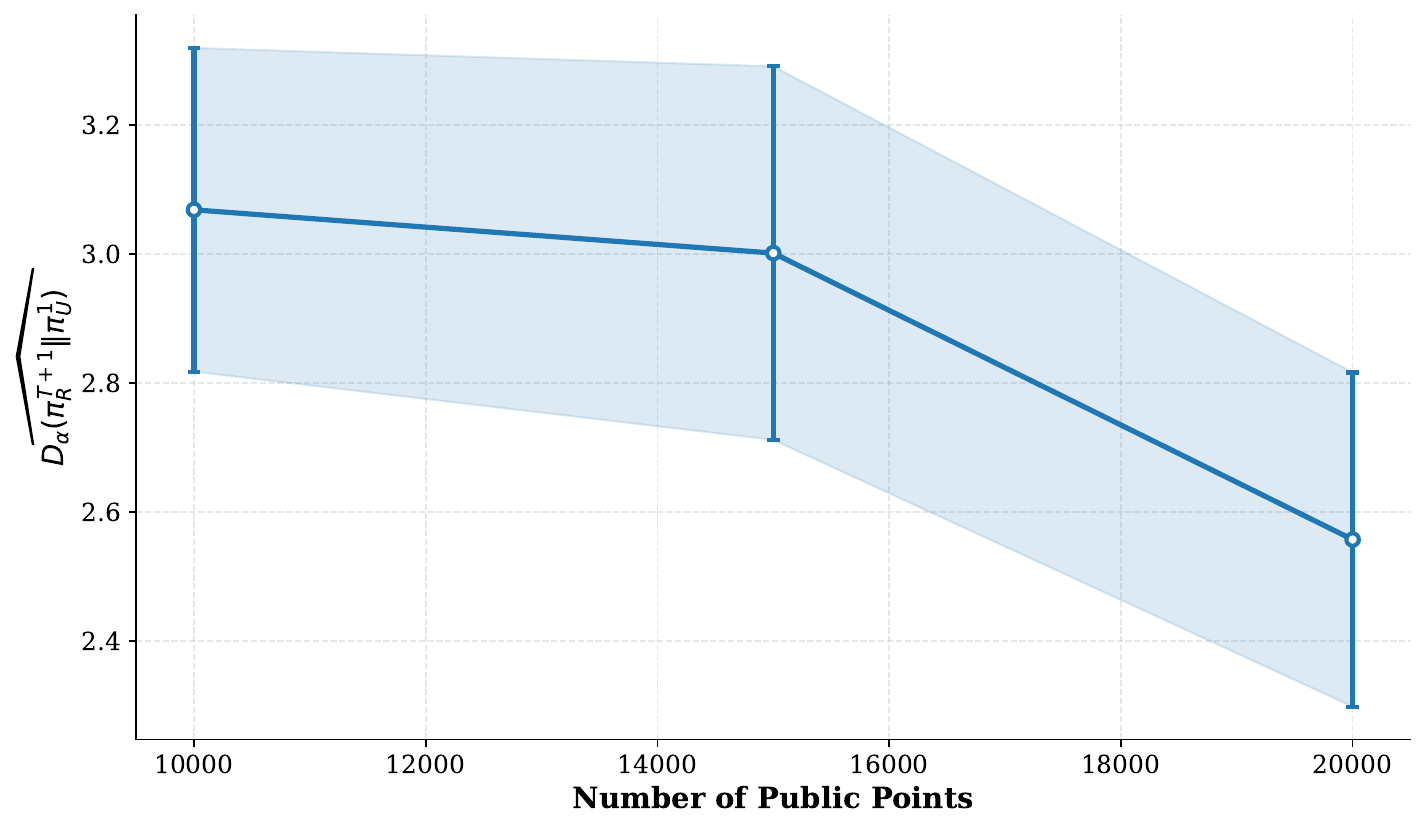}
        \caption{Ablation study: initial distribution alignment ($K=1$). The divergence $D_{\alpha}(\pi_R^{T+1}\|\pi_U^1)$ decreases as the public data volume increases, validating the more favorable initialization provided by public data injection.}
        \label{fig:renyi_initi_estimation}
    \end{subfigure}
    \caption{Rényi divergence estimation across varying public data (Clipart) volumes.}
    \label{fig:combined_renyi}
\end{figure*}
\subsection{Distribution Alignment and the Unlearning-Utility Trade-off}\label{subseq:da_evals}

\cref{prop:unlearning_da_risk} characterizes a trade-off caused by public data injection: as we increase public data volume, the \textit{unlearning approximation error} decreases, yet the \textit{distribution mismatch penalty} simultaneously grows. The balance between these competing terms determines whether public data injection preserves or degrades model performance. To empirically investigate this trade-off, we conduct experiments across two distinct distributional regimes: one where the public and private domains exhibit moderate visual alignment, and another where they are substantially misaligned.

We fix $K=5$ unlearning iterations and evaluate performance using the DomainNet dataset across two domain pairs. The \textbf{aligned regime} pairs Quickdraw (public) and Clipart (private), which despite visual stylistic differences share semantic structure. The \textbf{misaligned regime} pairs Infograph (public) and Real (private), which exhibit greater distributional divergence.
We measure model performance via loss on the private data distribution $P_\mathrm{priv}$ after unlearning, comparing against the retraining baseline on the training mixture. Results are summarized in \cref{tab:unlearning-results}.

\begin{table}[h]
\centering
\caption{Comparing Unlearning vs Retraining Performance Across Distribution Alignments ($K=5$, Private: 20000 points, Forget Set: 10000 points)}
\label{tab:unlearning-results}
\vspace{-0.5em} 
\small
\begin{tabular}{@{}cccccc@{}}
\toprule
\textbf{Public} & \textbf{Private} & \textbf{Public} & \textbf{Unlearn} & \textbf{Retrain} & \textbf{Rel.} \\
\textbf{Domain} & \textbf{Domain} & \textbf{Points} & \textbf{Loss} & \textbf{Loss} & \textbf{Diff (\%)} \\
\midrule
Quickdraw & Clipart & 10000 & 3.102 & 2.976 & 4.23 \\
Quickdraw & Clipart & 30000 & 3.102 & 2.965 & 4.62 \\
Quickdraw & Clipart & 40000 & 3.099 & 2.989 & 3.68 \\
\midrule
Infograph & Real & 10000 & 2.233 & 2.495 & 10.53 \\
Infograph & Real & 30000 & 2.238 & 2.496 & 10.34 \\
Infograph & Real & 40000 & 2.233 & 2.504 & 10.81 \\
\bottomrule
\end{tabular}
\end{table}
The results reveal a contrast between the two regimes. In the \textbf{aligned setting}, the relative performance gap remains modest (3.68--4.62\%) across varying public data volumes, suggesting that the mismatch penalty remains manageable and the approximation error reduction dominates. In contrast, the \textbf{misaligned setting} exhibits a persistent performance gap (10.34--10.81\%), with minimal sensitivity to public data volume. This indicates that when distributional divergence is large, increasing public data fails to overcome the mismatch penalty, rendering the approximation error reduction insufficient to improve generalization.

\subsection{Practical Evaluation of LU in the Asymmetric Setting}\label{subseq:mias}
We now adopt standard evaluation methodology from the unlearning literature \cite{hayes_inexact_2024}, highlighting the benefit of public data injection. We provide an overview here and defer details to Appendix~\ref{subseq:ulira_details}.

\textbf{Evaluation Method.} This evaluation is based on the U-LiRA membership inference attack for unlearning \citep{hayes_inexact_2024, carlini2021membership}. Given a training set, forget set, and specified learning and unlearning algorithms, the adversary’s goal is to infer whether a model’s weights $\theta$ were drawn from the unlearning distribution $\pi_U^K$ or the retraining distribution $\pi_R^{T+K}$. Intuitively, lower attack accuracy indicates that the unlearning and retraining distributions are harder to distinguish, i.e., better unlearning.

In its most basic form, U-LiRA can be formalized via Bayes’ rule under a uniform prior on whether the forget set was included during training. Letting $P(\theta \mid \cdot)$ denote the likelihood of observing model parameters $\theta$ under a given distribution, and $P(\cdot \mid \theta)$ as the posterior probability that $\theta$ was drawn from that distribution, we have
\begin{align*}
    P(\pi_U^K \mid \theta) = \frac{P(\theta \mid \pi_U^K)}{P(\theta | \pi_U^K) + P(\theta \mid \pi_R^{T+K})}.
\end{align*}
By selecting a one-dimensional representation of the models $f:\Theta \rightarrow \mathbb{R}$ and assuming that the induced distributions $f_{\sharp}\pi_U^K$ and $f_{\sharp}\pi_R^{T+K}$ are Gaussian, we can estimate the likelihood terms \mbox{$P(\theta\mid\cdot)$} from a tractable number of model samples.
\begin{figure}
    \centering    \includegraphics[width=0.48\textwidth]{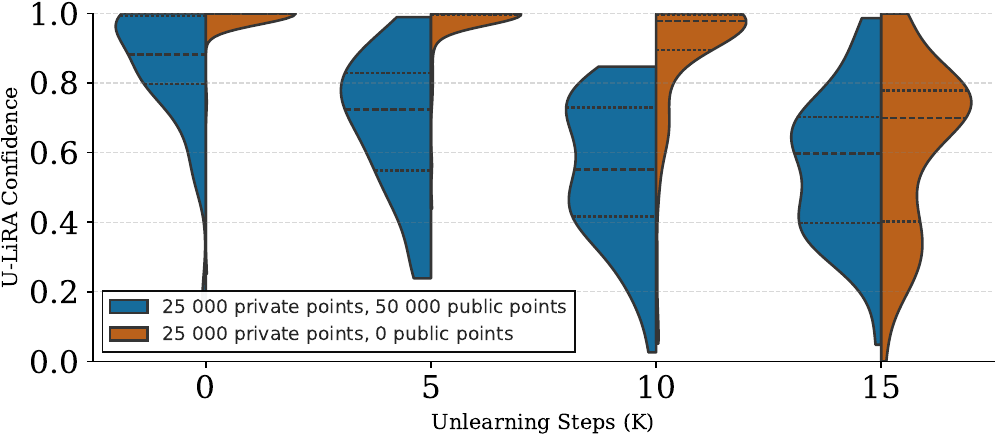}
    \caption{U-LiRA confidence scores after K unlearning iterations as violin plots with quartiles.}
    \label{fig:mia_results}
\end{figure}

\textbf{Experimental Setup.} For the sake of completeness, we focus this next set of experiments on a sentiment analysis task on the IMDB dataset of movie reviews \citep{mass2011learning}. This is a binary classification task, where an LSTM \citep{hochreiter1997long} learns to recognize if a review is either negative or positive. We use the Amazon reviews dataset from \citet{zhang2015character} as the public data source. We use a forget set of $100$ uniformly sampled examples from the IMDB dataset. For both experiments, i.e., with and without public data injection, we generate $N=50$ models to estimate each likelihood density, and report the empirical distribution of probabilities assigned to the right origin distribution by U-LiRA (confidence scores) for $50$ models test ($25$ from $\pi_U^K$, and $25$ from $\pi_R^{T+K}$, where $T=50$ and $K=1\to15$). \cref{fig:mia_results} highlights that without public data injection, U-LiRA is able to identify a large proportion of models confidently and correctly, even after a number of unlearning steps. This observed discriminative power is heavily impacted by public data injection. We can also observe that modes of the confidence scores generally decrease with the number of unlearning steps, highlighting the unlearning effectiveness of LU.

Now that we've observed the effect of public data injection on the unlearning effectiveness of LU, we change our focus towards its impact on model utility. To this end, we report in Table \ref{tab:unlearning-LSTM-results} the average model accuracies over the $75$ models we trained for each model distribution, on a test set of $10,000$ unseen samples from the IMDB dataset. As the Amazon reviews dataset appears to be a good auxiliary public data source for the IMDB review classification problem (close data distributions), we also include an experiment in which a uniformly sampled $40\%$ of its labels are flipped, thus increasing distribution mismatch between public and private sources.

\begin{table}[h]
\centering
\caption{Comparing unlearning against retraining test accuracies with flipped labels (IMDB, 25000 private points, $K=15$ unlearning iterations)}
\label{tab:unlearning-LSTM-results}
\vspace{-0.5em}
\small
\begin{tabular}{@{}cccc@{}}
\toprule
 \textbf{Public} & \textbf{Flipped} & \textbf{Unlearned} & \textbf{Retrained} \\
\textbf{Dataset (Points)} & \textbf{Labels} & \textbf{Acc. (\%)} & \textbf{Acc. (\%)} \\
\midrule
 None (0)                     & 0\%      & 82.59 & 82.54\\
 Amazon Reviews (50 000)           & 0\%      & 81.42 & 82.15\\
 Amazon Reviews (50 000) & 40\% & 80.40 & 80.80\\
\bottomrule
\end{tabular}
\end{table}
From Table \ref{tab:unlearning-LSTM-results}, we can observe that model accuracy does decrease from the injection of public data. However, this drop in accuracy is rather negligible compared to the extent to which public data injection improves the unlearning effectiveness of LU, which is highlighted by Fig. \ref{fig:mia_results}. As expected, the drop in accuracy is proportionately much lower when the quality of auxiliary public data is high ($1.17\%$ for unlearned and $0.39\%$ for retrained) than when it is low ($2.19\%$ for unlearned, an $\approx 1.87$ times increase, and $1.74\%$ for retrained, an $\approx4.46$ times increase).

\section{Future Work}

Our framework opens several compelling avenues for research into the interplay of public data and unlearning. Algorithmic extensions include studying ALU in pre-training/fine-tuning regimes and developing adaptive algorithms that leverage domain adaptation to optimally balance distribution alignment with unlearning efficiency. We also envision a constrained optimization approach where the unlearning objective is regularized to keep the weight distribution close to a public-only reference.

Theoretically, addressing the intractable log-Sobolev constants in current Langevin analysis remains a priority. Transitioning to alternative isoperimetric assumptions \citep{chewi_analysis_2021, mousavi-hosseini_towards_2023, altschuler_shifted_2024} or adopting weaker divergence measures could yield more tractable bounds than those provided by Rényi divergence. Finally, extending analysis from weight distributions to output distributions would facilitate black-box evaluation.

Finally, as highlighted by \citet{tramer_position_2024}, large-scale datasets scraped from the web are not inherently privacy-neutral and may contain sensitive or copyrighted information. Applications should ensure that public datasets used for initialization do not introduce secondary leakage, ensuring that the efficiency gains of ALU do not come at the cost of unintended exposure from the public source.

\section{Conclusion}
We have studied Langevin unlearning under the assumption of asymmetric data sources, where datasets contain both private and public data. Our theoretical analysis demonstrates that this framework fundamentally improves the unlearning-utility trade-off by enabling control over unlearning guarantees through data supplementation rather than noise amplification. The framework provides fine-grained analysis of how distributional alignment between public and private data affects this trade-off: when distributions are well-aligned, public data injection preserves utility while maintaining unlearning guarantees, while misaligned distributions introduce controlled performance penalties that remain more manageable than traditional noise-based approaches.
\section*{Acknowledgments}
We would like to thank Jeremiah Birrell, Eleni Triantafillou and Ryan d'Orazio for the helpful discussions. This research was enabled in part by compute resources, software and technical help provided by Mila (mila.quebec). Ahmed Mehdi Inane's and Vincent Quirion's research is supported by Ioannis Mitliagkas' CIFAR AI chair.
\section*{Impact statement}
This paper presents work whose goal is to advance the field
of Machine Learning. There are many potential societal
consequences of our work, none which we feel must be
specifically highlighted here.

\newpage

\bibliography{references,references_2}
\bibliographystyle{icml2026}

\newpage
\appendix
\onecolumn

\section{Unlearning performance bounds}
\subsection{Proof of \cref{thm:unlearning_init_bound}}\label{app:subseq:unlearning_proof}
\begin{theorem*}\citep{chien_langevin_2024}
    Suppose that the loss is $L$-smooth and $M$-Lipschitz, and that the learning distribution of weights at time $T$ satisfies a $C$ log-Sobolev inequality. Then, the Rényi divergence between $\pi_U^K$ (the unlearning distribution after $K$ iterations) and the retraining distribution after $T+K$ iterations is upper bounded by:
    \begin{align*}
        D_{\alpha}(\pi_R^{T+K}\|\pi_U^{K}) &\leq D_{\alpha} (\pi_L^T\|\pi_R^T)\exp\left(-\frac{1}{\alpha}\sum_{k=0}^{K-1}R_k\right)
    \end{align*}
    where $R_k >0$ depend on the problem setting \citep{chien_langevin_2024}. Moreover, if the loss function is $m$-strongly convex and the initial log-Sobolev constant satisfies $C > \frac{\sigma^2}{m}$, we get the following exponential decay of the Rényi divergence with respect to the unlearning iteration:
    \begin{align*}
        D_{\alpha}(\pi_R^{T+K}\|\pi_U^{K}) &\leq D_{\alpha} (\pi_L^T\|\pi_R^T)\exp\left(-\frac{2K\sigma^2\eta}{C\alpha}  \right)
    \end{align*}
\end{theorem*}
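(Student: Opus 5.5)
The plan is to view unlearning and retraining-continuation as one Markov kernel applied to two different initial laws, and then to invoke the contraction of Rényi divergence under Langevin-type kernels from \citet{chien_langevin_2024}. Let $\mathcal{K}$ denote one PNGD step on $D\setminus D_{\mathrm{forget}}$, that is, the map $\theta\mapsto \Pi_\Theta[\theta-\eta\nabla\mathcal{L}_{D\setminus D_{\mathrm{forget}}}(\theta)+\xi]$. By construction, $\pi_U^K=\pi_L^T\mathcal{K}^K$, and since retraining simply continues the same dynamics, $\pi_R^{T+K}=\pi_R^T\mathcal{K}^K$. The problem therefore reduces to showing that $K$ applications of the same kernel contract $D_\alpha$ between two starting distributions. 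Because both chains share one update map, there is no sensitivity term here, and the argument is pure contraction.

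Each step splits into three pieces, and I will treat them in this order.
\begin{itemize}
\item The deterministic gradient map $\psi(\theta)=\theta-\eta\nabla\mathcal{L}(\theta)$ is a bijection onto its image when $\eta L<1$. Rényi divergence is invariant under such a bijection, so this piece leaves $D_\alpha$ unchanged. The log-Sobolev constant, however, degrades by at most a factor $(1+\eta L)^2$, since $\psi$ is $(1+\eta L)$-Lipschitz. Under $m$-strong convexity and smoothness with $\eta\le 1/L$, $\psi$ is $(1-\eta m)$-Lipschitz, so this piece improves the LSI constant instead.
\item The projection $\Pi_\Theta$ is a deterministic post-processing step. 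By data processing it can only decrease $D_\alpha$, and it preserves LSI.
\item The Gaussian convolution with variance $2\eta\sigma^2$ is the heat flow run for a fixed time. Along this flow, \citet{vempala_rapid_2019} and \citet{chien_langevin_2024} show the differential inequality
\begin{equation*}
\frac{d}{ds}D_\alpha(\mu_s\|\nu_s)\le -\frac{2\sigma^2}{\alpha C_s}\,D_\alpha(\mu_s\|\nu_s),
\end{equation*}
provided the reference law $\nu_s$ satisfies LSI with constant $C_s$. Integrating over a step of length $\eta$ gives a per-step factor $\exp(-R_k/\alpha)$ with $R_k>0$. Here $C_s$ evolves as $C_s\le C+2s\sigma^2$ by Gaussian stability of LSI.
\end{itemize}

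Chaining over $k=0,\dots,K-1$ gives the general bound $D_\alpha(\pi_R^{T+K}\|\pi_U^K)\le D_\alpha(\pi_R^T\|\pi_L^T)\exp(-\frac1\alpha\sum_k R_k)$. The orientation of the divergence must be tracked carefully: LSI is required of whichever chain plays the reference measure. The hypothesis that the learning law at time $T$ is $C$-LSI is what lets us propagate LSI constants along the unlearning chain started from $\pi_L^T$.

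For the strongly convex case, I track the LSI constant through one step. The gradient map multiplies it by $(1-\eta m)^2$, and the Gaussian adds $2\eta\sigma^2$. This gives the recursion $C_{k+1}\le(1-\eta m)^2C_k+2\eta\sigma^2$, whose fixed point is roughly $\sigma^2/m$. Consequently, if $C>\sigma^2/m$ the sequence never exceeds $C$, so $C_s\le C$ uniformly. The per-step decay is then at least $\exp(-2\eta\sigma^2/(\alpha C))$, and after $K$ steps this yields $\exp(-2K\sigma^2\eta/(C\alpha))$. The main obstacle is this uniform LSI control: making the invariance argument rigorous requires interleaving the projection and the continuous-time heat flow, as well as checking that the Rényi contraction along the flow holds with the time-varying constant $C_s$ of the reference measure rather than a fixed one. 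I would take this step directly from \citet[Lemmas in Sec.~3]{chien_langevin_2024}.
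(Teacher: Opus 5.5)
Your proposal follows the paper's proof. Each PNGD step is split the same way: the gradient map is handled by data processing, with the LSI constant scaled by $(1+\eta L)^2$, or by $(1-\eta m)^2$ in the strongly convex case. The Gaussian convolution is run as a heat flow with rate $-\frac{2\sigma^2}{\alpha C_s}$, from the $G_\alpha/F_\alpha$ lower bound with the unlearning chain as reference, and integrated by Gr\"onwall on $[0,\eta]$. The projection is handled by data processing, and the strongly convex case uses the recursion $C_{k+1}\le(1-\eta m)^2C_k+2\eta\sigma^2$.

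One step needs correcting. The fixed point of that recursion is $\frac{2\sigma^2}{m(2-\eta m)}$, which is strictly above $\sigma^2/m$. So $C>\sigma^2/m$ alone does not keep the LSI bounds below $C$: if $\sigma^2/m<C<\frac{2\sigma^2}{m(2-\eta m)}$, they increase toward the fixed point. What you actually need is $(1-\eta m)^2C+2\eta\sigma^2\le C$, i.e.\ $\eta\le\frac{2}{m}\bigl(1-\frac{\sigma^2}{mC}\bigr)$. This is the step-size condition the paper adds, and the hypothesis $C>\sigma^2/m$ only ensures this range of $\eta$ is nonempty.

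Finally, your chain, like the paper's, ends at $D_\alpha(\pi_R^T\|\pi_L^T)$, not the $D_\alpha(\pi_L^T\|\pi_R^T)$ printed in the statement. That is the orientation the contraction argument actually yields, and it is the one the learning/retraining bound controls.
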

We provide the proof of \citep{chien_langevin_2024}, Theorem 3.2, slightly modified to our setting. Specifically, we relax the assumption that the learning and retraining processes have converged to their stationary distribution (infinite training).
In order to prove this theorem, we will use the following lemmas:
\begin{lemma}[Characterizing the log-Sobolev constants of the PNGD updates \citep{chewi2023log}]\label{lemma:log_sobolevity}
    Consider the PNGD update:
    \begin{align*}
        \theta^{k+1} = \Pi_{\Theta}\left[\theta_k - \eta \nabla \mathcal{L}_D(\theta^k) + \sqrt{2\eta\sigma^2}W_k  \right], \theta^0 \sim \pi
    \end{align*}
    where $\pi$ satisfies a $C$-Log Sobolev inequality. Then, we have the following:
    \begin{itemize}
        \item If $\mathcal{L}$ is $L$-smooth, then for the gradient update $h(\theta) = \theta - \nabla_{\theta}\mathcal{L}(\theta)$, we have that the distribution of $h_{\sharp}\pi$ satisfies a $(1+\eta L)^2\times C$ log-Sobolev inequality. Moreover, if $\mathcal{L}$ is m-strongly convex and $\eta < \frac{1}{L}$, then $h_{\sharp}\pi$ satisfies a $(1-\eta m)^2\times C$ log Sobolev inequality \citep{altschuler2022resolving}. 
        \item $\pi \ast \mathcal{N}(0,\sigma^2I_d)$ satisfies a a $C + \sigma^2$ log-Sobolev inequality
        \item $\Pi_{\Theta\sharp}\pi$ satisfies a $C$ log-Sobolev inequality
    \end{itemize}
    By composing the aforementioned statements, we get that $\pi_{1}$ satisfies a $(1+\eta L)^2\times C + 2\eta\sigma^2$-log Sobolev inequality. Moreover, if $\mathcal{L}$ is m-strongly convex and $\eta < \frac{1}{L}$, we have that $\pi_1$ satisfies a $(1-\eta m)^2\times C + 2\eta\sigma^2 $
\end{lemma}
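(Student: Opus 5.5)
The lemma is a one-step statement assembled from three elementary transport facts about log-Sobolev inequalities. I would prove each separately and then compose them along the PNGD update. The update factors as $\theta^{k+1} = \Pi_\Theta \circ (\cdot + \sqrt{2\eta\sigma^2}W_k)\circ h(\theta^k)$ with $h(\theta)=\theta-\eta\nabla\mathcal{L}_D(\theta)$. The law of $\theta^1$ is therefore $\Pi_{\Theta\sharp}\big((h_\sharp\pi)\ast\mathcal{N}(0,2\eta\sigma^2 I_d)\big)$, and the constants compose in exactly the stated order.

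\textbf{Pushforward by a Lipschitz map.} The standard fact is that if $\pi$ satisfies a $C$-LSI and $F$ is $\ell$-Lipschitz, then $F_\sharp\pi$ satisfies an $\ell^2 C$-LSI. I would prove it through the equivalent functional form $\mathrm{Ent}_\pi(f^2)\le 2C\,\mathbb{E}_\pi\|\nabla f\|^2$. Apply this to $f\circ F$ and use $\|\nabla(f\circ F)\|\le \ell\,\|\nabla f\|\circ F$, which holds almost everywhere by Rademacher's theorem. For $h$, $L$-smoothness gives $\|\nabla h\|_{op}\le 1+\eta L$, so $h_\sharp\pi$ satisfies a $(1+\eta L)^2C$-LSI. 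Under $m$-strong convexity with $\eta<1/L$, the Jacobian $I-\eta\nabla^2\mathcal{L}$ has eigenvalues in $[1-\eta L,1-\eta m]\subset[0,1-\eta m]$, so $h$ is $(1-\eta m)$-Lipschitz and the constant becomes $(1-\eta m)^2C$. The projection $\Pi_\Theta$ onto a convex set is $1$-Lipschitz, so by the same fact it preserves the constant $C$.

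\textbf{Gaussian convolution.} I need that if $\mu$ satisfies a $C$-LSI, then $\mu\ast\mathcal{N}(0,s I_d)$ satisfies a $(C+s)$-LSI. I would use the subadditivity of LSI constants under convolution (Chafaï's argument). Write $X+Y$ with $X\sim\mu$ and $Y\sim\mathcal{N}(0,sI)$ independent, so $\mu\otimes\mathcal{N}(0,sI)$ satisfies a $\max(C,s)$-LSI by tensorization. A scaling trick is needed to get $C+s$ rather than a maximum. Alternatively, I would follow the Bakry--Émery/heat-semigroup proof in \citet{chewi2023log}: differentiate entropy along the Ornstein--Uhlenbeck or heat flow and use the known LSI constant $s$ of the Gaussian. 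Since the statement attributes this to \citet{chewi2023log}, I expect to cite it rather than redo the interpolation.

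\textbf{Composition and main obstacle.} Chaining the three facts with $s=2\eta\sigma^2$ gives $(1+\eta L)^2C+2\eta\sigma^2$, and the projection step leaves it unchanged. The strongly convex case gives $(1-\eta m)^2C+2\eta\sigma^2$. The main obstacle is the convolution step, specifically obtaining the additive constant $C+s$ rather than the weaker $\max$-type or multiplicative bound. A minor technical point is that $\Pi_\Theta$ and $h$ are only Lipschitz, not smooth, so the pushforward argument must go through the functional (Dirichlet-form) formulation with a.e. gradients, not through densities.
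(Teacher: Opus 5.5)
Your proposal is correct and takes the same route as the paper. The paper gives no separate proof: it invokes the three standard facts from the cited works (Lipschitz pushforward, the Gaussian convolution adding $s$ to the constant, and non-expansiveness of $\Pi_\Theta$) and composes them along $\Pi_\Theta\circ(\cdot+\sqrt{2\eta\sigma^2}W)\circ h$ with $s=2\eta\sigma^2$, exactly as you do, including your implicit fixes of the missing $\eta$ in $h$ and the convexity of $\Theta$.

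The convolution step is also easier than you fear. Rescaling to $X/\sqrt{C}$ and $Y/\sqrt{s}$ gives a product measure satisfying a $1$-LSI, and the map $(x',y')\mapsto\sqrt{C}\,x'+\sqrt{s}\,y'$ is $\sqrt{C+s}$-Lipschitz, so your own pushforward fact already yields the additive constant $C+s$.
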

\begin{lemma}[Data Processing inequality for the Rényi divergence \citep{erven_renyi_2014}]\label{lemma:dpi}
    For any $\alpha \geq 1$, any function $h:\mathbb{R}^d\rightarrow\mathbb{R}^d$ and distributions $P,Q$ supported on $\mathbb{R}^d$, we have:
    \begin{align*}
        D_{\alpha}(h_\sharp P\|h_{\sharp}Q) &\leq D_{\alpha}(P\|Q)
    \end{align*}
    with equality if $h$ is bijective
\end{lemma}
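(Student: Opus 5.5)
The plan is to prove the inequality through the ``moment'' form of the Rényi divergence and conditional Jensen. The case $\alpha = 1$ is handled at the end.

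If $P \not\ll Q$, the right-hand side is $+\infty$ and there is nothing to prove, so assume $P \ll Q$ and set $r = \frac{dP}{dQ}$. By \cref{def:renyi}, for $\alpha > 1$ we have $e^{(\alpha-1)D_\alpha(P\|Q)} = \mathbb{E}_Q[r^\alpha]$. Since $t \mapsto \frac{1}{\alpha-1}\log t$ is increasing for $\alpha>1$, it suffices to show
\begin{align*}
\mathbb{E}_{h_\sharp Q}\left[\left(\tfrac{d h_\sharp P}{d h_\sharp Q}\right)^{\alpha}\right] \leq \mathbb{E}_Q[r^\alpha].
\end{align*}

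The first step, and the main technical point, is to identify the Radon--Nikodym derivative of the pushforwards. By Doob--Dynkin, the $\sigma(h)$-measurable random variable $\mathbb{E}_Q[r \mid \sigma(h)]$ can be written as $g\circ h$ for a measurable $g \geq 0$. Then for every measurable $A$,
\begin{align*}
h_\sharp P(A) = \mathbb{E}_Q\left[r\,\1_{h^{-1}(A)}\right] = \mathbb{E}_Q\left[(g\circ h)\,\1_{h^{-1}(A)}\right] = \int_A g \, d(h_\sharp Q).
\end{align*}
This shows at once that $h_\sharp P \ll h_\sharp Q$ and that $\frac{d h_\sharp P}{d h_\sharp Q} = g$ holds $h_\sharp Q$-a.s.

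The second step applies conditional Jensen to the convex map $x\mapsto x^\alpha$ on $[0,\infty)$:
\begin{align*}
\mathbb{E}_{h_\sharp Q}[g^\alpha] = \mathbb{E}_Q\left[(\mathbb{E}_Q[r\mid\sigma(h)])^\alpha\right] \leq \mathbb{E}_Q\left[\mathbb{E}_Q[r^\alpha\mid\sigma(h)]\right] = \mathbb{E}_Q[r^\alpha].
\end{align*}
This gives the inequality for $\alpha>1$. For $\alpha=1$ (KL), the same argument goes through with the convex function $x\log x$. Alternatively, one can pass to the limit $\alpha\downarrow 1$, using that $D_\alpha$ is nondecreasing in $\alpha$ and converges to KL.

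For the equality claim, assume $h$ is a bijection with measurable inverse; on $\mathbb{R}^d$ this is automatic for Borel bijections by Lusin--Souslin. Applying the inequality just proved to $h^{-1}$ and to the pair $h_\sharp P, h_\sharp Q$ gives
\begin{align*}
D_\alpha(P\|Q) = D_\alpha(h^{-1}_\sharp h_\sharp P \,\|\, h^{-1}_\sharp h_\sharp Q) \leq D_\alpha(h_\sharp P\|h_\sharp Q),
\end{align*}
which together with the forward inequality yields equality.

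The only delicate step is the Radon--Nikodym identification via conditional expectation; everything after it is Jensen's inequality plus monotonicity of the logarithm.
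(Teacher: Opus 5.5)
Your proof is correct for the stated range $\alpha \ge 1$, including the equality case; invoking Lusin--Souslin to obtain a Borel inverse for a Borel bijection of $\mathbb{R}^d$ is exactly what is needed. The paper itself gives no proof of this lemma and simply cites van Erven and Harremo\"es, where the data processing inequality is stated for restrictions of $P,Q$ to an arbitrary sub-$\sigma$-algebra and for all orders $\alpha\in[0,\infty]$.

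Your argument is essentially the case $\mathcal{G}=\sigma(h)$ of that statement, made explicit: Doob--Dynkin identifies $\frac{d\, h_\sharp P}{d\, h_\sharp Q}$ with $\mathbb{E}_Q[r\mid\sigma(h)]$, and conditional Jensen does the rest. Compared with the citation, your route is self-contained and elementary, and it covers exactly what the paper actually uses: deterministic measurable maps such as the gradient step and the projection $\Pi_\Theta$, with $\alpha\ge 1$. The cited result also covers $\alpha<1$, $\alpha=\infty$ and Markov kernels.

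One small correction concerns your alternative treatment of $\alpha=1$ by letting $\alpha\downarrow 1$, which does not work in general. The identity $\lim_{\alpha\downarrow 1}D_\alpha(P\|Q)=D_{KL}(P\|Q)$ requires $D_\alpha(P\|Q)<\infty$ for some $\alpha>1$. If $D_\alpha(P\|Q)=+\infty$ for every $\alpha>1$, which can happen while $D_{KL}(P\|Q)<\infty$, the limit gives only a trivial bound. You have two valid options:
\begin{itemize}
\item Keep your direct argument with $x\mapsto x\log x$. It is legitimate because $x\log x\ge -1/e$, so the conditional expectation is well defined.
\item Take the limit $\alpha\uparrow 1$ instead. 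This limit is always valid, but it requires the inequality for $\alpha<1$.
\end{itemize}
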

\begin{lemma}[\citep{vempala_rapid_2019,chien_langevin_2024} characterizing the Rényi divergence between two distributions convoluted with Gaussians]\label{lemma:gaussian_convolution_derivative}
    Let $P_t = P \ast \mathcal{N}(0,2t\sigma^2I_d)$ and $Q_t = Q \ast \mathcal{N}(0,2t\sigma^2I_d)$. Then, $\forall \alpha > 0$:
    \begin{align*}
        \frac{\partial D_{\alpha}(P_t\|Q_t)}{\partial t} &= -\alpha\sigma^2\frac{G_{\alpha}(P_t\|Q_t)}{F_{\alpha}(P_t\|Q_t)}
    \end{align*}
    with $G_{\alpha}(P\|Q) = \mathbb{E}_Q\left[\left(\frac{p}{q}   \right)^{\alpha}\|\nabla \log \frac{p}{q}\|^2   \right]$ denoting the relative Rényi information and $F_{\alpha}(P\|Q) = \mathbb{E}_{Q}\left[ \left( \frac{p}{q} \right)^\alpha \right] = \exp((\alpha-1)D_{\alpha}(P\|Q)$
\end{lemma}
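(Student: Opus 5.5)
The plan is to write both smoothed measures as solutions of the same heat equation, differentiate $F_\alpha(P_t\|Q_t)=\int p_t^{\alpha} q_t^{1-\alpha}\,dx$ in $t$, and integrate by parts. The identity then follows from $D_\alpha=\frac{1}{\alpha-1}\log F_\alpha$.

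\textbf{Step 1 (heat equation).} Since $P_t=P\ast\mathcal{N}(0,2t\sigma^2 I_d)$, for every $t>0$ the density $p_t$ is smooth and strictly positive, and it satisfies
\begin{equation*}
\partial_t p_t=\sigma^2\Delta p_t .
\end{equation*}
The density $q_t$ satisfies the same equation. Write $r_t=p_t/q_t$, which is smooth and positive.

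\textbf{Step 2 (derivative of $F_\alpha$).} Differentiating under the integral gives
\begin{equation*}
\partial_t F_\alpha=\sigma^2\int\Big(\alpha\, r_t^{\alpha-1}\Delta p_t+(1-\alpha)\, r_t^{\alpha}\Delta q_t\Big)\,dx .
\end{equation*}
Integrating by parts, this becomes
\begin{equation*}
-\sigma^2\int\Big(\alpha\nabla(r_t^{\alpha-1})\cdot\nabla p_t+(1-\alpha)\nabla(r_t^{\alpha})\cdot\nabla q_t\Big)\,dx .
\end{equation*}
We have $\nabla(r^{\alpha-1})=(\alpha-1)r^{\alpha-2}\nabla r$ and $\nabla(r^{\alpha})=\alpha r^{\alpha-1}\nabla r$. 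Substituting these and collecting terms gives
\begin{equation*}
\partial_t F_\alpha=-\sigma^2\alpha(\alpha-1)\int r^{\alpha-2}\,\nabla r\cdot\big(\nabla p-r\nabla q\big)\,dx .
\end{equation*}
Since $p=rq$, we have $\nabla p=q\nabla r+r\nabla q$, so $\nabla p-r\nabla q=q\nabla r$. Therefore
\begin{equation*}
\partial_t F_\alpha=-\sigma^2\alpha(\alpha-1)\int q\, r^{\alpha-2}|\nabla r|^2\,dx=-\sigma^2\alpha(\alpha-1)\,\mathbb{E}_{Q_t}\!\Big[r^\alpha\big\|\nabla\log r\big\|^2\Big].
\end{equation*}
The last expectation is exactly $-\sigma^2\alpha(\alpha-1)\,G_\alpha(P_t\|Q_t)$.

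\textbf{Step 3 (conclusion).} Dividing by $(\alpha-1)F_\alpha$ gives
\begin{equation*}
\partial_t D_\alpha=\frac{\partial_t F_\alpha}{(\alpha-1)F_\alpha}=-\alpha\sigma^2\,\frac{G_\alpha}{F_\alpha},
\end{equation*}
which is the claim for every $\alpha>0$ with $\alpha\neq1$. The case $\alpha=1$ follows by taking the limit $\alpha\to1$.

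\textbf{Main obstacle.} The delicate part is justifying the differentiation under the integral sign and showing that the boundary terms in the integration by parts vanish. For this I would use the setting of the paper, where $P$ and $Q$ are laws of projected iterates and hence supported in the ball of radius $R$. Then $p_t$ and $q_t$ are sandwiched between Gaussian-type envelopes $c_1 e^{-(|x|+R)^2/(4t\sigma^2)}$ and $c_2 e^{-(|x|-R)^2/(4t\sigma^2)}$, and their gradients obey similar bounds. Consequently $r_t$ grows at most like $e^{O(R|x|/t)}$, and $r_t^{\alpha}q_t$ together with its derivatives decays like a Gaussian. This decay lets dominated convergence justify the $t$-derivative on compact time intervals in $(0,\infty)$. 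It also lets the surface integrals over spheres of radius $\rho$ tend to zero as $\rho\to\infty$. For general $P$ and $Q$ I would truncate to compact supports and pass to the limit.
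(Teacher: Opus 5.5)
Your proof is correct. It is the standard argument of \citet{vempala_rapid_2019}, which the paper invokes by citation without reproducing: both densities solve $\partial_t p_t=\sigma^2\Delta p_t$, you differentiate $F_\alpha=\int p_t^\alpha q_t^{1-\alpha}\,dx$, integrate by parts, and use $\nabla p-r\nabla q=q\nabla r$ to get $\partial_t F_\alpha=-\alpha(\alpha-1)\sigma^2G_\alpha$, then divide by $(\alpha-1)F_\alpha$. (Your sentence calling the expectation itself $-\sigma^2\alpha(\alpha-1)G_\alpha$ is only a slip.)

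Your compact-support regularity argument already covers every use in the paper. There $P,Q$ are pushforwards $h_\sharp\pi$ of measures on $\Theta$, hence supported in a ball of radius $R+\eta M$, so you can drop the truncation step for general $P,Q$, where the identity would need finiteness hypotheses anyway. For $\alpha=1$, the same direct computation for $D_{KL}$ is cleaner than exchanging the limit $\alpha\to1$ with the $t$-derivative, which would itself need justification.
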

\begin{lemma}{Lower bound of the G-F ratio \citep{vempala_rapid_2019}}\label{lemma:gf_ratio}
    If $Q \in \mathcal{P}(\Theta)$ satisfies a $C$ log Sobolev inequality, then $\forall P \in \mathcal{P}(\Theta)$:
    \begin{align*}
        \frac{G_{\alpha}(P\|Q)}{F_{\alpha}(P\|Q)} \geq \frac{2D_{\alpha}(P\|Q)}{\alpha^2C}
    \end{align*}
\end{lemma}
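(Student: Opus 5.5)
The plan is to apply the log-Sobolev inequality for $Q$ not to $P$ itself but to an ``escort'' (tilted) measure built from $P$ and $Q$, and then to lower bound the resulting KL term by $D_{\alpha}(P\|Q)$ using convexity in the order $\alpha$. I will argue for $\alpha>1$, which is the regime used in the unlearning bounds. Throughout, $p,q$ denote densities, and I assume enough regularity (smooth, positive densities on $\Theta$) that all the quantities below are finite and that differentiation in $\alpha$ under the integral is allowed.

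\textbf{Step 1 (tilted measure).} Define $R$ through its density $r = q\,(p/q)^{\alpha}/F_{\alpha}(P\|Q)$, which is a valid probability density by the definition of $F_{\alpha}$. Then $\log(r/q) = \alpha\log(p/q) - \log F_{\alpha}$, so $\nabla\log(r/q)=\alpha\nabla\log(p/q)$. It follows that
\begin{align*}
I(R,Q) = \alpha^{2}\,\mathbb{E}_{R}\big[\|\nabla \log (p/q)\|^{2}\big] = \alpha^{2}\,\frac{G_{\alpha}(P\|Q)}{F_{\alpha}(P\|Q)}.
\end{align*}
Applying the $C$-LSI of $Q$ from the definition above gives
\begin{align*}
D_{KL}(R\|Q)\le \frac{C\alpha^{2}}{2}\,\frac{G_{\alpha}}{F_{\alpha}}.
\end{align*}
It therefore suffices to show $D_{KL}(R\|Q)\ge D_{\alpha}(P\|Q)$.

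\textbf{Step 2 (KL of the tilt via the log-moment function).} Set $\psi(\beta)=\log F_{\beta}(P\|Q)=(\beta-1)D_{\beta}(P\|Q)$. Differentiating under the integral gives $\psi'(\alpha)=\mathbb{E}_{R}[\log(p/q)]$. Substituting into $\log(r/q)$ yields
\begin{align*}
D_{KL}(R\|Q)=\mathbb{E}_R[\log(r/q)]=\alpha\psi'(\alpha)-\psi(\alpha).
\end{align*}

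\textbf{Step 3 (convexity).} The map $\psi$ is convex in $\beta$ (it is a log-moment generating function of $\log(p/q)$ under $Q$, hence convex by H\"older), and $\psi(1)=0$. The tangent-line inequality at $\alpha$, evaluated at $\beta=1$, gives
\begin{align*}
0=\psi(1)\ge \psi(\alpha)+\psi'(\alpha)(1-\alpha),
\end{align*}
so $(\alpha-1)\psi'(\alpha)\ge\psi(\alpha)$. For $\alpha>1$ this means $\psi'(\alpha)\ge D_{\alpha}(P\|Q)$. Consequently
\begin{align*}
D_{KL}(R\|Q)=\alpha\psi'(\alpha)-\psi(\alpha)\ge \alpha D_{\alpha}-(\alpha-1)D_{\alpha}=D_{\alpha}(P\|Q).
\end{align*}
Combining this with Step 1 gives $G_{\alpha}/F_{\alpha}\ge 2D_{\alpha}/(\alpha^{2}C)$.

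\textbf{Main obstacle.} The delicate points are, first, justifying differentiation under the integral and the finiteness of $G_{\alpha}$ and $F_{\alpha}$, which needs compact support or suitable integrability; and second, the regime $\alpha<1$. In the second case $\alpha\psi'-\psi = D_{\alpha}+\alpha(\alpha-1)\partial_{\alpha}D_{\alpha}$, and the correction term is nonpositive because $D_{\alpha}$ is nondecreasing in $\alpha$. The clean inequality above therefore does not follow directly. For $\alpha<1$ I would first try to check whether the downstream use (\cref{lemma:gaussian_convolution_derivative}) only requires $\alpha>1$, which is the case relevant to the unlearning guarantees. Otherwise I would fall back on a weaker constant obtained by applying the same tilting argument with a modified tilt.
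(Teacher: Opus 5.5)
Your argument is correct for $\alpha>1$ and is essentially the proof behind the cited Vempala--Wibisono result. That proof plugs $(p/q)^{\alpha}$ into the LSI and uses $\alpha\,\partial_\alpha\log F_\alpha-\log F_\alpha=D_\alpha+\alpha(\alpha-1)\partial_\alpha D_\alpha\ge D_\alpha$, which follows from monotonicity of $D_\alpha$ in $\alpha$; your tilted measure and tangent-line convexity step are an equivalent repackaging of this. Your caution about $\alpha<1$ is warranted: the source argument likewise needs $\alpha\ge 1$, and the inequality genuinely fails below $1$. For $Q=\mathcal{N}(0,1)$ (so $C=1$) and $P=\mathcal{N}(m,1)$ one gets $G_\alpha/F_\alpha=m^2<m^2/\alpha=2D_\alpha/(\alpha^2C)$, so there is no fallback at the same constant.
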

\begin{lemma}{Grönwall's inequality \citep{gronwall1919note}}\label{lemma:gronwall}
    Let $\mathbf{I} = [a,b]$ denote an interval on the real line. Let $\beta$ and $u$ be real-valued continuous functions defined on $\mathbf{I}$. If 
$u$ is differentiable in the interior of 
$\mathbf{I}$ and satisfies for all $t$ in the interior of $\mathbf{I}$:
\begin{align*}
    \frac{\partial u(t)}{dt} \leq \beta(t)u(t)
\end{align*}
then we have:
\begin{align*}
    u(t) &\leq u(a)\exp\left(\int_a^t\beta(s)ds  \right)
\end{align*}
for all $t \in I$
\end{lemma}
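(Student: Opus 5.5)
The plan is the standard integrating-factor argument. Set $B(t) = \int_a^t \beta(s)\,ds$. Since $\beta$ is continuous on $\mathbf{I}$, the fundamental theorem of calculus gives that $B$ is continuous on $\mathbf{I}$, differentiable on the interior, and satisfies $B'(t) = \beta(t)$ there. Define the auxiliary function
\begin{align*}
    v(t) = u(t)\exp\left(-B(t)\right), \qquad t \in \mathbf{I}.
\end{align*}
This function is continuous on $\mathbf{I}$ because $u$ and $B$ are, and it is differentiable on the interior as a product of differentiable functions.

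Next I would compute the derivative of $v$ on the interior using the product rule:
\begin{align*}
    v'(t) = \left(u'(t) - \beta(t)u(t)\right)\exp\left(-B(t)\right).
\end{align*}
The hypothesis $u'(t) \leq \beta(t)u(t)$ makes the bracket nonpositive. The exponential factor is strictly positive, so $v'(t) \leq 0$ on the interior of $\mathbf{I}$.

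Then I would conclude by monotonicity. Fix $t \in (a,b]$. The function $v$ is continuous on $[a,t]$ and differentiable on $(a,t)$, so the mean value theorem gives some $\xi \in (a,t)$ with
\begin{align*}
    v(t) - v(a) = v'(\xi)(t-a) \leq 0.
\end{align*}
Hence $v(t) \leq v(a) = u(a)$, since $B(a) = 0$. Multiplying both sides by the positive quantity $\exp(B(t))$ gives $u(t) \leq u(a)\exp\left(\int_a^t \beta(s)\,ds\right)$. The case $t = a$ is trivial.

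The only delicate point is the endpoints: differentiability is assumed only on the interior. This is exactly why I invoke the mean value theorem, which needs only continuity on the closed interval, rather than integrating $v'$ directly. No sign assumption on $u$ or $\beta$ is needed, because the integrating factor is always positive.

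In the paper's application, $u(t)$ is the Rényi divergence along the Gaussian-convolution flow. Lemmas~\ref{lemma:gaussian_convolution_derivative} and~\ref{lemma:gf_ratio} give $\beta(t) = -2\sigma^2/(\alpha C_t)$, and this lemma then yields the exponential contraction used in Theorem~\ref{thm:unlearning_init_bound}.
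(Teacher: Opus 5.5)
Your proof is correct: the integrating factor $e^{-\int_a^t \beta(s)\,ds}$, combined with the mean value theorem, is the standard proof of exactly this formulation, and the mean value theorem correctly handles differentiability holding only on the interior. The paper itself gives no proof of this lemma; it only cites it as a classical result, so there is no argument in the paper to compare against.
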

\begin{lemma}{Universal upper bound on the log Sobolev constant for measures with compact support \citep{chen2021dimension}}
    Let $P$ a probability measure supported on a compact set with radius $R$. Then, for each $\sigma > 0$, $P\ast \mathcal{N}(0,\sigma I_d)$ satisfy a log Sobolev inequality with constant upper bounded by $6(4R^2 + \sigma)\exp\left(\frac{4R^2}{\sigma}  \right)$
\end{lemma}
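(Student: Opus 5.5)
The plan is to write $\mu := P \ast \mathcal{N}(0,\sigma I_d)$ as a Gibbs measure $\mu \propto e^{-V}$. I would then show that $V$ is a strongly convex potential plus a perturbation whose oscillation is at most $4R^2/\sigma$. The constant then follows from the Bakry--\'Emery criterion combined with the Holley--Stroock perturbation lemma. Concretely, completing the square in the Gaussian kernel gives
\begin{align*}
V(y) = \frac{\|y\|^2}{2\sigma} - \psi(y), \qquad \psi(y) = \log \mathbb{E}_{x\sim P}\left[\exp\left(\frac{\langle x,y\rangle}{\sigma} - \frac{\|x\|^2}{2\sigma}\right)\right].
\end{align*}
The first structural fact I would record is that $\psi$ is a log-moment-generating function, hence convex. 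Its gradient $\nabla\psi(y)=\mathbb{E}_{x\sim P_y}[x]/\sigma$ is a posterior mean, so $\|\nabla\psi\|\le R/\sigma$ and $\psi$ is $R/\sigma$-Lipschitz. Its Hessian $\nabla^2\psi(y)=\mathrm{Cov}_{P_y}(x)/\sigma^2$ satisfies $\nabla^2\psi \preceq R^2 I_d/\sigma^2$, since $P_y$ is supported in the ball of radius $R$.

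Next I would split $V = V_1 + V_2$ with $V_1$ strongly convex and $V_2$ bounded. The natural choice is to keep half of the quadratic, $V_1(y)=\frac{\|y\|^2}{4\sigma} + \phi(y)$ with $\phi$ convex, and $V_2 = \frac{\|y\|^2}{4\sigma}-\psi-\phi$. Here $\phi$ should absorb the large-$\|y\|$ behaviour of $\frac{\|y\|^2}{4\sigma}-\psi$, which is eventually convex because $\psi$ grows only linearly. My first attempt would be to take $\phi$ as the convex envelope, or a Moreau/inf-convolution regularization, of $\frac{\|y\|^2}{4\sigma}-\psi$ outside the ball of radius $2R$. I would then check that $V_2$ vanishes outside that ball and stays bounded inside it. The estimate I expect is the following: on $\|y\|\le 2R$, the Lipschitz bound gives $|\psi(y)-\psi(0)|\le 2R^2/\sigma$, while $\frac{\|y\|^2}{4\sigma} \le R^2/\sigma$. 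Together these should give $\mathrm{osc}(V_2)\le 4R^2/\sigma$.

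With the decomposition in hand, $e^{-V_1}$ is $\frac{1}{2\sigma}$-strongly log-concave, so by Bakry--\'Emery it satisfies an LSI with constant $2\sigma$. Holley--Stroock then gives an LSI for $\mu$ with constant at most $2\sigma\, e^{\mathrm{osc}(V_2)}\le 2\sigma e^{4R^2/\sigma}$. The additive $24R^2$ part of the prefactor $6(4R^2+\sigma)$ would come from the portion of the perturbation that is Lipschitz rather than bounded. For that piece I would invoke a Lipschitz-perturbation LSI result of Aida--Shigekawa/Miclo type: a strongly log-concave measure perturbed by an $L$-Lipschitz potential has LSI constant inflated by a factor polynomial in $L^2\times(\text{strong-convexity constant})^{-1}$, which here is of order $R^2/\sigma$. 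Collecting the constants should reproduce $6(4R^2+\sigma)\exp(4R^2/\sigma)$.

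The main obstacle is the second step: building $\phi$ so that $V_1$ is genuinely convex everywhere while $V_2$ has oscillation exactly at most $4R^2/\sigma$. The difficulty is that inside the ball, $\frac{\|y\|^2}{4\sigma}-\psi$ can be nonconvex with curvature as low as $\frac{1}{2\sigma}-\frac{R^2}{\sigma^2}$. If matching the sharp constants by hand proves too delicate, I would fall back on citing \citep{chen2021dimension} directly for the Lipschitz-perturbation step and only verify the two structural facts about $\psi$.
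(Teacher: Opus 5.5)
\textbf{There is a genuine gap in your second step, and your third step rests on a false claim.} The second step is where the whole proof happens; your first step only records facts about $\psi$, and those are correct.

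\emph{The decomposition as planned cannot exist.} You assert that $g(y)=\frac{\|y\|^2}{4\sigma}-\psi(y)$ becomes convex for large $\|y\|$ ``because $\psi$ grows only linearly.'' This is false: linear growth says nothing about curvature at infinity. Take $P=\frac12(\delta_{Re_2}+\delta_{-Re_2})$ in $\mathbb{R}^2$. The posterior $P_y$ at $y=(s,0)$ is uniform on $\{\pm Re_2\}$ for every $s$. So $\nabla^2 g(s,0)$ has the eigenvalue $\frac{1}{2\sigma}-\frac{R^2}{\sigma^2}$, which is negative whenever $\sigma<2R^2$, however large $|s|$ is. Hence no convex $\phi$ can make $V_2=g-\phi$ vanish outside $B(0,2R)$. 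Your oscillation estimate only concerns the range of $g$ on that ball, not $g-\phi$, so it controls nothing on the unbounded region where $V_2\neq 0$.

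\emph{The third step is unnecessary and its key claim is wrong.} A Lipschitz perturbation of a $\kappa$-strongly log-concave measure can inflate the LSI constant by a factor exponential in $L^2/\kappa$, not polynomial. The one-dimensional version of the same example, $\frac12\mathcal{N}(-R,\sigma)+\frac12\mathcal{N}(R,\sigma)$, has potential $\frac{y^2}{2\sigma}-\log\cosh(Ry/\sigma)$. This is an $R/\sigma$-Lipschitz perturbation with $L^2/\kappa=R^2/\sigma$, and its Poincar\'e (hence log-Sobolev) constant grows exponentially in $R^2/\sigma$. You also do not need to ``reproduce'' $6(4R^2+\sigma)e^{4R^2/\sigma}$. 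Any smaller constant proves the lemma, and $2\sigma e^{4R^2/\sigma}$ is already smaller. So if your bounded-oscillation decomposition existed, you would be done after Bakry--\'Emery and Holley--Stroock.

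\textbf{How to repair it.} Take $\phi$ to be the convex envelope of $g$ on all of $\mathbb{R}^d$, and bound $g-\phi$ globally using only $\|\nabla\psi\|\le R/\sigma$. For any convex combination $y=\sum_i\lambda_i y_i$, use the identity $\sum_i\lambda_i\|y_i\|^2=\|y\|^2+\sum_i\lambda_i\|y_i-y\|^2$ to get
\[
g(y)-\sum_i\lambda_i g(y_i)=\sum_i\lambda_i\Bigl[\psi(y_i)-\psi(y)-\frac{\|y_i-y\|^2}{4\sigma}\Bigr]\le\sup_{r\ge 0}\Bigl(\frac{Rr}{\sigma}-\frac{r^2}{4\sigma}\Bigr)=\frac{R^2}{\sigma}.
\]
So $0\le g-\phi\le R^2/\sigma$ everywhere. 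Then $V_1=\frac{\|y\|^2}{4\sigma}+\phi$ is $\frac{1}{2\sigma}$-strongly convex, so $e^{-V_1}$ satisfies an LSI with constant $2\sigma$. You can get this from Bakry--\'Emery after smoothing, or from Caffarelli's contraction theorem, which needs no regularity of $\phi$. Holley--Stroock then gives the constant $2\sigma e^{R^2/\sigma}\le 6(4R^2+\sigma)e^{4R^2/\sigma}$.

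\textbf{Comparison with the paper.} The paper gives no proof of this lemma; it cites \citet{chen2021dimension}. Their argument views $P\ast\mathcal{N}(0,\sigma I_d)$ as the mixture $\int\mathcal{N}(x,\sigma I_d)\,dP(x)$, whose components each satisfy an LSI with constant $\sigma$. They control the mixing part through the uniform bound $\chi^2(\mathcal{N}(x,\sigma I_d)\|\mathcal{N}(x',\sigma I_d))=e^{\|x-x'\|^2/\sigma}-1\le e^{4R^2/\sigma}-1$, which is where the factor $e^{4R^2/\sigma}$ comes from. Their route applies to arbitrary mixtures of LSI measures with bounded pairwise $\chi^2$-divergence. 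The repaired potential-splitting route uses the specific Gaussian structure ($\psi$ is a log-moment-generating function with bounded gradient) and gives a sharper exponent.
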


\begin{proof}
    Using these results, we have:
\begin{align}
    D_{\alpha}(h_{\sharp}\pi_{R}^{T+K}\|h_{\sharp}\pi_U^K) &\leq D_{\alpha}(\pi_{R}^{T+K}\|\pi_U^K) \tag{ \cref{lemma:dpi}}
\end{align}
\textbf{The PNGD updates preserve the log-Sobolev inequality for the resulting distributions:} let $\pi_{U}^{K,1,t} = h_{\sharp}\pi_U^{K} \ast \mathcal{N}(0,2t\sigma^2I_d)$ and $\pi_{R}^{T+K,1,t} =h_{\sharp}\pi_U^{K} \ast \mathcal{N}(0,2t\sigma^2I_d)$. Since $\pi_{L}^T$ and $\pi_R^{T}$ satisfy a log-Sobolev inequality (initialization distributions) and the loss function is $L$-smooth, then by \cref{lemma:log_sobolevity} the distributions $\pi_U^K,\pi_L^{T+K}$ satisfy respectively $C_{U,K},C_{L,T+K}$ log Sobolev inequalities. Using \cref{lemma:log_sobolevity} on the distributions $\pi_{U}^{K,1,t},\pi_{R}^{T+K,1,t}$ yields that they respectively satisfy $(1+\eta L)^2C_{U,K} + 2\eta\sigma^2$ and $(1+\eta L)^2C_{L,T+K} + 2\eta\sigma^2$ log Sobolev inequalities for all $t \in [0,\eta]$.
\\
\textbf{Upper bounding the distributions convolved with Gaussian distributions: } Using \cref{lemma:gaussian_convolution_derivative}, we have that, $\forall \alpha >0$:
\begin{align*}
    \frac{\partial D_{\alpha}(\pi_{R}^{T+K,1,t}\|\pi_{U}^{K,1,t})}{\partial t} &= -\alpha \sigma^2\frac{G_{\alpha}(\pi_{R}^{T+K,1,t}\|\pi_{U}^{K,1,t})}{F_{\alpha}(\pi_{R}^{T+K,1,t}\|\pi_{U}^{K,1,t})}
\end{align*}
and since $\pi_{U}^{K,1,t}$ satisfies a $C_{U,K,t} =(1+\eta L)^2C_{U,K} + 2t\sigma^2$ log-Sobolev inequality, we can use \cref{lemma:gf_ratio} to upper bound the derivative of the Rényi divergence with respect to $t \in [0,\eta]$:
\begin{align*}
    \frac{\partial D_{\alpha}(\pi_{R}^{T+K,1,t}\|\pi_{U}^{K,1,t})}{\partial t} &\leq -\frac{2\sigma^2}{\alpha C_{U,K,t}}D_{\alpha}(\pi_{R}^{T+K,1,t}\|\pi_{U}^{K,1,t})
\end{align*}
Thus, by Grönwall's inequality (\cref{lemma:gronwall}), we have $\forall t \in [0,\eta]$:
\begin{align*}
    D_{\alpha}(\pi_{R}^{T+K,1,t}\|\pi_{U}^{K,1,t}) &\leq D_{\alpha}(h_{\sharp}\pi_R^{T+K}\|h_{\sharp}\pi_U^K)\exp\left(\int_{0}^t-\frac{2\sigma^2}{\alpha C_{U,K,s}}ds\right)\\
    &\leq D_{\alpha}(h_{\sharp}\pi_R^{T+K}\|h_{\sharp}\pi_U^K)\exp\left(\int_{0}^t-\frac{2\sigma^2}{\alpha\left((1+\eta L)^2C_{U,K} + 2s\sigma^2   \right)}ds\right) \\
    &\leq D_{\alpha}(\pi_R^{T+K}\|\pi_U^K)\exp\left(\int_{0}^t-\frac{2\sigma^2}{\alpha\left((1+\eta L)^2C_{U,K} + 2s\sigma^2   \right)}ds\right) \tag{\cref{lemma:dpi}}
\end{align*}
Computing the integral yields:
\begin{align*}
    \int_0^t-\frac{2\sigma^2}{\alpha\left((1+\eta L)^2C_{U,K} + 2s\sigma^2   \right)}ds &= -\frac{1}{\alpha}\int_0^t\frac{2\sigma^2}{(1+\eta L)^2C_{U,K} + 2s\sigma^2}ds\\
    &=-\frac{1}{\alpha} \left[\log\left((1+\eta L)^2C_{U,K} + 2t\sigma^2  \right) -  \log\left((1+\eta L)^2C_{U,K}\right)\right]\\
    &= -\frac{1}{\alpha}\left[ \log\left(1 + \frac{2t\sigma^2}{(1+\eta L)^2C_{U,K}}\right)  \right]
\end{align*}
Thus, by setting $t=\eta$, we get:
\begin{align*}
    D_{\alpha}(\pi_{R}^{T+K,1,\eta}\|\pi_{U}^{K,1,\eta}) &\leq\left(1 + \frac{2\eta\sigma^2}{(1+\eta L)^2C_{U,K}}\right)^{\frac{-1}{\alpha}}  D_{\alpha}(\pi_R^{T+K}\|\pi_U^K)
\end{align*}

Finally, using the data processing inequality for the projection of PNGD and iterating over the number of unlearning iterations, we get:
\begin{align*}
    D_{\alpha}(\pi_R^{T+K+1}\|\pi_U^{K+1}) &\leq D_{\alpha}(\pi_{R}^{T+K,1,\eta}\|\pi_{U}^{K,1,\eta})\\
    &\leq\left(1 + \frac{2\eta\sigma^2}{(1+\eta L)^2C_{U,K}}\right)^{\frac{-1}{\alpha}}  D_{\alpha}(\pi_R^{T+K}\|\pi_U^K)\\
    &\leq D_{\alpha}(\pi_R^{T}\|\pi_L^{T})\prod_{k=1}^{K}\left(1 + \frac{2\eta\sigma^2}{(1+\eta L)^2C_{U,k}}\right)^{\frac{-1}{\alpha}} 
\end{align*}
\end{proof}
\subsection{Tracking the log-Sobolev constants}
For a generic, $L$-smooth non-convex loss function $\mathcal{L}$, one can derive the following recurrence relation, $\forall k \geq 1$ upper bounding the log-Sobolev constants:
\begin{align}
    C_{1} &\leq (1+\eta L)^2C_{0} + 2\eta \sigma^2 \tag{\cref{lemma:log_sobolevity}}\nonumber\\
    C_{2} &\leq (1+\eta L)^4C_{0} + (1+\eta L)^2 2\eta\sigma^2 + (1+\eta L)^2\nonumber\\
    &\ldots\nonumber\\
    C_K &\leq \left(1+\eta L\right)^{2K}C_{0} + 2\eta\sigma^2\sum_{k=0}^{K-1}(1+\eta L)^2\nonumber\\
    &\leq (1+\eta L)^{2K}C_{0} + 2\eta\sigma^2\frac{(1+\eta L)^{2K} - 1}{(1+\eta L)^2 -1}\label{eq:non_cvx_sobolev}
\end{align}
If we add the assumption that the loss is convex, then the map $h(\theta) = \theta - \eta \nabla_{\theta}\mathcal{L}(\theta)$ is $1$-Lipschitz for $\eta < \frac{2}{L}$ \citep{hardt2016train} and we can reduce $(1+\eta L)$ to $1$ in the aforementioned bounds:
\begin{equation}\label{eq:cvx_bound}
    C_{K} \leq C_0 + 2K\eta\sigma^2
\end{equation}
Finally, assuming $m$-strong convexity yields that the map $h(\theta)$ is $1-\eta m$-Lipschitz, which allows for the following \textbf{contractive} recurrence on the log-Sobolev constants $\forall k\geq 1$ by setting $\eta < \frac{2}{m}(1-\frac{\sigma^2}{mC_0})$ \citep{chien_langevin_2024}:
\begin{align*}
    C_k &\leq (1-\eta m)^2C_{k-1} + 2\eta \sigma^2 \leq C_{k-1}\\
    C_k &\leq (1-\eta m)^{2K}C_0 + 2\eta\sigma^2\frac{(1-\eta m)^{2K} -1}{(1-\eta m)^2 -1} \leq C_0
\end{align*}
Thus, we have that $\forall t \in [0,\eta]$, $\pi_U^{K,1,t}$ satisfies a $C_{0}$ log-Sobolev inequality thus we have by \cref{lemma:gf_ratio}:
    \begin{align*}
    \frac{\partial D_{\alpha}(\pi_{R}^{T+K,1,t}\|\pi_{U}^{K,1,t})}{\partial t} &\leq -\frac{2\sigma^2}{\alpha C}D_{\alpha}(\pi_{R}^{T+K,1,t}\|\pi_{U}^{K,1,t})
\end{align*}
Thus, by Grönwall's inequality (\cref{lemma:gronwall}), we have $\forall t \in [0,\eta]$:
\begin{align*}
    \frac{\partial D_{\alpha}(\pi_{R}^{T+K,1,t}\|\pi_{U}^{K,1,t})}{\partial t} &\leq D_{\alpha}(h_{\sharp}\pi_R^{T+K}\|h_{\sharp}\pi_U^K)\exp\left(\int_{0}^t-\frac{2\sigma^2}{\alpha C}ds\right)\\
    &\leq  D_{\alpha}(h_{\sharp}\pi_R^{T+K}\|h_{\sharp}\pi_U^K)\exp\left(-\frac{2t\sigma^2}{\alpha C}  \right)
\end{align*}
Thus, by setting $t=\eta$ and using similar steps as the non convex proof above, we get the following result:
\begin{align*}
    D_{\alpha}(\pi_{R}^{T+K}\|\pi_U^K) &\leq D_{\alpha}(\pi_R^T\|\pi_L^T)\exp\left(-\frac{2K\eta\sigma^2}{\alpha C}\right)
\end{align*}
The message conveyed by the strongly convex proof is that if we have a universal iteration independent upper bound on the log Sobolev constants at each timestep of the PNGD updates, then we could have a more meaningful upper bound on the Rényi divergence.
The non convex \cref{eq:non_cvx_sobolev} and convex \cref{eq:cvx_bound} recurrence bounds are non contractive and iteration dependent, so they do not allow to establish a convergence rate for \cref{thm:unlearning_init_bound}. This is where the projection step of PNGD comes in handy, as it allows to leverage the geometry of the set $\Theta$ to get a more informative bound:
\begin{lemma}[Log Sobolev inequality on measures supported on a compact set \citep{chen2021dimension}, Corollary 1]\label{lemma:sobolev_compact}
    Let $\pi$ be a probability measure on $\mathbb{R}^d$ supported on a compact set $\Theta$ with radius $R\geq 0$. Then, for each $t \geq 0$, $\mu \ast \mathcal{N}(0,tI_d)$ satisfy a log sobolev inequality with constant $C$ controlled by:
    \begin{align*}
        C &\leq 6\left(4R^2 + t\right)\exp\left(\frac{4R^2}{t} \right)
    \end{align*}
\end{lemma}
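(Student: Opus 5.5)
We would prove the lemma (note that $\mu$ in the statement is the measure $\pi$) by viewing $\pi \ast \mathcal{N}(0,tI_d)$ as a \emph{Lipschitz perturbation of a Gaussian}. Then we combine the Bakry--\'Emery criterion with a Holley--Stroock-type bounded perturbation argument. Concretely, write $p_t(x) = \int \varphi_t(x-y)\,d\pi(y)$, where $\varphi_t$ is the density of $\mathcal{N}(0,tI_d)$. Factoring out $\varphi_t(x)$ gives
\begin{align*}
-\log p_t(x) = \frac{\|x\|^2}{2t} - \psi(x) + \mathrm{const}, \qquad \psi(x) = \log \int \exp\Big(\frac{\langle x,y\rangle}{t} - \frac{\|y\|^2}{2t}\Big)\,d\pi(y).
\end{align*}
The first term is $\tfrac1t$-strongly convex, so the Gaussian part alone satisfies a $t$ log-Sobolev inequality. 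The key structural fact is that $\nabla\psi(x) = \mathbb{E}[Y\mid X=x]/t$, where the posterior of $Y$ is supported on $\Theta$. Hence $\|\nabla\psi\|\le R/t$, i.e.\ $\psi$ is $(R/t)$-Lipschitz. Moreover $\nabla^2\psi = \mathrm{Cov}(Y\mid X=x)/t^2 \preceq (R^2/t^2) I_d$.

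In the next step we would dispose of the easy regime. If $t \geq 2R^2$, the Hessian bound gives $\nabla^2(-\log p_t) \succeq (\tfrac1t - \tfrac{R^2}{t^2})I_d \succeq \tfrac{1}{2t}I_d$. Bakry--\'Emery then yields a constant of order $2t$, which is already below $6(4R^2+t)e^{4R^2/t}$. For general $t$ we would split $\psi = \psi_1 + \psi_2$. Here $\psi_2$ is a concave/convex correction chosen so that $\tfrac{\|x\|^2}{2t}-\psi_2$ stays $\tfrac{1}{2t}$-strongly convex. For instance, take $\psi_2$ to be $\psi$ mollified by the quadratic $\tfrac{\|x\|^2}{4t}$ via an infimal convolution (Moreau envelope). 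The remainder $\psi_1$ is then \emph{bounded}: the Lipschitz constant $R/t$ together with the quadratic slack controls its oscillation by $\mathrm{osc}(\psi_1) \le \sup_{r}\big(2\tfrac{R}{t}r - \tfrac{r^2}{4t}\big)\lesssim \tfrac{4R^2}{t}$. Bakry--\'Emery applied to the $\tfrac{1}{2t}$-strongly convex part gives an LSI constant of order $t$ (plus an $R^2$ term coming from the shift of the envelope). Holley--Stroock then multiplies it by $e^{\mathrm{osc}(\psi_1)}$, producing a bound of the form $c\,(4R^2+t)\exp(4R^2/t)$.

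The main obstacle is the bookkeeping needed to land exactly on the stated constants $6$ and $4R^2$, rather than just the right functional form. If the Moreau-envelope split proves too lossy, we would instead follow the route of \citet{chen2021dimension}. That route writes $\pi\ast\mathcal{N}(0,tI_d) = (\pi\ast\mathcal{N}(0,sI_d))\ast\mathcal{N}(0,(t-s)I_d)$, views the result as a mixture of Gaussians $\mathcal{N}(y,(t-s)I_d)$ indexed by $y$, and applies a mixture LSI criterion. The chi-square divergence between mixture components $\mathcal{N}(y,\cdot)$ and $\mathcal{N}(y',\cdot)$ with $\|y-y'\|\le 2R$ equals $e^{4R^2/(t-s)}-1$, which explains the $\exp(4R^2/t)$ factor. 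We would then optimize over the split $s$ to obtain the prefactor $6(4R^2+t)$. Either way, the two essential inputs are the bound $\|\nabla\psi\|\le R/t$ (equivalently, the bounded pairwise chi-square) and the compactness of $\Theta$.
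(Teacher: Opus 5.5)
The paper does not prove this lemma itself. It imports Corollary 1 of Chen, Chewi and Niles-Weed, which writes $\pi\ast\mathcal{N}(0,tI_d)=\int\mathcal{N}(y,tI_d)\,d\pi(y)$ as a mixture of Gaussians. Each component satisfies a $t$-LSI, and the pairwise $\chi^2$-divergences are $e^{\|y-y'\|^2/t}-1\le e^{4R^2/t}-1$; the stated constant is then read off from their general mixture log-Sobolev theorem.

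Your primary route (Bakry--\'Emery plus Holley--Stroock after a Moreau-envelope split) is genuinely different, and it already completes the proof. The ``bookkeeping obstacle'' you flag does not exist, because the lemma is only an upper bound. Tracking the constants:
with $\psi_2(x)=\inf_y\{\psi(y)+\|x-y\|^2/(4t)\}$, the function $\|x\|^2/(4t)-\psi_2$ is a supremum of affine functions of $x$. Hence $\|x\|^2/(2t)-\psi_2$ is $\tfrac{1}{2t}$-strongly convex, and Bakry--\'Emery gives the constant $2t$, with no extra $R^2$ term. Since $\psi$ is $(R/t)$-Lipschitz, $0\le\psi-\psi_2\le\sup_{r\ge 0}\bigl(Rr/t-r^2/(4t)\bigr)=R^2/t$; your extra factor $2$ only loosens this to $4R^2/t$. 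Holley--Stroock then gives $C\le 2t\,e^{R^2/t}\le 6(4R^2+t)e^{4R^2/t}$ for every $t>0$, so the separate regime $t\ge 2R^2$ is also unnecessary.

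What each approach buys: the cited route is more general, covering any mixture of LSI components with uniformly bounded pairwise $\chi^2$ and needing no explicit potential. Yours is a short, self-contained argument tailored to Gaussian smoothing. It gives a sharper exponent, $R^2/t$ instead of $4R^2/t$, which would carry over to the constant $\tilde C$ in the paper's universal log-Sobolev bound.

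You can drop the fallback, which as written fails for $s>0$. The mixing law $\pi\ast\mathcal{N}(0,sI_d)$ is then not compactly supported, so there is no uniform bound $\|y-y'\|\le 2R$ and no uniform $\chi^2$ bound. Only $s=0$ yields the $e^{4R^2/t}$ factor.
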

\begin{proposition}[Universal bound on the log Sobolev constants of distributions induced by PNGD updates \citep{chien_langevin_2024}]\label{proposition:universal_sobolev}
    Suppose that $\mathcal{L}$ is $M$ Lipschitz. Let $\theta_0 \sim \pi_0 \in \mathcal{P}(\Theta)$ where $\Theta$ is a compact set of radius $R$ and denote by $\pi_k$ the distribution $\theta_{k}$, the $k$-th iterate of PNGD (\cref{eq:pngd-update}). Then, $\forall k \geq 0$, $\pi_k$ satisfies a log-Sobolev inequality with constant $C_k$ controlled by:
    \begin{align*}
        C_k &\leq 6\left(4(R+\eta M)^2 + 2\eta\sigma^2\right)\exp\left(\frac{4(R+\eta M)^2}{2\eta\sigma^2}  \right)
    \end{align*}
\end{proposition}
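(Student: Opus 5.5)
The plan is to combine \cref{lemma:sobolev_compact} with a decomposition of a single PNGD step. Write $\theta_k=\Pi_\Theta[\tilde\theta_k]$ with
\begin{align*}
\tilde\theta_k=\theta_{k-1}-\eta\nabla\mathcal{L}(\theta_{k-1})+\sqrt{2\eta\sigma^2}W_{k-1},
\end{align*}
so the pre-projection iterate is the sum of a deterministic map of $\theta_{k-1}$ and independent Gaussian noise of variance $2\eta\sigma^2$. In this form, $\mathcal{L}(\tilde\theta_k)=\nu_{k}\ast\mathcal{N}(0,2\eta\sigma^2 I_d)$, where $\nu_k=(\mathrm{id}-\eta\nabla\mathcal{L})_\sharp\pi_{k-1}$.

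The first step is to show that $\nu_k$ has compact support of radius $R+\eta M$. Since $\pi_{k-1}$ is supported on $\Theta$ (for $k-1\ge 1$ because of the projection, and for $k-1=0$ by the assumption $\pi_0\in\mathcal{P}(\Theta)$), and since $M$-Lipschitzness gives $\|\nabla\mathcal{L}\|\le M$, we get $\|\theta-\eta\nabla\mathcal{L}(\theta)\|\le R+\eta M=\tau$ for every $\theta\in\Theta$ (taking $\Theta$ centered at the origin). Applying \cref{lemma:sobolev_compact} with radius $\tau$ and $t=2\eta\sigma^2$ then shows that $\mathcal{L}(\tilde\theta_k)$ satisfies an LSI with constant at most $6(4\tau^2+2\eta\sigma^2)\exp\bigl(4\tau^2/(2\eta\sigma^2)\bigr)$.

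The second step transfers this constant through the projection. By \cref{lemma:log_sobolevity} (third item), pushing forward under $\Pi_\Theta$ preserves the LSI constant. The justification is that $\Pi_\Theta$ is $1$-Lipschitz for convex $\Theta$, and Lipschitz pushforwards scale the LSI constant by the squared Lipschitz constant. Hence $\pi_k$ inherits the same bound for all $k\ge1$.

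The case $k=0$ is the delicate point, and I expect it to be the main obstacle. The paper assumes $\pi_0$ satisfies a $C_0$-LSI, and the compact-support argument does not apply to $\pi_0$ itself since no Gaussian smoothing has occurred yet. I would treat it separately: either restrict the claim to $k\ge1$, or bound $C_0$ by taking the maximum of $C_0$ and the displayed constant. The other subtle point is the projection step. Strictly speaking, $\Pi_\Theta$ is not a diffeomorphism, and the image measure may put mass on $\partial\Theta$, so I would invoke the Lipschitz-pushforward stability of LSI, which is valid for any $1$-Lipschitz map, rather than a change-of-variables argument. The result is a bound that is uniform in $k$, as claimed.
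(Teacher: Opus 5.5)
Your proposal is correct and is the standard argument behind this cited result, which the paper states without proof after setting up exactly \cref{lemma:sobolev_compact} and the projection item of \cref{lemma:log_sobolevity}. The argument runs as follows: the pre-projection iterate is the pushforward of a measure on $\Theta$ under $\theta\mapsto\theta-\eta\nabla\mathcal{L}(\theta)$, which is supported in a ball of radius $R+\eta M$, convolved with $\mathcal{N}(0,2\eta\sigma^2 I_d)$; the $1$-Lipschitz projection then preserves the resulting constant. Your caveat on $k=0$ is also well founded, since no such bound holds for an arbitrary $\pi_0\in\mathcal{P}(\Theta)$, so the claim should be read for $k\ge 1$; and invoking Lipschitz-pushforward stability rather than a change of variables correctly handles the mass the projection places on $\partial\Theta$.
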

We can thus derive a similar bound to the strongly convex setting, for the non convex/convex settings:\\
Using \cref{proposition:universal_sobolev}, we have $\forall k \geq 0$ that $\pi_U^K$ satisfies a $\Tilde{C} =  6\left(4(R+\eta M)^2 + 2\eta\sigma^2\right)\exp\left(\frac{4(R+\eta M)^2}{2\eta\sigma^2}  \right)$ log Sobolev inequality. Thus, using \cref{lemma:gf_ratio}, we have:
\begin{align*}
    \frac{\partial D_{\alpha}(\pi_{R}^{T+K,1,t}\|\pi_{U}^{K,1,t})}{\partial t} &\leq -\frac{2\sigma^2}{\alpha \tilde{C}}D_{\alpha}(\pi_{R}^{T+K,1,t}\|\pi_{U}^{K,1,t})
\end{align*}
Thus, by Grönwall's inequality (\cref{lemma:gronwall}), we have $\forall t \in [0,\eta]$:
\begin{align*}
    \frac{\partial D_{\alpha}(\pi_{R}^{T+K,1,t}\|\pi_{U}^{K,1,t})}{\partial t} &\leq D_{\alpha}(h_{\sharp}\pi_R^{T+K}\|h_{\sharp}\pi_U^K)\exp\left(\int_{0}^t-\frac{2\sigma^2}{\alpha \Tilde{C}}ds\right)\\
    &\leq  D_{\alpha}(h_{\sharp}\pi_R^{T+K}\|h_{\sharp}\pi_U^K)\exp\left(-\frac{2t\sigma^2}{\alpha \Tilde{C}}  \right)
\end{align*}
Finally, similarly to the strongly convex proofs, we can deduce that:
\begin{align*}
    D_{\alpha}(\pi_R^{T+K}\|\pi_U^K) &\leq D_{\alpha}(\pi_R^T\|\pi_L^T)\exp\left(-\frac{2K\eta\sigma^2}{\alpha \Tilde{C}}  \right)
\end{align*}
\section{On training with public data}
\subsection{Proof of \cref{theorem:learning_retraining}}\label{app:learning_retraining_proof}
\learningretraining*
\begin{proof}
The following proof is an adaptation of the proof of Theorem 3.2 in \citet{chien_langevin_2024} to the asymmetric data setting.    

Consider the following updates done during training. Recall that we are using full batch projected noisy gradient descent:
\begin{align}
    \theta^{t+1}_{L} &= \Pi_{\Theta}\left[\theta^{t}_{L} + \eta\nabla\mathcal{L}_{D_{\mathrm{pub}}\cup D_{\mathrm{priv}}}(\theta^{t}_{L}) + \sqrt{2\eta\sigma^{2}}W_{t} \tag{$W_{t} \sim \mathcal{N}(0,I_{d})$}\right] \\
    \theta^{t+1}_{R} &= \Pi_{\Theta}\left[\theta^{t}_{R} + \eta\nabla\mathcal{L}_{D_{\mathrm{retain}}}(\theta^{t}_{R}) + \sqrt{2\eta\sigma^{2}}W_{t} \tag{$W_{t} \sim \mathcal{N}(0,I_{d})$}\right]
\end{align}
Let's divide each optimization step into the following:
\begin{align*}
    \theta^{t,1}_{L} &= \theta^{t}_{L} + \eta\nabla\mathcal{L}_{D_{\mathrm{pub}}\cup D_{\mathrm{priv}}}(\theta^{t}_{L}) + \sqrt{\eta\sigma^{2}}W_{t}  \\
    \theta^{t,1}_{R} &= \theta^{t}_{R} + \eta\nabla\mathcal{L}_{D_{\mathrm{retain}}}(\theta^{t}_{R}) + \sqrt{\eta\sigma^{2}}W_{t}
\end{align*}
Therefore, we can write
\begin{align}
    \theta^{t+1}_{L} &= \Pi_{\Theta}\left[\theta^{t,1}_{L} +\sqrt{\eta\sigma^{2}}W_{t}\right] \\
    \theta^{t+1}_{R} &= \Pi_{\Theta}\left[\theta^{t,1}_{R} +\sqrt{\eta\sigma^{2}}W_{t}\right].
\end{align}
Let $\pi^{t}_{R},\pi^{t,1}_{R},\pi^{t}_{L},\pi^{t,1}_{L}$ be the distributions of respectively $\theta^{t}_{R},\theta^{t,1}_{R},\theta^{t}_{L},\theta^{t,1}_{L}$\\

\textbf{The main question we try to tackle here is: what is }$\mathbf{D_{\alpha}(\pi_{R}^{t}\| \pi_{L}^{t})}$ ? 
\\
We first compare the distributions $\pi_{R}^{t,1}$ and $\pi_{L}^{t,1}$. By composition theorem of the Gaussian mechanism for Rényi Differential privacy \citep{mironov_renyi_2017}, and equivalently for the Rényi divergence, we have:
\begin{align}
    \frac{\renyito}{\alpha} &\leq \frac{\renyit}{\alpha} + \frac{\Delta_{F}^{2}}{2\sigma^{2}}
\end{align}
where $\Delta_{F}$ is the $l_{2}$ sensitivity of the gradient update. For the next computations, let $n_{\mathrm{pub}}$ denote the number of public points, $n_{\mathrm{forget}}$ denote the number of points to forget, and $n_{\mathrm{r-priv}}$ denote the number of \textit{remaining} private points in the retain set.
Computing the sensitivity in the asymmetric setting yields:
\begin{align*}
    \Delta_{F} &= \max_{\theta} \eta\|\nabla \mathcal{L}_{D_{\mathrm{retain}}}(\theta) - \nabla \mathcal{L}_{D_{\mathrm{pub}}\cup D_{\mathrm{priv}}}(\theta)   \| \\
    &= \max_{\theta} \eta \|\frac{1}{n_{\mathrm{pub}}+n_{\mathrm{r-priv}}}\sum_{d_{i} \in \text{I}\cup\text{II}} \nabla \ell(\theta,d_{i}) - \frac{1}{n_{\mathrm{pub}}+n_{\mathrm{r-priv}}+n_{\mathrm{forget}}}\sum_{d_{i} \in \text{I}\cup\text{II}\cup\text{III}}\nabla \| \ell(\theta,d_{i})    \| \\
    &\leq \eta\left(\frac{1}{n_{\mathrm{pub}}+n_{\mathrm{r-priv}}} - \frac{1}{n_{\mathrm{pub}}+n_{\mathrm{r-priv}}+n_{\mathrm{forget}}}\right)\sum_{d_{i} \in \text{I}\cup\text{II}} \| \nabla \ell(\theta,d_{i})\| \\
    &\quad+ \frac{\eta}{n_{\mathrm{pub}}+n_{\mathrm{r-priv}}+n_{\mathrm{forget}}}\sum_{d_{i} \in \text{I}\cup\text{II}\cup\text{III}} \| \nabla \ell(\theta,d_{i})\| \\
    &\leq M\eta(n_{\mathrm{pub}}+n_{\mathrm{r-priv}})\left(\frac{1}{n_{\mathrm{pub}}+n_{\mathrm{r-priv}}} - \frac{1}{n_{\mathrm{pub}}+n_{\mathrm{r-priv}}+n_{\mathrm{forget}}}\right) + \frac{n_{\mathrm{forget}}M\eta}{n_{\mathrm{pub}}+n_{\mathrm{r-priv}}+n_{\mathrm{forget}}} \\
    &\leq \underbrace{\frac{2M\eta n_{\mathrm{forget}}}{n_{\mathrm{pub}}+n_{\mathrm{r-priv}}+n_{\mathrm{forget}}}}_{\varepsilon}
\end{align*}
\begin{lemma}\label{lemma:yeshokri}\citep{ye_differentially_2022}
    For any distributions $\xi_{t},\xi_{t}'$ both satisfying $C_{t,1}$-LSI, we have:
    \begin{align*}
        \frac{D_{\alpha}(\xi_{t}*\mathcal{N}(0,\eta\sigma^{2}I),\xi_{t}'*\mathcal{N}(0,\eta\sigma^{2}I))} {\alpha} \leq \frac{D_{\alpha(t)}(\xi_{t},\xi_{t}')}{\alpha(t)}\left(1 + \frac{\eta\sigma^{2}}{C_{t,1}}  \right)^{-1}
    \end{align*}
    where $\alpha(t) = \frac{\alpha-1}{1 + \frac{\eta\sigma^{2}}{C_{t,1}}}$
\end{lemma}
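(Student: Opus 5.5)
The plan is to run a heat-flow interpolation and track the Rényi divergence while letting its order vary along the flow, in the style of \citet{vempala_rapid_2019} and \citet{ye_differentially_2022}. For $s\in[0,\eta\sigma^2]$, set $P_s=\xi_t\ast\mathcal{N}(0,sI_d)$ and $Q_s=\xi_t'\ast\mathcal{N}(0,sI_d)$, so the endpoint $s=\eta\sigma^2$ gives the left-hand side.

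By the second item of \cref{lemma:log_sobolevity}, $Q_s$ (and $P_s$) satisfies an LSI with constant $C_{t,1}+s$. For the final constant we will freeze the weakest constant over the interval. We will write the target with the order $\alpha(t)$ at time $0$ and order $\alpha$ at time $\eta\sigma^2$; the statement's $\alpha(t)$ should probably read $1+\frac{\alpha-1}{1+\eta\sigma^2/C_{t,1}}$, and the argument will show which parametrization is consistent.

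\textbf{Step 1: derivative in $s$ at fixed order.} Introduce a time-dependent order $\beta(s)$ and the functional $\Phi(s)=D_{\beta(s)}(P_s\|Q_s)/\beta(s)$. Its total derivative has two parts. The first is the heat-flow derivative at fixed order, given by \cref{lemma:gaussian_convolution_derivative} after rescaling time ($2t\sigma^2\leftrightarrow s$):
\begin{align*}
\partial_s D_\beta(P_s\|Q_s)=-\frac{\beta}{2}\frac{G_\beta(P_s\|Q_s)}{F_\beta(P_s\|Q_s)} .
\end{align*}

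\textbf{Step 2: derivative in the order.} The second part is $\beta'(s)\,\partial_\beta\big(D_\beta/\beta\big)$. Writing $r=p_s/q_s$ and the tilted measure $\mu_\beta\propto r^\beta q_s$, the derivative $\partial_\beta\big((\beta-1)D_\beta\big)$ is $\mathbb{E}_{\mu_\beta}[\log r]$, and $\partial_\beta\big(D_\beta/\beta\big)$ becomes an entropy-type quantity,
\begin{align*}
\partial_\beta\Big(\frac{D_\beta}{\beta}\Big)=\frac{1}{\beta^2(\beta-1)}\,\mathrm{Ent}_{q_s}\big(r^\beta\big)\Big/F_\beta\quad\text{(up to the exact rearrangement).}
\end{align*}

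\textbf{Step 3: apply the LSI.} Apply the LSI of $Q_s$ to the test function $f=r^{\beta/2}$:
\begin{align*}
\mathrm{Ent}_{q_s}(r^\beta)\le 2(C_{t,1}+s)\,\mathbb{E}_{q_s}\|\nabla r^{\beta/2}\|^2=\frac{(C_{t,1}+s)\beta^2}{2}\,G_\beta .
\end{align*}
This converts the order-derivative term into a multiple of $G_\beta/F_\beta$. Now choose $\beta(s)$ to solve the ODE $\beta'(s)=(\beta(s)-1)/(C_{t,1}+s)$, or with $C_{t,1}$ frozen, $\beta-1\propto(1+s/C_{t,1})$. With this choice the positive contribution from increasing the order is exactly absorbed by the dissipation from Step 1, and the remaining terms give a differential inequality of the form
\begin{align*}
\Phi'(s)\le-\frac{1}{C_{t,1}+s}\,\Phi(s).
\end{align*}

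\textbf{Step 4: integrate.} Grönwall (\cref{lemma:gronwall}) on $[0,\eta\sigma^2]$, using the frozen constant $C_{t,1}$, yields the factor $(1+\eta\sigma^2/C_{t,1})^{-1}$. The orders are matched by $\beta(\eta\sigma^2)=\alpha$, which gives $\beta(0)-1=(\alpha-1)/(1+\eta\sigma^2/C_{t,1})$.

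\textbf{Main obstacle.} I expect the hardest part to be Step 2 combined with Step 3: computing $\partial_\beta(D_\beta/\beta)$ exactly and checking that, after the LSI substitution, the sign and constants cancel so the coefficient of $G_\beta/F_\beta$ is nonpositive with the chosen ODE. If the constants do not line up, the fallback is to first bound the order-derivative term by the entropy of the tilted measure $\mu_\beta$ relative to $Q_s$. That reduces the computation to the LSI/Fisher-information comparison already used in \cref{lemma:gf_ratio}, and I would then tune $\beta'$ accordingly. A secondary technical point is justifying differentiation under the integral; positivity and smoothness of the heat-flow densities handle this.
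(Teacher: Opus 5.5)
The paper gives no proof of this lemma. It imports it from \citet{ye_differentially_2022} and only uses it inside the recursion for \cref{theorem:learning_retraining}. Your heat-flow argument with a moving order is a correct route, and it closes with exactly the stated factor once two slips are fixed.

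\textbf{First slip: Step 2 is missing a term.} With $r=p_s/q_s$ and $\mathrm{Ent}_{q_s}(r^\beta)/F_\beta=D_\beta+\beta(\beta-1)\partial_\beta D_\beta$, the correct identity is
\begin{align*}
\partial_\beta\Big(\frac{D_\beta}{\beta}\Big)=\frac{\mathrm{Ent}_{q_s}(r^\beta)}{\beta^2(\beta-1)F_\beta}-\frac{D_\beta}{\beta(\beta-1)} .
\end{align*}
For $\beta>1$ and $\beta'\ge 0$ you may replace the entropy by its LSI bound $\frac{(C_{t,1}+s)\beta^2}{2}G_\beta$. This gives
\begin{align*}
\Phi'(s)\le\frac{G_\beta}{F_\beta}\Big(-\frac12+\frac{\beta'(C_{t,1}+s)}{2(\beta-1)}\Big)-\frac{\beta'}{\beta(\beta-1)}D_\beta .
\end{align*}
The choice $\beta'=(\beta-1)/(C_{t,1}+s)$ spends \emph{all} of the heat-flow dissipation on cancelling the entropy term. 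The contraction $\Phi'\le-\Phi/(C_{t,1}+s)$ then comes solely from the second term. With your Step 2 formula as written you would only get $\Phi'\le 0$. So the obstacle you flag is resolved by this extra term, and the fallback is unnecessary.

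\textbf{Second slip: drop the freezing.} Freezing at the weakest constant $C_{t,1}+\eta\sigma^2$ only yields $\exp\big(-\eta\sigma^2/(C_{t,1}+\eta\sigma^2)\big)$, which is strictly weaker than $(1+\eta\sigma^2/C_{t,1})^{-1}$. Freezing at $C_{t,1}$ is unjustified: for $\xi_t'=\mathcal{N}(0,C_{t,1}I)$, the optimal LSI constant of $\xi_t'\ast\mathcal{N}(0,sI)$ is exactly $C_{t,1}+s$.

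Keeping the exact constant $C_{t,1}+s$, Gr\"onwall gives $\exp\big(-\int_0^{\eta\sigma^2}\frac{ds}{C_{t,1}+s}\big)=(1+\eta\sigma^2/C_{t,1})^{-1}$. The order ODE integrates to $\beta(s)-1=(\beta(0)-1)(1+s/C_{t,1})$. That is the formula you actually wrote; it is the exact solution, not a frozen one.

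Matching $\beta(\eta\sigma^2)=\alpha$ forces $\beta(0)=1+\frac{\alpha-1}{1+\eta\sigma^2/C_{t,1}}$, so your correction of the order is the right reading. The literal $\alpha(t)$ in the statement, which can fall below $1$, does not follow from this argument. The map $\gamma\mapsto D_\gamma/\gamma$ is not monotone, so you cannot pass from one order to the other, and the sign step above needs $\beta>1$ in any case. Note also that only the second argument $\xi_t'$ needs to satisfy the LSI.
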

By combining the data processing inequality (projection) and \cref{lemma:yeshokri}, we get the following recurrence inequality:
\begin{align*}
    \frac{D_{\alpha}(\pi_{R}^{T+1}\| \pi_{L}^{T+1})}{\alpha} &\leq \left(\frac{D_{\alpha(T)}(\pi_{R}^{T}\|\pi_{L}^{T})}{\alpha(T)} + \frac{\varepsilon^{2}}{2\sigma^{2}}\right)\left(1+\frac{\eta\sigma^{2}}{C_{T,1}}\right)^{-1}\\
    &= \frac{\varepsilon^2}{2\sigma^2}\left( 1+\frac{\eta\sigma^{2}}{C_{T,1}}  \right)^{-1} + \frac{D_{\alpha(T)}(\pi_{R}^{T}\|\pi_{L}^{T})}{\alpha(T)}\left(1+\frac{\eta\sigma^{2}}{C_{T,1}}\right)^{-1}\\
    &\leq \frac{\varepsilon^2}{2\sigma^2}\left( 1+\frac{\eta\sigma^{2}}{C_{T,1}}  \right)^{-1} + \left(\frac{D_{\alpha(T-1)}(\pi_{R}^{T}\|\pi_{L}^{T})}{\alpha(T-1)} + \frac{\varepsilon^{2}}{2\sigma^{2}}\right)\left(1+\frac{\eta\sigma^{2}}{C_{T,1}}\right)^{-1}\left(1+\frac{\eta\sigma^{2}}{C_{T-1,1}}\right)^{-1}\\
    &\leq \frac{\varepsilon^2}{2\sigma^2}\left[B(T) + B(T-1)\right] + B(T-2)\left(\frac{D_{\alpha(T-2)}(\pi_{R}^{T}\|\pi_{L}^{T})}{\alpha(T-2)} + \frac{\varepsilon^{2}}{2\sigma^{2}}\right)\tag{where $B(t) = \prod_{k=t}^{T}\left(1+\frac{\eta\sigma^2}{C_{k,1}}\right)^{-1}$} \\
    &\leq \frac{\varepsilon^2}{2\sigma^2}\sum_{i=1}^{T}B(i) + B(0)\left(\frac{D_{\alpha(0)}(\pi_{R}^{T}\|\pi_{L}^{T})}{\alpha(0)} + \frac{\varepsilon^{2}}{2\sigma^{2}}\right)\\
    &\leq \frac{\varepsilon^2}{2\sigma^2}\sum_{i=0}^{T}B(i) \tag{since $D_{\alpha(t)}(\pi_0\|\pi_0) = 0$}\\
    &= \frac{\varepsilon^2}{2\sigma^2}\sum_{t=0}^{T} \prod_{t'=t}^{T}\left(1+\frac{\eta\sigma^2}{C_{t',1}}   \right)^{-1}
\end{align*}
The upper bound on the log Sobolev constants can be tracked in a similar fashion as in \cref{proposition:universal_sobolev} because of the projection onto the compact set $\Theta$.
\end{proof}
\cref{cor:non_vacuous_divergence} follows immediately from the previous theorem:
\constantfraction*
Starting with the bound of \cref{theorem:learning_retraining}, we have:
\begin{align*}
    D_{\alpha}(\pi_{R}^{T}\| \pi_{L}^{T})
   &\leq \frac{2\alpha M^2\eta^2 n_{\mathrm{forget}}^2}{(n_{\mathrm{pub}}+n_{\mathrm{priv}})^2\sigma^{2}}\sum_{t=1}^{T-1}\prod_{t'=t}^{T-1}h(t',\eta,\sigma).
\end{align*}
\subsection{Proof of \cref{cor:non_vacuous_divergence}}
One sufficient condition to satisfy an upper bound $\varepsilon$ on the R\'{e}nyi divergence is:
\begin{align*}
    \frac{2\alpha M^2\eta^2 n_{\mathrm{forget}}^2}{(n_{\mathrm{pub}}+n_{\mathrm{priv}})^2\sigma^{2}}\sum_{t=1}^{T-1}\prod_{t'=t}^{T-1}h(t',\eta,\sigma) &\leq \varepsilon \\
    \iff \frac{2\alpha M^2\eta^2 n_{\mathrm{forget}}^2}{(n_{\mathrm{pub}}+n_{\mathrm{priv}})^2\varepsilon}\sum_{t=1}^{T-1}\prod_{t'=t}^{T-1}h(t',\eta,\sigma) &\leq \sigma^2.
\end{align*}
Since $\frac{\nf}{\npub + \np} = \frac{\nf}{\np} \frac{\np}{\npub + \np}$, we set $c = \frac{\nf}{\np}$ and can thus write:
\begin{align}
    \sigma^2 &\geq \frac{2\alpha M^2\eta^2 c^2}{\varepsilon}(\sum_{t=1}^{T-1}\prod_{t'=t}^{T-1}h(t',\eta,\sigma)) \left(\frac{\np}{\npub + \np}\right)^2
\end{align}
When the loss is $m$-strongly convex, we can also derive a right hand side that is completely independent of $\sigma$:
\begin{align}\label{eq:strgly_cvx_sigma}
    \sigma^2 &\geq \frac{4\alpha c^2 M^2 (1-\exp(-m\eta T))}{\varepsilon m }\left(  \frac{\np}{\npub + \np} \right)^2
\end{align}
Compressed Version\subsection{Numerical Setup for Strongly Convex Losses}\label{app:toy_plot_details}Following \citet{chien_langevin_2024}, we evaluate our bounds using binary logistic regression on $D = \{(x_i,y_i)\}_{i=1}^{n}$:\begin{align*}\mathcal{L}(\theta,D) = -\frac{1}{n}\textstyle\sum_{i=1}^{n}\log\left(s(y_i\theta^Tx_i)\right) + \frac{\lambda}{2}|\theta|_2^2\end{align*}where $s(\cdot)$ is the sigmoid function. To ensure $M$-Lipschitzness, gradients are clipped during training, which preserves the $\lambda$-strong convexity and $(\frac{1}{4}+\lambda)$-smoothness of the objective \citep{ye_differentially_2022}.For the results in \cref{fig:sigmas_strgly_cvx}, we adopt the MNIST configuration from \citet{chien_langevin_2024}: $\lambda=0.0119$, $T=10^4$, $\np=3000$, $M=1$, $\alpha=2$, and $\eta = 1/L$. We set the target privacy budget to $\varepsilon = 1$ and vary the forget set fraction to compute \cref{eq:strgly_cvx_sigma}.

\section{Proof of \cref{prop:unlearning_da_risk}}\label{app:unlearning_da_risk}
\begin{proposition*}
    Assuming the data generating distributions share the same support, that the weight space $\Theta$ is compact and that the loss is $M$-Lipschitz wrt $\theta$, we have the following upper bound on the generalization error on the \textit{private data} after performing $K$ iterations of unlearning, and initializing a weight $\theta_0$ from $\pi_L^{T}$:
    \begin{align*}
        \mathbb{E}_{\theta \sim \pi_U}\left[\mathbb{E}_{x \sim P_{\mathrm{priv}}}[\mathcal{L}(\theta,x)]\right] &\leq \underbrace{\exp\left(\frac{n_{\mathrm{pub}}}{n_{\mathrm{pub}} + n_{\mathrm{retain}}}D_{\infty}(P_{\mathrm{priv}}\|P_{\mathrm{pub}}) \right)}_{\text{distribution mismatch penalty}}\mathbb{E}_{\theta \sim \pi_R}\left[\mathbb{E}_{d \sim P_{\mathrm{train}}}[\mathcal{L}(\theta,d)]\right] +\\ &\quad M\times diam(\Theta) \times \underbrace{\sqrt{\frac{1}{2}D_{\alpha}(\pi_R\|\pi_U)}}_{\text{unlearning approximation error}}
    \end{align*}
    where $D_{\infty}(P\|Q) = \log\left(\esssup_{x\sim Q} \frac{p(x)}{q(x)}\right)$ is the infinite Rényi divergence (worst case regret \citep{erven_renyi_2014}) and $p_{\mathrm{train}}$ denotes the mixture of distributions $D_{\mathrm{pub}}$ and $D_{\mathrm{priv}}$ used for training the model.
\end{proposition*}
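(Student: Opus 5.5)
We split the target quantity into two pieces by adding and subtracting the retrained model's private risk:
\[
\mathbb{E}_{\pi_U^K}[\mathcal{L}_{P_{\mathrm{priv}}}] = \bigl(\mathbb{E}_{\pi_U^K}[\mathcal{L}_{P_{\mathrm{priv}}}] - \mathbb{E}_{\pi_R^{T+K}}[\mathcal{L}_{P_{\mathrm{priv}}}]\bigr) + \mathbb{E}_{\pi_R^{T+K}}[\mathcal{L}_{P_{\mathrm{priv}}}].
\]
The first bracket will give the unlearning approximation error. The second term will give the mismatch penalty times the retrained risk on the training mixture.

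\textbf{First bracket.} Since $\ell(\cdot,x)$ is $M$-Lipschitz in $\theta$ for every $x$, the map $\theta\mapsto \mathcal{L}_{P_{\mathrm{priv}}}(\theta)$ is also $M$-Lipschitz; this follows by averaging over $x$. Kantorovich--Rubinstein duality (\cref{thm:kr_duality}) then bounds the bracket by $M\,W_1(\pi_U^K,\pi_R^{T+K})$. Because both measures live on the compact set $\Theta$, we have
\[
W_1\le \mathrm{diam}(\Theta)\,\mathrm{TV}(\pi_U^K,\pi_R^{T+K}).
\]
This holds because the cost $\|\theta-\theta'\|$ is at most $\mathrm{diam}(\Theta)\,\mathbf{1}_{\theta\neq\theta'}$, and the optimal coupling for the indicator cost attains TV. Pinsker's inequality gives $\mathrm{TV}\le\sqrt{\tfrac12 D_{KL}(\pi_R^{T+K}\|\pi_U^K)}$, using TV symmetry to put the divergence in the direction that matches \cref{thm:unlearning_init_bound}. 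Monotonicity of Rényi divergence in its order (\citep{erven_renyi_2014}) gives $D_{KL}\le D_\alpha$ for $\alpha\ge1$. Together these produce $M\,\mathrm{diam}(\Theta)\sqrt{\tfrac12 D_\alpha(\pi_R^{T+K}\|\pi_U^K)}$.

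\textbf{Second term.} We write the training mixture as
\[
P_{\mathrm{train}}=\lambda P_{\mathrm{pub}}+(1-\lambda)P_{\mathrm{priv}},\qquad \lambda=\tfrac{n_{\mathrm{pub}}}{n_{\mathrm{pub}}+n_{\mathrm{retain}}}.
\]
Using the common-support assumption and nonnegativity of the loss, importance weighting gives
\[
\mathcal{L}_{P_{\mathrm{priv}}}(\theta)\le \bigl\|\tfrac{p_{\mathrm{priv}}}{p_{\mathrm{train}}}\bigr\|_\infty \mathcal{L}_{P_{\mathrm{train}}}(\theta)
\]
pointwise in $\theta$. It remains to show
\[
\log\bigl\|p_{\mathrm{priv}}/p_{\mathrm{train}}\bigr\|_\infty \le \lambda D_\infty(P_{\mathrm{priv}}\|P_{\mathrm{pub}}).
\]
We have $p_{\mathrm{train}}/p_{\mathrm{priv}} = (1-\lambda)+\lambda\, p_{\mathrm{pub}}/p_{\mathrm{priv}}$. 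By the weighted AM--GM inequality this is at least $(p_{\mathrm{pub}}/p_{\mathrm{priv}})^{\lambda}$. Inverting, $p_{\mathrm{priv}}/p_{\mathrm{train}}\le (p_{\mathrm{priv}}/p_{\mathrm{pub}})^{\lambda}\le e^{\lambda D_\infty(P_{\mathrm{priv}}\|P_{\mathrm{pub}})}$. Taking the expectation over $\pi_R^{T+K}$ and adding the first bracket yields the claim.

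\textbf{Main obstacle.} The subtle step is the compactness-based passage from $W_1$ to a Rényi divergence, including the correct direction of the divergence and the restriction $\alpha\ge1$ needed for $D_{KL}\le D_\alpha$. I would state $\alpha\ge 1$ explicitly. The AM--GM exponent trick is what produces the $\lambda$-weighted exponent; a cruder bound would lose the $n_{\mathrm{pub}}/(n_{\mathrm{pub}}+n_{\mathrm{retain}})$ factor. The argument also needs $\ell\ge0$ (as with cross-entropy), which I would make explicit as a standing assumption.
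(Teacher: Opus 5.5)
Your proof is correct, and it differs from the paper's in the order of the two steps. That difference matters.

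The paper first proves the Kantorovich--Rubinstein/TV/Pinsker bound for the \emph{mixture} risk $\mathcal{L}_{P_{\mathrm{train}}}$ (\cref{prop:unlearning_risk}). It then applies the importance-weighting inequality under $\pi_U$ and substitutes. That chain actually yields
$e^{\lambda D_\infty(P_{\mathrm{priv}}\|P_{\mathrm{pub}})}\bigl(\mathbb{E}_{\pi_R}[\mathcal{L}_{P_{\mathrm{train}}}] + M\,\mathrm{diam}(\Theta)\sqrt{\tfrac12 D_\alpha(\pi_R\|\pi_U)}\bigr)$.
The paper's final display drops the factor $e^{\lambda D_\infty}\ge 1$ on the approximation-error term without justification.

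You instead apply duality directly to the $M$-Lipschitz private risk $\mathcal{L}_{P_{\mathrm{priv}}}$, and importance-weight only the retrained term. This gives exactly the stated inequality, with the mismatch penalty multiplying only $\mathbb{E}_{\pi_R}[\mathcal{L}_{P_{\mathrm{train}}}]$. What the paper's ordering buys is the intermediate bound on the mixture risk, which it wants for its own sake.

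Your weighted AM--GM step is the pointwise content of the convexity of $D_\infty$ in its second argument, together with $D_\infty(P_{\mathrm{priv}}\|P_{\mathrm{priv}})=0$. The paper simply cites that convexity from van Erven--Harremo\"es, so on this step the two routes agree.

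You also make explicit three things the paper uses silently:
\begin{itemize}
\item $\ell\ge 0$, which is needed to pull the essential supremum out of the expectation;
\item $\alpha\ge 1$, which is needed for $D_{KL}\le D_\alpha$;
\item the symmetry of TV, which justifies writing the divergence as $D_\alpha(\pi_R\|\pi_U)$. The paper's intermediate proposition has $D_\alpha(\pi_U\|\pi_R)$ and switches direction without comment.
\end{itemize}
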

In order to prove \cref{prop:unlearning_da_risk}, we will use the following quantities to define a set of preliminary lemmas.
\subsubsection{Performance on the training distribution mixture}
{\label{app:mixture_training}}
\begin{definition}[Wasserstein distance]
The Wasserstein-1 distance is defined as 
\begin{align*} W_1(\mu, \nu) = \inf_{\gamma \in \Pi(\mu, \nu)} \int_{\mathcal{X} \times \mathcal{X}} d(x, y) \, d\gamma(x, y), \end{align*} where: \begin{itemize} \item \(\mu\) and \(\nu\) are probability measures on a metric space \((\mathcal{X}, d)\), \item \(d(x, y)\) is the distance between points \(x, y \in \mathcal{X}\), \item \(\Pi(\mu, \nu)\) is the set of all couplings of \(\mu\) and \(\nu\), i.e., the set of joint distributions \(\gamma\) on \(\mathcal{X} \times \mathcal{X}\) such that the marginals of \(\gamma\) are \(\mu\) and \(\nu\): \[ \int_{\mathcal{X}} \gamma(x, y) \, dy = \mu(x), \quad \int_{\mathcal{X}} \gamma(x, y) \, dx = \nu(y). \] \end{itemize}
\end{definition}
\begin{definition}[Total Variation Distance]
Let $P$ and $Q$ be two probability measures on a measurable space $(\Omega, \mathcal{F})$. The \textbf{total variation distance} between $P$ and $Q$ is defined as
\begin{align}
TV(P, Q) &= \sup_{A \in \mathcal{F}} |P(A) - Q(A)| \\
&= \frac{1}{2} \int_{\Omega} |dP - dQ| \\
&= \frac{1}{2} \|P - Q\|_{TV}.
\end{align}
\end{definition}
\begin{theorem}\label{thm:kr_duality}
    (Kantorovich Rubinstein's duality, \citep{villani2009optimal}, Theorem 5.10) If $\mu,\nu$ have a bounded support $\Omega$, then
    \begin{align}
        W_{1}(\mu,\nu) &= \sup_{\|h\|_{L}\leq 1} \mathbb{E}_{x\sim \mu}[h(x)] - \mathbb{E}_{y \sim \nu}[h(y)],
    \end{align}
    where $\|h\|_{L}\leq 1$ denotes the set of 1-Lipschitz functions on $\Omega$
\end{theorem}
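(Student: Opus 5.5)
We split the statement into an easy inequality (weak duality) and a hard one (absence of a duality gap), and then reduce the general Kantorovich dual to $1$-Lipschitz test functions by a $c$-transform argument. Throughout, $\Omega$ is bounded; we also take it closed, hence compact (e.g.\ by passing to its closure). So all continuous functions on $\Omega\times\Omega$ are bounded, and the set $\Pi(\mu,\nu)$ of couplings is weakly compact by Prokhorov's theorem, since it is tight and closed.

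\textbf{Step 1 (weak duality).} Fix a coupling $\gamma\in\Pi(\mu,\nu)$ and a $1$-Lipschitz $h$. Since the marginals of $\gamma$ are $\mu$ and $\nu$,
\begin{align*}
\mathbb{E}_{\mu}[h]-\mathbb{E}_{\nu}[h]=\int \big(h(x)-h(y)\big)\,d\gamma(x,y)\leq \int d(x,y)\,d\gamma(x,y).
\end{align*}
Taking the supremum over $h$ and the infimum over $\gamma$ gives $\sup_{\|h\|_L\le 1}\big(\mathbb{E}_\mu[h]-\mathbb{E}_\nu[h]\big)\leq W_1(\mu,\nu)$.

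\textbf{Step 2 (general Kantorovich duality).} Next we show
\begin{align*}
W_1(\mu,\nu)=\sup\Big\{\int\varphi\,d\mu+\int\psi\,d\nu \;:\; \varphi(x)+\psi(y)\le d(x,y)\Big\},
\end{align*}
where $\varphi,\psi$ range over continuous functions on $\Omega$. We write the primal problem as an inf–sup over all nonnegative measures $\gamma$ on $\Omega\times\Omega$, penalizing the marginal constraints with Lagrange multipliers $(\varphi,\psi)$:
\begin{align*}
\inf_{\gamma\ge0}\sup_{\varphi,\psi}\int d\,d\gamma+\int\varphi\,d\mu+\int\psi\,d\nu-\int\big(\varphi(x)+\psi(y)\big)\,d\gamma(x,y).
\end{align*}
The inner supremum is $+\infty$ unless $\gamma\in\Pi(\mu,\nu)$, in which case it equals $\int d\,d\gamma$. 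The Lagrangian is bilinear, so we would like to exchange $\inf$ and $\sup$ and then read off the constrained dual above. The cleanest route is Fenchel–Rockafellar duality on $C(\Omega\times\Omega)$, whose dual space is the space of Radon measures. An alternative is a minimax theorem of Sion type: restrict to the weakly compact convex set of probability measures on $\Omega\times\Omega$ and use continuity of $d$.

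\textbf{Main obstacle.} This exchange of $\inf$ and $\sup$, i.e.\ showing there is no duality gap, is the genuinely nontrivial part, and it is exactly where compactness of $\Omega$ is used. To handle it, we would first try Fenchel–Rockafellar. Its qualification condition holds because the function that is $0$ when $u\ge -d$ and $+\infty$ otherwise is continuous at $u\equiv 1$ for the sup-norm topology.

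\textbf{Step 3 (reduction to Lipschitz potentials).} Given an admissible pair $(\varphi,\psi)$, define the $c$-transform
\begin{align*}
\varphi^c(y)=\inf_x\big(d(x,y)-\varphi(x)\big).
\end{align*}
Then $\psi\le\varphi^c$, and $\varphi^c$ is $1$-Lipschitz as an infimum of $1$-Lipschitz functions of $y$. Replacing $\psi$ by $\varphi^c$ and then $\varphi$ by $\varphi^{cc}\ge\varphi$ can only increase the dual objective while keeping the pair admissible.

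For a $1$-Lipschitz $f$, taking $x=y$ in the infimum gives $f^c\le -f$, and the Lipschitz property gives $f^c\ge -f$; hence $f^c=-f$. Applying this to $f=\varphi^c$ gives $\varphi^{cc}=-\varphi^c$. Therefore the dual supremum may be restricted to pairs $(h,-h)$ with $h$ $1$-Lipschitz, and its value becomes $\mathbb{E}_\mu[h]-\mathbb{E}_\nu[h]$. Combining this with Steps 1 and 2 yields the stated equality.
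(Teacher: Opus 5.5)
The paper does not prove this statement. It imports it from Villani (2009, Theorem 5.10), which establishes Kantorovich duality on Polish spaces for lower semicontinuous costs. That proof goes through $c$-cyclical monotonicity of optimal plans and a Rockafellar-type construction of $c$-convex potentials. The Lipschitz form then follows from exactly your Step 3 observation that $f^c=-f$ for $1$-Lipschitz $f$ when $c=d$.

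Your proof is correct but takes the other classical route. You use compactness to run a convex-duality argument: Fenchel--Rockafellar on $C(\Omega\times\Omega)$ with the Riesz identification of its dual, or Sion's minimax theorem on the weakly compact set of probability measures. This is the proof in Villani's earlier \emph{Topics in Optimal Transportation}. It is shorter and self-contained given a standard duality theorem, but it gives up generality. Your step ``bounded and closed, hence compact'' is Heine--Borel. That is valid for the paper's $\Theta\subset\mathbb{R}^d$, but false for bounded sets in, say, an infinite-dimensional Hilbert space. There the cited theorem still gives the formula, since bounded support puts $\mu,\nu$ in $P_1$.

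Step 2 is currently a plan rather than a proof. To close it:
\begin{itemize}
\item Define $F(u)=0$ if $u\ge -d$ and $+\infty$ otherwise.
\item Define $G(u)=\int\varphi\,d\mu+\int\psi\,d\nu$ if $u=\varphi\oplus\psi$ and $+\infty$ otherwise. Note that $G$ is well defined because the decomposition is unique up to opposite constants and $\mu,\nu$ have mass one.
\item Check that $G(1)=1<\infty$ at your qualification point $u\equiv 1$.
\item Compute $F^*(-\gamma)=\int d\,d\gamma$ for $\gamma\ge 0$ (else $+\infty$), and $G^*(\gamma)=0$ for $\gamma\in\Pi(\mu,\nu)$ (else $+\infty$).
\end{itemize}
After the sign change $(\varphi,\psi)\mapsto(-\varphi,-\psi)$, the Fenchel--Rockafellar identity $\inf(F+G)=\max_\gamma[-F^*(-\gamma)-G^*(\gamma)]$ is precisely the absence of a duality gap.

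In the Sion variant, after exchanging, the inner infimum over probability measures equals $m=\min_{x,y}\{d(x,y)-\varphi(x)-\psi(y)\}$. Replacing $\varphi$ by $\varphi+m$ then produces an admissible pair with the same value.
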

Let $f:\Theta \rightarrow \mathbb{R} $ such that $f(\theta) = \mathbb{E}_{D \sim P_{\mathrm{train}}}[\mathcal{L}_D(\theta)]$, where $P_{\mathrm{train}}$ denotes the training data distribution (a mixture of $P_{\mathrm{priv}}$ and $P_{\mathrm{pub}}$. Since $\mathcal{L}(.,D)$ is $M-$Lipschitz, so is $f$. Then, we have that:
\begin{align*}
    \mathbb{E}_{\substack{\theta \sim \pi_{U} \\ D \sim P_{\mathrm{train}}}}\left[ 
  \mathcal{L}(\theta,D)\right] - \mathbb{E}_{\substack{\theta \sim \pi_{R} \\ D \sim P_{\mathrm{train}}}}\left[ 
  \mathcal{L}(\theta,D)\right] &= \mathbb{E}_{\theta \sim \pi_{U}}[f(\theta)] - \mathbb{E}_{\theta \sim \pi_{R}}[f(\theta)] \tag{Fubini's theorem}\\
  &\leq M \times  W_{1}(\pi_{U},\pi_{R}) \tag{By \cref{thm:kr_duality}}
\end{align*}
Now, we need to find an upper bound on the 1-Wasserstein distance in terms of the Rényi divergence between $\pi_R$ and $\pi_U$. The following results will be useful in deriving it:
\begin{proposition}\label{prop:pinsker}
    (Pinsker's inequality) For two probability distributions $P,Q$, we have
    \begin{align}
        2TV(P,Q)^{2} \leq KL(P||Q).
    \end{align}
\end{proposition}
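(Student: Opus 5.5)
The plan is the classical two-step argument: reduce the inequality to Bernoulli distributions using the data processing inequality, then verify the two-point case by elementary calculus. We may assume $KL(P\|Q)<\infty$, since otherwise there is nothing to prove; in particular $P\ll Q$. Let $p,q$ be densities of $P,Q$ with respect to a common dominating measure (e.g.\ $P+Q$). Set $A=\{p>q\}$. The supremum defining $TV(P,Q)$ is attained at $A$, so $TV(P,Q)=P(A)-Q(A)$. Consider the measurable map $h=\1_A:\Omega\to\{0,1\}$. The pushforwards $h_\sharp P$ and $h_\sharp Q$ are the Bernoulli laws with parameters $a:=P(A)$ and $b:=Q(A)$.

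By the data processing inequality for the KL divergence, $KL(h_\sharp P\|h_\sharp Q)\le KL(P\|Q)$. This is \cref{lemma:dpi} taken in the limit $\alpha\to1$; alternatively it follows directly from Jensen's inequality applied to the convex function $x\log x$ on each cell of the partition $\{A,A^c\}$. Since $TV(h_\sharp P,h_\sharp Q)=|a-b|=TV(P,Q)$, it suffices to prove the binary inequality
\begin{align*}
    \phi(a,b):=a\log\frac{a}{b}+(1-a)\log\frac{1-a}{1-b}-2(a-b)^2\ \ge\ 0
\end{align*}
for all $a\in[0,1]$ and $b\in(0,1)$, with the usual convention $0\log 0=0$. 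The boundary cases $b\in\{0,1\}$ either force $a=b$, in which case both sides vanish, or give infinite KL divergence.

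For the binary case, fix $a$ and view $\phi$ as a function of $b$. Clearly $\phi(a,a)=0$. Differentiating in $b$ gives
\begin{align*}
    \partial_b\phi(a,b)=-\frac{a}{b}+\frac{1-a}{1-b}+4(a-b)=(b-a)\left(\frac{1}{b(1-b)}-4\right).
\end{align*}
Since $b(1-b)\le\frac14$, the bracket is nonnegative. Hence $\phi(a,\cdot)$ is nonincreasing on $(0,a]$ and nondecreasing on $[a,1)$, so its minimum is $\phi(a,a)=0$. Combining this with the reduction yields $2TV(P,Q)^2=2(a-b)^2\le KL(h_\sharp P\|h_\sharp Q)\le KL(P\|Q)$.

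The main point requiring care is the reduction step. One must check that $TV$ is exactly preserved under the indicator map, which is the Hahn/Scheffé identity $\frac12\int|p-q|=\int_A(p-q)$. One must also justify KL data processing in the general measure-theoretic setting rather than only for maps $\mathbb{R}^d\to\mathbb{R}^d$ as stated in \cref{lemma:dpi}. For the latter I would give the direct Jensen/log-sum argument on the partition $\{A,A^c\}$. The calculus step is routine once the factorization of $\partial_b\phi$ is observed.
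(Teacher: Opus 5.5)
Your proof is correct and complete. The paper does not prove this proposition. It quotes Pinsker's inequality as a classical fact and uses it only as one link in the chain $W_1(\pi_U,\pi_R)\le \mathrm{diam}(\Theta)\,TV(\pi_U,\pi_R)\le \mathrm{diam}(\Theta)\sqrt{\tfrac12 KL(\pi_U\|\pi_R)}$ in the proof of \cref{prop:unlearning_da_risk}, so there is no in-paper argument to compare against.

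What you wrote is the standard textbook proof: reduction to two-point laws through the Scheffé set $A=\{p>q\}$, then the one-variable calculus check via the factorization $\partial_b\phi=(b-a)\bigl(\tfrac{1}{b(1-b)}-4\bigr)$. It matches the paper's normalization $TV=\sup_A|P(A)-Q(A)|$ with natural logarithms, so the constant $2$ is right. Proving data processing directly by Jensen's inequality on the partition $\{A,A^c\}$, as you propose, is the better choice, since \cref{lemma:dpi} is only stated for maps $\mathbb{R}^d\to\mathbb{R}^d$.

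One detail to tidy: when $a\in\{0,1\}$, the minimizer $b=a$ lies outside $(0,1)$. In that case the monotonicity gives $\phi(a,b)\ge\lim_{b'\to a}\phi(a,b')=0$ by continuity, not by direct evaluation at $b=a$.
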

\begin{proposition}\label{prop:renyi_mono}
    (Monotonicity of Rényi divergence, \citep{erven_renyi_2014}) For $1\leq \alpha_1\leq \alpha_2$ and probability measures $P,Q$, \begin{align*}
        KL(P||Q) \leq D_{\alpha_{1}}(P||Q) \leq D_{\alpha_{2}}(P||Q).
    \end{align*}
    The KL lower bounds any Rényi divergence since it is obtained by the limit $\alpha \rightarrow 1$.
\end{proposition}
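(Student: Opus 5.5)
We would prove both inequalities at once from a single reparametrization that turns the Rényi divergence into a chord slope of a convex function. Assume $P \ll Q$ (otherwise every quantity is $+\infty$ and there is nothing to show). Write $r = \frac{dP}{dQ}$. By \cref{def:renyi}, changing measure from $Q$ to $P$ gives, for $\alpha>1$,
\begin{align*}
D_\alpha(P\|Q) = \frac{1}{\alpha-1}\log \mathbb{E}_Q[r^{\alpha}] = \frac{1}{\alpha-1}\log \mathbb{E}_P[r^{\alpha-1}].
\end{align*}
Set $s=\alpha-1\ge 0$ and $\psi(s)=\log \mathbb{E}_P[r^{s}] = \log \mathbb{E}_P[e^{s\log r}]$, taking values in $(-\infty,+\infty]$. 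Then $D_{1+s}(P\|Q)=\psi(s)/s$. We also have $\psi(0)=\log P(r>0)=0$, because $r>0$ holds $P$-almost surely.

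\textbf{Step 1 (convexity).} The function $\psi$ is a cumulant generating function of $\log r$ under $P$, so it is convex on $[0,\infty)$. Concretely, for $s_\lambda=\lambda s_1+(1-\lambda)s_2$, Hölder's inequality with exponents $1/\lambda$ and $1/(1-\lambda)$ gives
\begin{align*}
\mathbb{E}_P[r^{s_\lambda}] \le \mathbb{E}_P[r^{s_1}]^{\lambda}\,\mathbb{E}_P[r^{s_2}]^{1-\lambda}.
\end{align*}
Taking logarithms yields convexity, and the argument remains valid when some of the values are $+\infty$.

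\textbf{Step 2 (monotonicity for $1<\alpha_1\le\alpha_2$).} For a convex $\psi$ with $\psi(0)=0$, the chord slope $s\mapsto \psi(s)/s = (\psi(s)-\psi(0))/(s-0)$ is nondecreasing on $(0,\infty)$. This is the standard three-slope lemma, which follows by writing $s_1=\tfrac{s_1}{s_2}s_2+(1-\tfrac{s_1}{s_2})\cdot 0$. Hence $D_{\alpha_1}(P\|Q)\le D_{\alpha_2}(P\|Q)$.

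\textbf{Step 3 (the KL endpoint).} We must show $KL(P\|Q)=\mathbb{E}_P[\log r]\le D_\alpha(P\|Q)$ for every $\alpha>1$. The cleanest route is Jensen's inequality applied to the convex function $e^{x}$:
\begin{align*}
\mathbb{E}_P[e^{s\log r}]\ge e^{s\,\mathbb{E}_P[\log r]},
\end{align*}
so $\psi(s)/s\ge \mathbb{E}_P[\log r]=KL(P\|Q)$ for all $s>0$. This settles the case $\alpha_1=1$ under the convention $D_1:=KL$. The same bound also shows that $\lim_{s\downarrow 0}\psi(s)/s$ exists and is at least $KL$; this is consistent with the statement's remark that KL is the $\alpha\to1$ limit, but the inequality itself does not need that limit.

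\textbf{Main obstacle.} The expected difficulty is not the convexity argument but the measure-theoretic bookkeeping. First, $\mathbb{E}_P[\log r]$ must be well defined in $(-\infty,+\infty]$. This holds because $\mathbb{E}_P[(\log r)^-] = \mathbb{E}_Q[r(\log r)^-]\le 1/e$, so the negative part is integrable. Second, Jensen's inequality and the chord-slope argument must be applied to functions taking values in the extended reals. We would handle both points explicitly: if $\mathbb{E}_P[r^{s}]=+\infty$ for some $s$, then every $D_{\alpha'}$ with $\alpha'\ge 1+s$ is $+\infty$, and the inequalities involving those orders hold trivially.
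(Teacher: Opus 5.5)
Your proof is correct. The paper gives no argument of its own for this proposition; it only cites van Erven and Harremo\"{e}s. There, monotonicity between simple orders $\alpha<\beta$ comes from a single Jensen step with the power map $x\mapsto x^{(\alpha-1)/(\beta-1)}$ applied to $(dP/dQ)^{\beta-1}$ under $P$. The order-one case is handled by \emph{defining} $D_1$ as the limit $\alpha\uparrow 1$ and then proving separately that this limit equals KL.

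Your Step 2 is the same inequality in a different form. Since $\psi(s)/s=\log\|r\|_{L^{s}(P)}$, log-convexity of moments via H\"{o}lder plus the three-slope lemma is exactly Lyapunov's inequality $\|r\|_{L^{s_1}(P)}\le\|r\|_{L^{s_2}(P)}$, which is what the power-map Jensen step yields.

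Your Step 3 is where the routes genuinely differ. The bound $\psi(s)\ge s\,\mathbb{E}_P[\log r]$ is the geometric-mean endpoint of the power-mean inequality, and it gives $KL\le D_\alpha$ for every $\alpha>1$ with no continuity theory at $\alpha=1$. This makes your argument self-contained and tailored to exactly the range $\alpha\ge 1$ the paper uses, e.g.\ in the chain $TV\le\sqrt{KL/2}\le\sqrt{D_\alpha/2}$ in the generalization bound. It also correctly avoids claiming $\lim_{\alpha\downarrow 1}D_\alpha=KL$. That limit can fail: the right limit can be $+\infty$ while KL is finite, a caveat the statement's closing remark glosses over. The cited route, in exchange, covers orders in $(0,1)$ and the extended orders $0,1,\infty$ in one framework.

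One bookkeeping detail to add in Step 3: when $KL=+\infty$, standard Jensen does not apply as stated. Replace it with $e^{sx}\ge 1+sx$; together with the integrability of $(\log r)^-$ that you already established, this gives $\mathbb{E}_P[r^s]=+\infty$ directly.
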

\begin{proposition}\label{prop:w2_tv}
    (Upper bounding $W_1$ with $TV$ \citep{gibbs_choosing_2002}) If the distributions $P,Q$ share a support $\Omega$ and $diam(\Omega) = \sup_{(x,y)\in \Omega\times\Omega}d(x,y)$ is finite, then we have
    \begin{align}
        W_1(P,Q) \leq diam(\Omega)TV(P,Q).
    \end{align}
\end{proposition}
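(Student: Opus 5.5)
The plan is to go through the Kantorovich--Rubinstein duality (\cref{thm:kr_duality}). Since $P$ and $Q$ share the support $\Omega$ and $\mathrm{diam}(\Omega)<\infty$, the duality applies, and
\[
W_1(P,Q)=\sup_{\|h\|_L\le 1}\int_\Omega h\,d(P-Q).
\]
It is therefore enough to show that every $1$-Lipschitz $h$ on $\Omega$ satisfies $\int h\,d(P-Q)\le \mathrm{diam}(\Omega)\,TV(P,Q)$.

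The key step is a centering trick. Because $P-Q$ has total mass zero, $\int h\,d(P-Q)=\int (h-c)\,d(P-Q)$ for every constant $c$. For a $1$-Lipschitz $h$ on $\Omega$, the oscillation is controlled by $\sup_\Omega h-\inf_\Omega h\le \mathrm{diam}(\Omega)$. Choosing $c=\tfrac12(\sup_\Omega h+\inf_\Omega h)$ then gives $\|h-c\|_{\infty,\Omega}\le \tfrac12\mathrm{diam}(\Omega)$. We then estimate
\[
\int (h-c)\,d(P-Q)\le \|h-c\|_{\infty}\int_\Omega |dP-dQ| \le \tfrac12\mathrm{diam}(\Omega)\cdot 2\,TV(P,Q),
\]
using $TV(P,Q)=\tfrac12\int|dP-dQ|$ from the definition above. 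Taking the supremum over $h$ concludes the argument.

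As a sanity check and alternative route, I would keep a coupling argument in reserve. There is a maximal coupling $\gamma$ of $(P,Q)$ with $\gamma(X\neq Y)=TV(P,Q)$. Then
\[
W_1\le \mathbb{E}_\gamma[d(X,Y)\mathbf 1\{X\ne Y\}]\le \mathrm{diam}(\Omega)\,TV(P,Q),
\]
since both marginals live on $\Omega$.

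I expect no real obstacle. The only points needing care are measure-theoretic: the Lipschitz test functions are bounded on the bounded set $\Omega$, so all integrals are finite; and the supremum and infimum of $h$ are taken over the common support, which is exactly where the shared-support hypothesis enters. With the duality route this is a few lines.
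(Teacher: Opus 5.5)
Your proof is correct, and both of your routes work. The paper gives no argument of its own here; it cites Gibbs and Su, whose proof is essentially your reserve route. They take expectations of the pointwise bound $d(X,Y)\le \mathrm{diam}(\Omega)\,\mathbf{1}\{X\neq Y\}$, which holds because both marginals live on $\Omega$, and pass to the infimum over couplings using $TV(P,Q)=\min_\gamma \gamma(X\neq Y)$. That argument needs only two things. The first is the primal definition of $W_1$, under which any single coupling yields an upper bound. The second is an explicitly constructible maximal coupling: put the common part $P\wedge Q$ on the diagonal and couple the excesses $(P-Q)^+$ and $(Q-P)^+$ arbitrarily. No duality theorem and no topological regularity are involved.

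Your main route instead relies on the nontrivial direction $W_1\le \sup_{\|h\|_L\le 1}\int h\,d(P-Q)$ of Kantorovich--Rubinstein. This is a deeper result that needs, e.g., a Polish setting, which is harmless here since $\Theta\subset\mathbb{R}^d$. In exchange, your centering step makes transparent where the constant comes from. $2\,TV(P,Q)$ is the dual norm of $P-Q$ against test functions of sup-norm at most $1$. A $1$-Lipschitz function on $\Omega$ has oscillation at most $\mathrm{diam}(\Omega)$, hence sup-norm at most $\tfrac12\mathrm{diam}(\Omega)$ after centering.

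This also has a side benefit in the paper's application. Applying your centering step directly to the $M$-Lipschitz $f$ gives $\mathbb{E}_{\pi_U}[f]-\mathbb{E}_{\pi_R}[f]\le M\,\mathrm{diam}(\Theta)\,TV(\pi_U,\pi_R)$ with no detour through $W_1$ at all.
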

Using the results above, we have
\begin{align*}
    \mathbb{E}_{\theta \sim \pi_{U}}[f(\theta)] - \mathbb{E}_{\theta \sim \pi_{R}}[f(\theta)] &\leq MW_1(\pi_U,\pi_R)\\
  &\leq M\times diam(\Theta) \times TV(\pi_U,\pi_R) \tag{By \cref{prop:w2_tv} and compactness of $\Theta$}\\
  &\leq M \times diam(\Theta) \times \sqrt{\frac{1}{2} KL(\pi_{U},\pi_{R})} \tag{By \cref{prop:renyi_mono}}\\
  &\leq M \times diam(\Theta) \times \sqrt{\frac{1}{2} D_{\alpha}(\pi_{U},\pi_{R})}\tag{By \cref{prop:renyi_mono}}
\end{align*}
Thus, we obtain that the generalization error of learning + unlearning is upper bounded by:
\begin{proposition}\label{prop:unlearning_risk}
Assuming that $\mathcal{L}$ is $M$-Lipschitz, we have
\begin{align}\label{eq:risk}
    \mathbb{E}_{\theta \sim \pi_U}\left[\mathbb{E}_{D \sim P_{\mathrm{train}}}[L(\theta,D)] \right] \leq \mathbb{E}_{\theta \sim \pi_R}\left[\mathbb{E}_{D \sim P_{\mathrm{train}}}[L(\theta,D)] \right] + M \times diam(\Theta) \times \sqrt{\frac{1}{2} D_{\alpha}(\pi_{U}\|\pi_{R})}
\end{align}
\end{proposition}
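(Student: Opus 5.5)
The statement to prove is \cref{prop:unlearning_risk}. Almost all of the work is already done in the chain of inequalities just above it. The plan is to assemble those steps cleanly and check that each one is valid.

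\textbf{Step 1 (reduce to a single function of the weights).} Define $f(\theta)=\mathbb{E}_{D\sim P_{\mathrm{train}}}[L(\theta,D)]$. Since $L(\cdot,D)$ is $M$-Lipschitz in $\theta$ for every $D$, $f$ is also $M$-Lipschitz: take the expectation of $|L(\theta,D)-L(\theta',D)|\le M\|\theta-\theta'\|$. Fubini's theorem applies because $\Theta$ is compact, so the loss is bounded on it. It lets us rewrite both sides of the claim as $\mathbb{E}_{\pi_U}[f]$ and $\mathbb{E}_{\pi_R}[f]$. It then suffices to show
\[
\mathbb{E}_{\pi_U}[f]-\mathbb{E}_{\pi_R}[f]\le M\,\mathrm{diam}(\Theta)\sqrt{\tfrac12 D_\alpha(\pi_U\|\pi_R)}.
\]

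\textbf{Step 2 (Lipschitz test function to $W_1$ to TV).} Apply \cref{thm:kr_duality} to the $1$-Lipschitz function $f/M$. Both $\pi_U$ and $\pi_R$ are supported on the bounded set $\Theta$ because of the projection in PNGD, so the duality holds and gives the bound $M\,W_1(\pi_U,\pi_R)$. Next apply \cref{prop:w2_tv} with $\Omega=\Theta$, which has finite diameter, to bound this by $M\,\mathrm{diam}(\Theta)\,TV(\pi_U,\pi_R)$.

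\textbf{Step 3 (TV to Rényi divergence).} By Pinsker's inequality (\cref{prop:pinsker}), $TV(\pi_U,\pi_R)\le\sqrt{\tfrac12 KL(\pi_U\|\pi_R)}$. For $\alpha\ge1$, monotonicity (\cref{prop:renyi_mono}) gives $KL\le D_\alpha$, and the square root is monotone, so the claim follows.

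\textbf{Main obstacle: the range of $\alpha$ and the order of the arguments.} Monotonicity only gives $KL\le D_\alpha$ for $\alpha\ge1$, so I would state that $\alpha\ge 1$ is assumed; this is the regime used throughout the paper. The argument order also needs care. Pinsker's inequality bounds TV by either $KL(\pi_U\|\pi_R)$ or $KL(\pi_R\|\pi_U)$, since TV is symmetric. The statement uses $D_\alpha(\pi_U\|\pi_R)$, which requires $\pi_U\ll\pi_R$. Alternatively, one can apply Pinsker with $KL(\pi_R\|\pi_U)$, matching the $D_\alpha(\pi_R^{T+K}\|\pi_U^K)$ controlled by \cref{thm:unlearning_init_bound}. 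I would note that either ordering works by the symmetry of TV, and that absolute continuity holds because both laws are Gaussian-smoothed before projection.
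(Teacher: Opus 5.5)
Your proposal is correct and follows the paper's proof step for step: define $f$, use Kantorovich--Rubinstein duality to get $M\,W_1(\pi_U,\pi_R)$, bound $W_1$ by $\mathrm{diam}(\Theta)\,TV$, then apply Pinsker and the monotonicity of the Rényi divergence. The paper leaves implicit that $\alpha \ge 1$ is needed for $KL \le D_\alpha$ and that, by the symmetry of TV, either argument order can be used; your stating both explicitly is a worthwhile addition.
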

\subsubsection{Adapting the bound to \cref{prop:unlearning_da_risk}}
We would like to evaluate the performance of the model obtained after unlearning. Proposition \ref{prop:unlearning_risk} provides a generalization bound on a mixture of distributions, namely on public data + private data. In most practical scenarios, one would want to quantify the "lost" performance on private data after forgetting one of its subsets. Thus, we would like to upper bound the quantity $\mathbb{E}_{\pi_U}\left[ \mathbb{E}_{D \sim P_{\mathrm{priv}}}\left[ \mathcal{L}_D(\theta)\right]  \right]$.
The training data distribution used for either retraining or unlearning can be considered as generated from a mixture of the distributions $I$ and $II$. Assuming the sampling proportions for training are consistent, one can write that the data distribution used in retraining is
\begin{align*}
    P_{\mathrm{train}} &= \frac{n_{\mathrm{pub}}}{n_{\mathrm{pub}} + n_{\mathrm{r-priv}}}P_{\mathrm{pub}} + \frac{n_{\mathrm{r-priv}}}{n_{\mathrm{pub}} + n_{\mathrm{r-priv}}}P_{\mathrm{priv}}.
\end{align*}
Fix any $\theta \in \Theta$. We have that
\begin{align*}
    \mathbb{E}_{\mathcal{D}\sim P_{\mathrm{train}}}[\mathcal{L}(\theta,\mathcal{D})] &= \frac{n_{\mathrm{pub}}}{n_{\mathrm{pub}} + n_{\mathrm{r-priv}}}\mathbb{E}_{\mathcal{D} \sim P_{\mathrm{pub}}}[\mathcal{L}(\theta,\mathcal{D})] + \frac{n_{\mathrm{r-priv}}}{n_{\mathrm{pub}} + n_{\mathrm{r-priv}}}\mathbb{E}_{\mathcal{D} \sim P_{\mathrm{priv}}}[\mathcal{L}(\theta,\mathcal{D})]
\end{align*}
\begin{align*}
    \mathbb{E}_{\mathcal{D} \sim P_{\mathrm{priv}}}[\mathcal{L}(\theta,\mathcal{D})] &= \int p_{{\mathrm{priv}}}(x)\mathcal{L}(\theta,x)dx\\
    &= \int p_{\mathrm{train}}(x)\frac{p_{{\mathrm{priv}}}(x)}{p_{\mathrm{train}}(x)}\mathcal{L}(\theta,x)dx\\
    &= \mathbb{E}_{x\sim P_{\mathrm{train}}}\left[\frac{p_{{\mathrm{priv}}}(x)}{p_{\mathrm{train}}(x)}\mathcal{L}(\theta,x)\right]\\
    &\leq \mathbb{E}_{d \sim P_{\mathrm{train}}}[\esssup_{x \in Supp(P_{\mathrm{pub}}) \cup Supp(P_{\mathrm{priv}})} \frac{p_{p_{\mathrm{priv}}}(x)}{p_{\mathrm{train}}(x)}\mathcal{L}(\theta,d)]\\
    &\leq \esssup_{x \in Supp(P_{\mathrm{pub}}) \cup Supp(P_{\mathrm{priv}})} \frac{p_{\mathrm{priv}}(x)}{p_{\mathrm{train}}(x)}\mathbb{E}_{d \sim P_{\mathrm{train}}}[\mathcal{L}(\theta,d)]\\
    &\leq \exp(D_{\infty}(P_{\mathrm{priv}},P_{\mathrm{train}}))\mathbb{E}_{d \sim P_{\mathrm{train}}}[\mathcal{L}(\theta,d)].
\end{align*}
Moreover, we have by convexity of the Rényi divergence \citep{erven_renyi_2014} in its second argument that
\begin{align*}
    D_{\infty}(P_{\mathrm{priv}}\|P_\mathrm{train}) &\leq \frac{n_{\mathrm{pub}}}{n_{\mathrm{pub}} + n_{\mathrm{r-priv}}}(P_{\mathrm{priv}}\|P_\mathrm{pub}).
\end{align*}
Thus we also have
\begin{align}\label{eq:priv_to_pub}
    \mathbb{E}_{d \sim P_{\mathrm{priv}}}[\mathcal{L}(\theta,d)] \leq \exp\left(\frac{n_{\mathrm{pub}}}{n_{\mathrm{pub}} + n_{\mathrm{r-priv}}}D_{\infty}(P_{\mathrm{priv}}\|P_{\mathrm{pub}}) \right)\mathbb{E}_{d \sim P_{\mathrm{train}}}[\mathcal{L}(\theta,d)].
\end{align}
Thus, we can adapt proposition \ref{prop:unlearning_risk} to evaluate the risk \textit{only} on private data. Note that so far, the only assumption made on the difference between the data generating distributions I and II is that they share the same support. The following bound might be refined with additional assumptions, such as covariate shift or conditional shift.

We can thus take the expectation of $\theta$ with respect to $\pi_U$ to get
\begin{align*}
    \mathbb{E}_{\theta \sim \pi_U}\left[\mathbb{E}_{d \sim P_{\mathrm{priv}}}[\mathcal{L}(\theta,d)]\right] \leq \exp\left(\frac{n_{\mathrm{pub}}}{n_{\mathrm{pub}} + n_{\mathrm{r-priv}}}D_{\infty}(P_{\mathrm{priv}}\|P_{\mathrm{pub}}) \right)\mathbb{E}_{\theta \sim \pi_U}\left[\mathbb{E}_{d \sim P_{\mathrm{train}}}[\mathcal{L}(\theta,d)]\right],
\end{align*}
and using proposition \ref{prop:unlearning_risk} to upper bound $\mathbb{E}_{\theta \sim \pi_U}\left[\mathbb{E}_{d \sim P_{\mathrm{train}}}[\mathcal{L}(\theta,d)]\right]$, we prove proposition \ref{prop:unlearning_da_risk}:
\begin{align*}
        \mathbb{E}_{\theta \sim \pi_U}\left[\mathbb{E}_{x \sim P_{\mathrm{priv}}}[\mathcal{L}(\theta,x)]\right] &\leq \exp\left(\frac{n_{\mathrm{pub}}}{n_{\mathrm{pub}} + n_{\mathrm{retain}}}D_{\infty}(P_{\mathrm{priv}}\|P_{\mathrm{pub}}) \right)\mathbb{E}_{\theta \sim \pi_R}\left[\mathbb{E}_{d \sim P_{\mathrm{train}}}[\mathcal{L}(\theta,d)]\right] +\\ &\quad M\times diam(\Theta) \times \sqrt{\frac{1}{2}D_{\alpha}(\pi_R\|\pi_U)}.
    \end{align*}
\subsection{Retraining performance bound on the training error}
 ($\mathbf{\mathbb{E}_{\theta \sim \pi_R^T}\left[\mathbb{E}_{x \sim p_{\mathrm{train}}}[\mathcal{L}(\theta,x)]\right]}$): 
The upper bound could be further improved to include the \textit{optimal} distribution, i.e by linking $\mathbf{\mathbb{E}_{\theta \sim \pi_R^T}\left[\mathbb{E}_{x \sim p_{\mathrm{train}}}[\mathcal{L}(\theta,x)]\right]}$ to $\mathbf{\arg\min_{\pi \in \mathcal{P}(\mathbb{R}^d)} \mathbb{E}_{\theta \sim \pi}\left[\mathbb{E}_{x \sim p_{\mathrm{train}}}[\mathcal{L}(\theta,x)]\right]}$. However, 
standard generalization bounds for Langevin dynamics \citep{raginsky_non-convex_2017,xu2018global} do not directly apply to our setting due to the projection operator $\Pi_{\Theta}$ in the PNGD updates. These classical results focus on unconstrained non-convex optimization, whereas our bounded domain introduces additional complexity. The most relevant analysis we are aware of is \citet{lamperski_projected_2020}, who study generalization properties of projected Stochastic Gradient Langevin Dynamics, though their work considers the infinite-data regime.

\section{When to unlearn, when to retrain ?}\label{app:retrain_cost}
In the following section, we will assume that the loss function is $m$-strongly convex to simplify the computations. Non-convex analysis follows the same flavor, but is hardly interpretable due to the presence of the log-Sobolev constants.
As a reminder of ALU, we remind the reader that (1) one learns the whole dataset through $T$ iterations of PNGD, (2) unlearns by fine-tuning on the retain set using the same PNGD updates. 
As opposed to \citep{chien_langevin_2024,chien_stochastic_2024} that defines the retraining cost as the number of PNGD steps required to get $\varepsilon$-close to the stationary distribution of retraining from scratch, \textbf{we define the cost of retraining as $T$, i.e the same number of steps used to train the model prior to unlearning}. This is a more realistic setting in practice as practitioners may choose the number of training steps to be an arbitrary number. The cost of unlearning is defined as the minimum amount of unlearning iterations, $K$, so that $D_{\alpha}(\pi_R^{T+K}\|\pi_U^K) \leq \varepsilon$, for a given $\varepsilon > 0$.

The following result gives a sufficient condition on when ALU is more advantageous than retraining, under the assumption that the loss is $m$-strongly convex. Note that a similar but hard-to-interpret result could be derived for non-convex losses, due to the presence of the log-Sobolev constants. 
\begin{proposition}[Sufficient condition guaranteeing that ALU is more efficient than retraining]\label{prop:sufficient_condition_retraining}
    Assume that the conditions of \cref{thm:unlearning_init_bound} are satisfied, and that the loss function is $m$-strongly convex. Then, a sufficient condition guaranteeing that unlearning is more advantageous than retraining is
    \begin{align*}
        \frac{C\alpha}{2\sigma^2\eta}\log\left(\frac{4\alpha M^2 \nf^2}{m\varepsilon\sigma^2(\npub + \np)^2}   \right) < T - \log\left( 1-\exp(-m\eta T)  \right).
    \end{align*}
\end{proposition}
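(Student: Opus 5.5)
The plan is to compute the unlearning cost $K$ explicitly in the strongly convex case and then require $K<T$. First, combine the strongly convex part of \cref{thm:unlearning_init_bound},
\begin{align*}
D_{\alpha}(\pi_R^{T+K}\|\pi_U^{K}) \leq D_{\alpha}(\pi_L^T\|\pi_R^T)\exp\left(-\frac{2K\sigma^2\eta}{C\alpha}\right),
\end{align*}
with a $\sigma$-explicit bound on the initial divergence. For the initial divergence I would use the strongly convex specialization of \cref{theorem:learning_retraining}, the one leading to \cref{eq:strgly_cvx_sigma}:
\begin{align*}
D_{\alpha}(\pi_R^{T}\|\pi_L^{T}) \leq \frac{4\alpha M^2 \nf^2}{m\sigma^2(\npub+\np)^2}\left(1-\exp(-m\eta T)\right).
\end{align*}
This comes from bounding the geometric-type sum $\sum_t\prod_{t'}h(t',\eta,\sigma)$ when the log-Sobolev constants contract. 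The theorem bounds $D_\alpha(\pi_R^T\|\pi_L^T)$, whereas \cref{thm:unlearning_init_bound} uses $D_\alpha(\pi_L^T\|\pi_R^T)$. I would handle this by noting that the sensitivity argument in the proof of \cref{theorem:learning_retraining} is symmetric in the two processes, so the same bound holds for either ordering.

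Second, define the unlearning cost as the smallest $K$ such that the product of these two bounds is at most $\varepsilon$. Taking logarithms, it suffices that
\begin{align*}
K \geq \frac{C\alpha}{2\sigma^2\eta}\left[\log\left(\frac{4\alpha M^2\nf^2}{m\varepsilon\sigma^2(\npub+\np)^2}\right)+\log\left(1-\exp(-m\eta T)\right)\right].
\end{align*}
Unlearning is advantageous over retraining, whose cost is $T$ by the definition in \cref{app:retrain_cost}, as soon as this threshold is strictly below $T$.

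Third, rearrange $K<T$ into the stated form by moving the $\log(1-\exp(-m\eta T))$ term to the right-hand side. Since $1-e^{-m\eta T}\in(0,1)$, this term is negative, so it appears as $-\log(1-e^{-m\eta T})>0$ on the right. The stated right-hand side omits the prefactor $\frac{C\alpha}{2\sigma^2\eta}$ on this term. To reconcile, I would check the normalization in $C$ and $\eta$. If the constants do not match, note that a sufficient condition is still obtained by weakening whenever $\frac{C\alpha}{2\sigma^2\eta}\geq 1$, which holds under the assumption $C>\sigma^2/m$ with $\eta<1/m$. In that case, multiplying a negative log by a factor $\geq1$ only strengthens the inequality in the direction we need. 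Finally, round $K$ up to an integer and absorb the rounding into the strict inequality.

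I expect the main obstacle to be the first step: obtaining the closed-form $\sigma$-free sum $\frac{2}{m\eta\sigma^2}(1-e^{-m\eta T})$-type bound, and justifying that the contractive log-Sobolev recursion makes $\prod h\leq e^{-m\eta(T-t)}$. The natural route is to replace $\left(1+\eta\sigma^2/C_{t',1}\right)^{-1}$ by the strongly convex contraction factor $(1-\eta m)^2\leq e^{-2m\eta}$ along the lines of \citet{chien_langevin_2024}. The resulting geometric series $\sum_{t}e^{-m\eta (T-t)}\leq\frac{1-e^{-m\eta T}}{1-e^{-m\eta}}\approx\frac{1-e^{-m\eta T}}{m\eta}$ then produces the factor $4\alpha M^2\nf^2/(m\sigma^2(\npub+\np)^2)$ after combining with the $\frac{2M^2\eta^2}{\sigma^2}$ prefactor.
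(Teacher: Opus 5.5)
Your route is the paper's. You combine the strongly convex decay in \cref{thm:unlearning_init_bound} with the strongly convex initial-divergence bound \cref{eq:strgly_cvx_sigma}, solve $D_\alpha(\pi_R^T\|\pi_L^T)\exp(-2K\sigma^2\eta/(C\alpha))\le\varepsilon$ for $K$, and require the threshold to be below $T$. You are right that the stated right-hand side drops the factor $\kappa:=\frac{C\alpha}{2\sigma^2\eta}$ in front of $-\log(1-e^{-m\eta T})$. The paper's proof writes its final line without comment, so your weakening step (valid when $\kappa\ge 1$, since $-\log(1-e^{-m\eta T})>0$) is more careful than the paper. The ordering worry is unnecessary: the recursion in the appendix proof of \cref{thm:unlearning_init_bound} starts from $\pi_U^0=\pi_L^T$ and produces $D_\alpha(\pi_R^T\|\pi_L^T)$ directly, which is what \cref{theorem:learning_retraining} bounds.

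Three points need fixing.

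\textbf{The condition $\kappa\ge 1$.} From $C>\sigma^2/m$ and $m\eta<1$ you only get $\kappa>\frac{\alpha}{2m\eta}>\frac{\alpha}{2}$. This gives $\kappa\ge1$ for $\alpha\ge 2$, which covers the paper's choice $\alpha=2$. For $\alpha\in(1,2)$ you must also assume $m\eta\le\alpha/2$; otherwise the stated condition need not imply the correct one.

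\textbf{The derivation of the initial bound.} Your sketch does not go through, for three reasons.
\begin{itemize}
\item $(1-\eta m)^2$ is the contraction factor of the log-Sobolev-constant recursion, not an upper bound on $h(t',\eta,\sigma)=(1+\eta\sigma^2/C_{t',1})^{-1}$. Using $C_{t',1}\le C$ only gives decay rate $\eta\sigma^2/C$, not $m\eta$. Rate $m\eta$ would need $C_{t',1}\lesssim\sigma^2/m$ uniformly.
\item Replacing $\frac{1}{1-e^{-m\eta}}$ by $\frac{1}{m\eta}$ goes the wrong way for an upper bound, since $1-e^{-x}\le x$.
\item $\frac{2M^2\eta^2}{\sigma^2}\cdot\frac{1}{m\eta}=\frac{2M^2\eta}{m\sigma^2}$, not $\frac{4M^2}{m\sigma^2}$.
\end{itemize}
The paper does not derive this bound from \cref{theorem:learning_retraining}. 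It takes \cref{eq:strgly_cvx_sigma} as the strongly convex learning guarantee of \citet{chien_langevin_2024} (going back to \citet{ye_differentially_2022}), with the sensitivity rescaled by $\nf/(\npub+\np)$. You should import it the same way.

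\textbf{Rounding.} A strict bound $K^\ast<T$ on the real-valued threshold only gives $\lceil K^\ast\rceil\le T$, so rounding cannot be absorbed into the strict inequality. The paper simply ignores integrality.
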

\begin{proof}
    Using the results from \cref{theorem:learning_retraining,thm:unlearning_init_bound}, with the assumption that the loss is $m$-strongly convex, we have that:
    \begin{align*}
       D_{\alpha}(\pi_R^{T+K}\|\pi_U^K) \leq  D_{\alpha}(\pi_R^{T}\|\pi_L^T)\exp\left(-\frac{2K\sigma^2\eta}{C\alpha}\right).
    \end{align*}
    By setting the right hand side to be less than $\varepsilon$, we obtain a sufficient condition on $K$:
    \begin{align*}
        D_{\alpha}(\pi_R^{T}\|\pi_L^T)\exp\left(-\frac{2K\sigma^2\eta}{C\alpha}\right) &\leq \varepsilon\\
        \Rightarrow \frac{C\alpha\log\left(\frac{D_{\alpha}(\pi_R^{T}\|\pi_L^T)}{\varepsilon} \right)}{2\sigma^2\eta} &\leq K.
    \end{align*}
Therefore, a sufficient condition for unlearning to be computationally more efficient than retraining is:
\begin{align*}
     \frac{C\alpha\log\left(\frac{4\alpha M^2\nf^2}{\varepsilon m \sigma^2(\npub + \np)^2} \right)}{2\sigma^2\eta} \leq T - \log(1-\exp(-m\eta T)) \tag{By \cref{theorem:learning_retraining} and strong convexity of the loss}.
\end{align*}
\end{proof}
This bound contains many dependencies between the control knobs that practitioners have in hand thanks to ALU. We ask the following questions:
\paragraph{Small number of learning iterations:}
To analyze the computational advantage in this regime, we examine the sufficient condition when $T$ is small.
Using the first-order Taylor approximation $1 - \exp(-m\eta T) \approx m\eta T$ and noting that for small $T$, the term $\exp(-\frac{2\sigma^2 \eta T}{C\alpha}) \approx 1$, the inequality yields:
\begin{align*}
    \log\left(\frac{4\alpha M^2\nf^2}{\varepsilon m \sigma^2(\npub + \np)^2} \right) &\leq \frac{2\sigma^2\eta }{C\alpha}( T - \log(m\eta T))\\
    \Rightarrow \left(\frac{\np}{\npub + \np} \times \frac{\nf}{\np}\right)^2 &\leq \frac{\varepsilon m \sigma^2 }{4\alpha M^2}\left[\underbrace{\exp\left(\frac{2\sigma^2\eta T }{C\alpha}    \right)}_{\approx 1}   \exp\left(-\frac{2\sigma^2\eta\log(m\eta T) }{C\alpha}  \right)\right]\\
   \Rightarrow \left(\frac{\np}{\npub + \np} \times \frac{\nf}{\np}\right)^2 &\leq \frac{\varepsilon m \sigma^2 }{4\alpha M^2}(m\eta T)^{\frac{-2\sigma^2\eta }{C\alpha}}
\end{align*}
Letting $c=\frac{\nf}{\np}$ denote the fraction of private data that needs to be forgotten, we have:
\begin{align*}
    \left(\frac{\np}{\npub + \np}\right)^2 &\leq \frac{\varepsilon m \sigma^2(m\eta T)^{\frac{-2\sigma^2\eta }{C\alpha}}}{4\alpha M^2c^2}\\
    \Rightarrow (\frac{\npub}{\np} + 1)^2 &\geq \frac{4\alpha M^2 c^2}{\varepsilon m \sigma^2}(m\eta T)^{\frac{2\sigma^2\eta }{C\alpha}}\\
    \Rightarrow \frac{\npub}{\np} &\geq \underbrace{\frac{2\sqrt{\alpha}M (m\eta)^{\frac{\sigma^2\eta }{C\alpha}}}{m}}_{\text{Const.}} \times T^{\frac{\sigma^2\eta }{C\alpha}} \times \frac{1}{\sqrt{\varepsilon}} \times \left(\frac{\nf}{\np}   \right) - 1
\end{align*}

In the regime where $T$ is small, the feasibility of unlearning is subject to a trade-off between the forget ratio, $\varepsilon$, and the available public data. This relationship is formally captured by the inequality $\frac{\npub}{\np} + 1 \geq \mathcal{C} \cdot T^{\beta} \cdot \varepsilon^{-1/2} \cdot \frac{\nf}{\np}$, where $\beta = \sigma^2\eta/C\alpha$. In the private-only scenario where $\npub = 0$, the efficiency of unlearning is limited; the algorithm only remains faster than retraining if the fraction of data to be forgotten is sufficiently small to satisfy $\frac{\nf}{\np} \leq \frac{\sqrt{\varepsilon}}{\mathcal{C} T^{\beta}}$. Under these conditions, attempting to unlearn a larger portion of the dataset or imposing a stricter privacy guarantee (smaller $\varepsilon$) forces the unlearning iterations $K$ to surpass the original training budget $T$, rendering retraining a more viable option. However, the introduction of public data ($\npub > 0$) provides a computational buffer that fundamentally alters this dynamic. By increasing the ratio $\npub/\np$, practitioners can compensate for high forget ratios or stringent privacy requirements, ensuring that the gradient updates remain stable and non-private enough to allow $K < T$.
 \paragraph{Large number of training iterations (training to convergence)} In this case, \citep{chien_langevin_2024} proved that the computational benefit of unlearning against retraining has a non-negligible benefit, that scales as $\mathcal{O}(\log(n_{\mathrm{total}}))$, with probability $1-\frac{1}{R^d}$.

\section{Langevin Unlearning pseudo-code}\label{subseq:lu_pseudocode}

\begin{algorithm}[htbp]
\caption{Training with Projected Noisy Gradient Descent (PNGD)}
\label{alg:pngd_training}
\begin{algorithmic}[1]
    \STATE $\theta_0 \sim \pi_0$ \COMMENT{Sample from initialization distribution}
    
    \FOR{$t = 0$ to $T-1$}
        \STATE $g_t \gets \nabla_\theta L_{D}(\theta_t)$ \COMMENT{Compute gradient on full dataset}
        \STATE $\xi_t \sim \mathcal{N}(0, 2\eta\sigma^2 I_d)$ \COMMENT{Sample Gaussian noise}
        \STATE $\theta_{t+1} \gets \Pi_\Theta[\theta_t - \eta g_t + \xi_t]$ \COMMENT{Update and project}
    \ENDFOR
    
    \STATE \textbf{return} $\theta_T$
\end{algorithmic}
\end{algorithm}

\begin{algorithm}[htbp]
\caption{Langevin Unlearning}
\label{alg:langevin_unlearning}
\begin{algorithmic}[1]
    \STATE $\theta_0^U \gets \theta_T$ \COMMENT{Initialize from trained model}
    
    \FOR{$k = 0$ to $K-1$}
        \STATE $g_k \gets \nabla_\theta L_{D_{\text{retain}}}(\theta_k^U)$ \COMMENT{Compute gradient on retain set only}
        \STATE $\xi_k \sim \mathcal{N}(0, 2\eta\sigma^2 I_d)$ \COMMENT{Sample Gaussian noise}
        \STATE $\theta_{k+1}^U \gets \Pi_\Theta[\theta_k^U - \eta g_k + \xi_k]$ \COMMENT{Update and project}
    \ENDFOR
    
    \STATE \textbf{return} $\theta_K^U$
\end{algorithmic}
\end{algorithm}

\section{DomainNet data}
The following is a snippet of samples from the DomainNet dataset, where we extracted two domains, Clipart and Quickdraw. The classes are aggregated into 24 meta-classes \cref{tab:class_aggregation}, following \citep{peng2019moment}.

\begin{figure}[htbp!]
    \centering
    \includegraphics[width=1.0\textwidth]{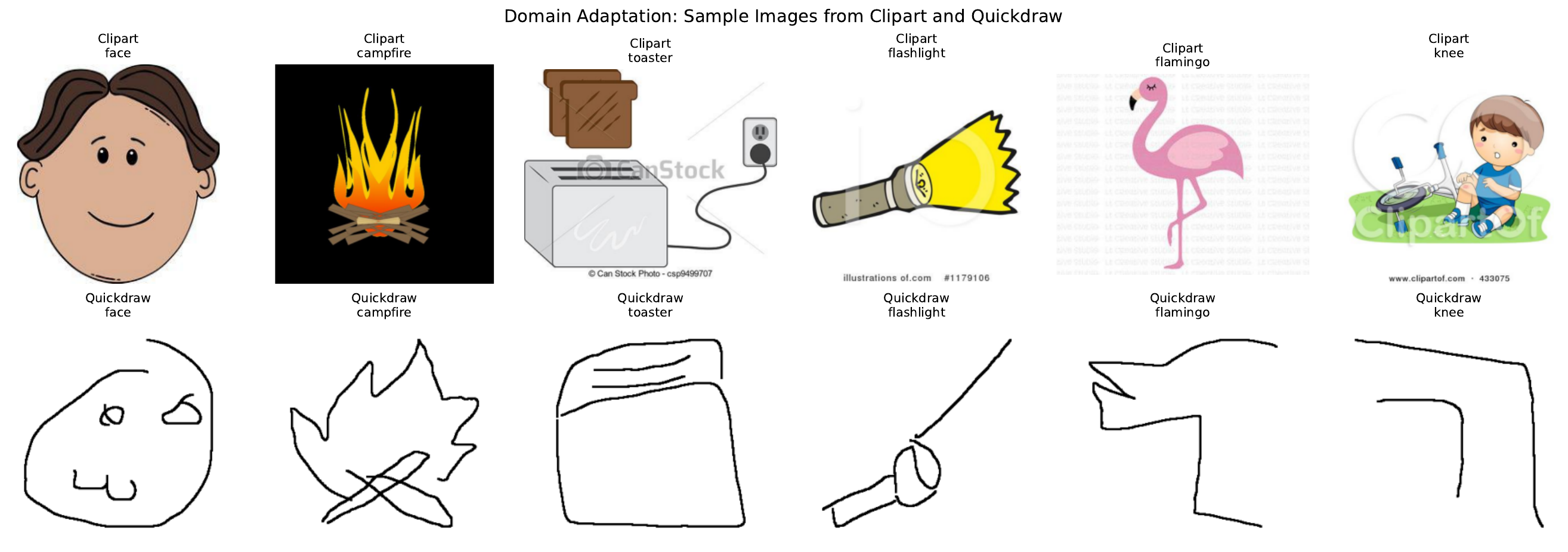}
    \caption{The two domains of public and private data used for \cref{subseq:renyi_evals,subseq:da_evals} \citep{peng2019moment}. Both datasets share the same number of classes, with Clipart being a collection of stylized images representing the private data, and Quickdraw representing a collection of hand-draw sketches.}
    \label{fig:domainnetdata}
\end{figure}

\begin{figure}[htbp!]
    \centering
    \includegraphics[width=1.0\textwidth]{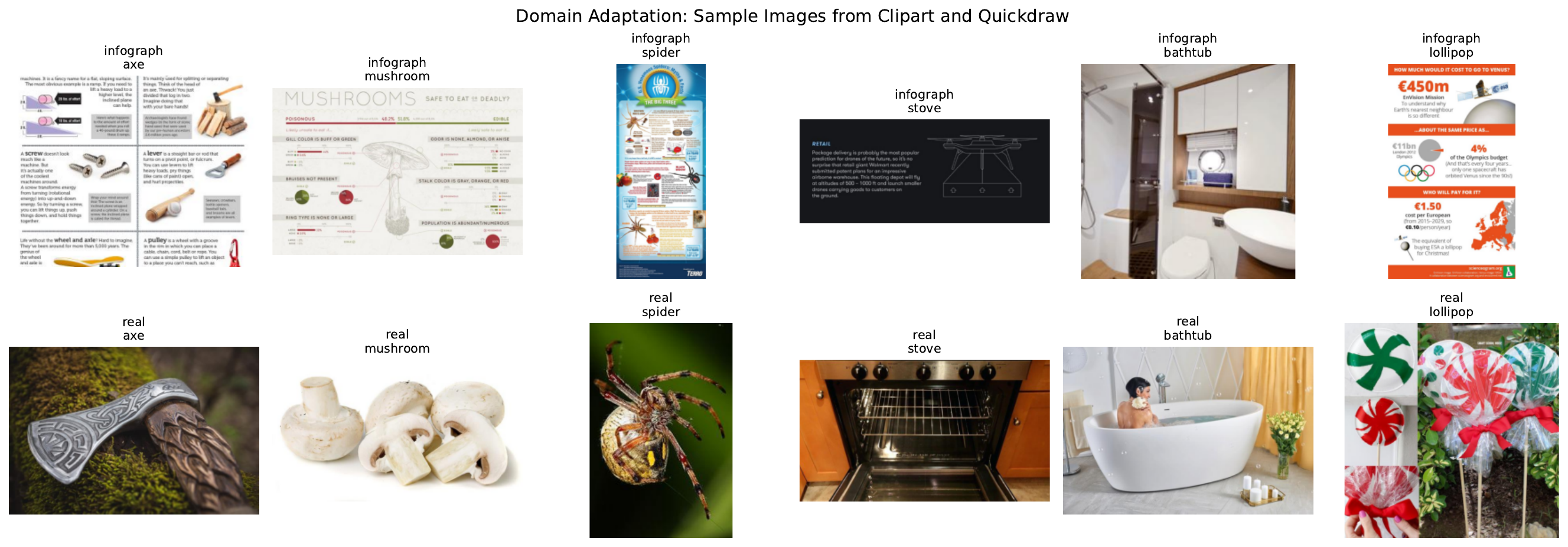}
    \caption{The two domains of public and private data used for \cref{subseq:da_evals} \citep{peng2019moment}. Both datasets share the same number of classes, with Infograph being a collection of stylized images representing the public data, and Real representing a collection of real-life images.}
    \label{fig:domainnetdata_info_real}
\end{figure}

\begin{table}[h!]
\centering
\caption{Class aggregation for experimental dataset. Individual classes are grouped into 24 superclasses.}
\label{tab:class_aggregation}
\scriptsize
\begin{tabular}{p{2.2cm}p{12cm}}
\toprule
\textbf{Superclass} & \textbf{Individual Classes} \\
\midrule
Furniture & bathtub, bed, bench, ceiling fan, chair, chandelier, couch, door, dresser, fence, fireplace, floor lamp, hot tub, ladder, lantern, mailbox, picture frame, pillow, postcard, see saw, sink, sleeping bag, stairs, stove, streetlight, suitcase, swing set, table, teapot, toilet, toothbrush, toothpaste, umbrella, vase, wine glass \\
\midrule
Mammal & bat, bear, camel, cat, cow, dog, dolphin, elephant, giraffe, hedgehog, horse, kangaroo, lion, monkey, mouse, panda, pig, rabbit, raccoon, rhinoceros, sheep, squirrel, tiger, whale, zebra \\
\midrule
Tool & anvil, axe, bandage, basket, boomerang, bottlecap, broom, bucket, compass, drill, dumbbell, hammer, key, nail, paint can, passport, pliers, rake, rifle, saw, screwdriver, shovel, skateboard, stethoscope, stitches, sword, syringe, wheel \\
\midrule
Cloth & belt, bowtie, bracelet, camouflage, crown, diamond, eyeglasses, flip flops, hat, helmet, jacket, lipstick, necklace, pants, purse, rollerskates, shoe, shorts, sock, sweater, t-shirt, underwear, wristwatch \\
\midrule
Electricity & calculator, camera, cell phone, computer, cooler, dishwasher, fan, flashlight, headphones, keyboard, laptop, light bulb, megaphone, microphone, microwave, oven, power outlet, radio, remote control, spreadsheet, stereo, telephone, television, toaster, washing machine \\
\midrule
Building & The Eiffel Tower, The Great Wall, barn, bridge, castle, church, diving board, garden, garden hose, golf club, hospital, house, jail, lighthouse, pond, pool, skyscraper, square, tent, waterslide, windmill \\
\midrule
Office & alarm clock, backpack, binoculars, book, calendar, candle, clock, coffee cup, crayon, cup, envelope, eraser, map, marker, mug, paintbrush, paper clip, pencil, scissors \\
\midrule
Human Body & arm, beard, brain, ear, elbow, eye, face, finger, foot, goatee, hand, knee, leg, moustache, mouth, nose, skull, smiley face, toe, tooth \\
\midrule
Road Transportation & ambulance, bicycle, bulldozer, bus, car, firetruck, motorbike, pickup truck, police car, roller coaster, school bus, tractor, train, truck, van \\
\midrule
Food & birthday cake, bread, cake, cookie, donut, hamburger, hot dog, ice cream, lollipop, peanut, pizza, popsicle, sandwich, steak \\
\midrule
Nature & beach, cloud, hurricane, lightning, moon, mountain, ocean, rain, rainbow, river, snowflake, star, sun, tornado \\
\midrule
Cold Blooded & crab, crocodile, fish, frog, lobster, octopus, scorpion, sea turtle, shark, snail, snake, spider \\
\midrule
Music & cello, clarinet, drums, guitar, harp, piano, saxophone, trombone, trumpet, violin \\
\midrule
Fruit & apple, banana, blackberry, blueberry, grapes, pear, pineapple, strawberry, watermelon \\
\midrule
Sport & baseball, baseball bat, basketball, flying saucer, hockey puck, hockey stick, snorkel, soccer ball, tennis racquet, yoga \\
\midrule
Tree & bush, cactus, flower, grass, house plant, leaf, palm tree, tree \\
\midrule
Bird & bird, duck, flamingo, owl, parrot, penguin, swan \\
\midrule
Vegetable & asparagus, broccoli, carrot, mushroom, onion, peas, potato, string bean \\
\midrule
Shape & circle, hexagon, line, octagon, squiggle, triangle, zigzag \\
\midrule
Kitchen & fork, frying pan, hourglass, knife, lighter, matches, spoon, wine bottle \\
\midrule
Water Transportation & aircraft carrier, canoe, cruise ship, sailboat, speedboat, submarine \\
\midrule
Sky Transportation & airplane, helicopter, hot air balloon, parachute \\
\midrule
Insect & ant, bee, butterfly, mosquito \\
\midrule
Others & The Mona Lisa, angel, animal migration, campfire, cannon, dragon, feather, fire hydrant, mermaid, snowman, stop sign, teddy-bear, traffic light \\
\bottomrule
\end{tabular}
\end{table}

\section{Details about the Rényi estimation}{\label{app:disc_details}}

\subsubsection{Neural Rényi estimation}
Following the works of \citet{birrell_variational_2021,birrell_function-space_2023}, two variational representations of the Rényi divergence between two distributions $P,Q$ have been proposed. The first draws inspiration from the Donsker--Varadhan dual representation \citep{donsker1975asymptotic} of the KL divergence:

\begin{theorem}[Donsker--Varadhan Rényi divergence \citep{birrell_variational_2021}]
    Let $P,Q$ be two distributions on $(\Omega,\mathcal{M})$ and $\alpha \in \mathbb{R}$, $\alpha \neq 0,1$. Then, for any set of functions $\Phi$ with $\mathcal{M}_b(\Omega) \subset \Phi \subset \mathcal{M}(\Omega)$,
    \begin{equation}\label{eq:dv_renyi}
        \frac{D_{\alpha}(P\|Q)}{\alpha} 
        = \sup_{\phi \in \Phi} \left\{\frac{1}{\alpha-1}\log\int e^{(\alpha-1)\phi}\,dP - \frac{1}{\alpha}\log\int e^{\alpha\phi}\,dQ \right\}.
    \end{equation}
    If in addition $(\Omega,\mathcal{M})$ is a metric space with the Borel $\sigma$-algebra, then \cref{eq:dv_renyi} holds for all $\Phi$ satisfying $\mathrm{Lip}_b \subset \Phi \subset \mathcal{M}(\Omega)$, where $\mathrm{Lip}_b$ denotes the set of bounded Lipschitz functions.
\end{theorem}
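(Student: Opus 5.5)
We would prove the two inequalities separately: first that every admissible $\phi$ gives an objective value at most $D_\alpha(P\|Q)/\alpha$, and then that the supremum over $\mathcal{M}_b(\Omega)$ already attains this value. Once both hold, any $\Phi$ with $\mathcal{M}_b(\Omega)\subset\Phi\subset\mathcal{M}(\Omega)$ is sandwiched and the identity follows. Throughout we write $F_\alpha=\int (dP/dQ)^\alpha\,dQ$ when $P\ll Q$, so that $D_\alpha(P\|Q)/\alpha=\frac{1}{\alpha(\alpha-1)}\log F_\alpha$.

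\textbf{Upper bound.} Assume $P\ll Q$; otherwise $D_\alpha=+\infty$ for $\alpha>1$ and there is nothing to prove in that direction. Write $\int e^{(\alpha-1)\phi}\,dP=\int \frac{dP}{dQ}\,e^{(\alpha-1)\phi}\,dQ$.

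For $\alpha>1$, apply Hölder's inequality with exponents $\alpha$ and $\alpha/(\alpha-1)$:
\begin{align*}
\int \tfrac{dP}{dQ}e^{(\alpha-1)\phi}\,dQ\le F_\alpha^{1/\alpha}\Big(\int e^{\alpha\phi}\,dQ\Big)^{(\alpha-1)/\alpha}.
\end{align*}
Taking logarithms and dividing by $\alpha-1>0$ gives the objective $\le \frac{1}{\alpha(\alpha-1)}\log F_\alpha$.

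For $\alpha\in(0,1)$ and $\alpha<0$, use the reverse Hölder inequality, whose exponents are negative or in $(0,1)$. The inequality is reversed, but we then divide by the negative quantity $\alpha-1$, which restores the same conclusion. We must check this sign bookkeeping case by case, including the sign of $1/\alpha$ when $\alpha<0$. We also adopt the conventions of the statement for the value of the objective when one of the integrals is infinite or vanishes.

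\textbf{Attainment.} Formally, the optimizer is $\phi^\star=\log\frac{dP}{dQ}$. Then $\int e^{(\alpha-1)\phi^\star}dP=\int e^{\alpha\phi^\star}dQ=F_\alpha$. The objective becomes $\big(\frac{1}{\alpha-1}-\frac1\alpha\big)\log F_\alpha=\frac{\log F_\alpha}{\alpha(\alpha-1)}$, which equals the claimed value.

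Since $\phi^\star$ is unbounded and possibly $\pm\infty$, we use truncations $\phi_n=\max(-n,\min(n,\phi^\star))\in\mathcal{M}_b$. We then pass to the limit in both integrals by splitting into the sets where $dP/dQ\ge1$ and where $dP/dQ<1$. On each piece the integrands are monotone in $n$, so monotone convergence applies. The case where $P$ has a part singular with respect to $Q$ and $D_\alpha=\infty$ is handled separately: we take $\phi=n$ on a $Q$-null set $A$ with $P(A)>0$ and $\phi=0$ elsewhere, and let $n\to\infty$, which drives the objective to $+\infty$.

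\textbf{Lipschitz refinement.} When $\Omega$ is a metric space, bounded Lipschitz functions are dense in $L^1(P+Q)$ among functions with a common uniform bound. For each bounded $\phi_n$ we pick Lipschitz $\psi_k\to\phi_n$ in $(P+Q)$-measure, with $|\psi_k|\le n$, for instance by truncating a Lipschitz approximation. Dominated convergence then gives convergence of both exponential integrals, so the objective value at $\phi_n$ is approximated from $\mathrm{Lip}_b$. Combined with the upper bound, which holds for all measurable $\phi$, this yields the identity for any $\Phi$ with $\mathrm{Lip}_b\subset\Phi\subset\mathcal{M}(\Omega)$.

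\textbf{Main obstacle.} We expect the limiting arguments in the attainment step to be the main difficulty: handling the negative-$\alpha$ and $\alpha\in(0,1)$ cases, where the integrals can be infinite or zero, and making sure the truncation limits are taken in the right monotone order. The Hölder computation itself is routine.
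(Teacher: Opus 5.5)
The paper gives no proof of this statement; it is quoted from \citet{birrell_variational_2021}, so your argument has to stand on its own. Most of it does. The H\"older/reverse-H\"older upper bound is correct in all three ranges $\alpha>1$, $0<\alpha<1$, $\alpha<0$. The truncation limit works whenever $F_\alpha<\infty$. The $\mathrm{Lip}_b$ step is sound: for a finite Borel measure on a metric space, closed regularity together with $\max(0,1-k\,d(x,F))\downarrow\mathbf{1}_F$ gives density of bounded Lipschitz functions in $L^1(P+Q)$.

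The gap is in the attainment step when $D_\alpha(P\|Q)=\infty$ even though $P\ll Q$. This happens when $F_\alpha=\infty$ with $\alpha>1$ or $\alpha<0$; the latter includes every case with $Q(dP/dQ=0)>0$. Write $r=dP/dQ$, $A_n=\int e^{(\alpha-1)\phi_n}\,dP$ and $B_n=\int e^{\alpha\phi_n}\,dQ$. Your monotone-convergence argument gives $B_n\to\infty$, and also $A_n\to\infty$ when $\alpha>1$. But then the objective $\frac{1}{\alpha-1}\log A_n-\frac{1}{\alpha}\log B_n$ is a difference of two divergent terms, and knowing the separate limits decides nothing. For $\alpha<0$ the ``formal'' identity $\int e^{(\alpha-1)\phi^\star}dP=F_\alpha$ is actually false when $Q(r=0)>0$: the set $\{r=0\}$ is $P$-null, while $r^\alpha=\infty$ there. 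Your extra construction $\phi=n\mathbf{1}_A$ only covers a $P$-singular part with $\alpha>1$.

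What is missing is a pointwise comparison of the two integrands, both taken against $dQ$, on the sets $\{r>e^n\}$, $\{e^{-n}\le r\le e^n\}$ and $\{r<e^{-n}\}$. The integrands coincide on the middle set.

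For $\alpha>1$:
- On the top set, $re^{(\alpha-1)n}\ge e^{\alpha n}$.
- The bottom set contributes at most $e^{-\alpha n}$ to $B_n$.
- Hence $A_n\ge B_n-e^{-\alpha n}$, and the objective is at least $\frac{1}{\alpha(\alpha-1)}\log B_n+o(1)\to+\infty$.

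For $\alpha<0$:
- The top set contributes at most $e^{(\alpha-1)n}$ to $A_n$.
- On the bottom set, $re^{(1-\alpha)n}\le e^{-\alpha n}$.
- Hence $A_n\le B_n+e^{(\alpha-1)n}$. Since $\frac{1}{\alpha-1}<0$, the objective is again at least $\frac{1}{\alpha(\alpha-1)}\log B_n+o(1)\to+\infty$.

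Separately, the statement is for arbitrary $P,Q$, whereas your argument runs entirely through $dP/dQ$. For $\alpha\in(0,1)$ and for $\alpha<0$ the divergence can be finite without $P\ll Q$, so these cases are not yet covered.

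For $\alpha<0$, the cleanest repair is the substitution $\psi=-\phi$, $\beta=1-\alpha$. It maps the objective for $(\alpha;P,Q)$ to the objective for $(\beta;Q,P)$, and maps $D_\alpha(P\|Q)/\alpha$ to $D_\beta(Q\|P)/\beta$. Both $\mathcal{M}_b$ and $\mathrm{Lip}_b$ are closed under negation. So $\alpha<0$ reduces entirely to the case $\beta>1$, including your singular-part construction with $P$ and $Q$ swapped.

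For $\alpha\in(0,1)$, run the same H\"older argument with densities with respect to $P+Q$. A $P$-singular part only enlarges $\int e^{(\alpha-1)\phi}\,dP$, which is the favourable direction for the upper bound. For attainment, take $\phi_n=n$ on $\{q=0<p\}$ and $\phi_n=-n$ on $\{p=0<q\}$; the argument then goes through without indeterminate forms, including the case $P\perp Q$.
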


Here, $\mathcal{M}(\Omega)$ denotes the space of measurable real-valued functions on $\Omega$, and $\mathcal{M}_b(\Omega)$ the subspace of bounded functions.

While this representation allows sample-based estimation, it involves exponential terms that yield high-variance estimates in practice. To mitigate this issue, \citet{birrell_function-space_2023} proposed a convex conjugate formulation:

\begin{theorem}[Convex conjugate Rényi divergence \citep{birrell_function-space_2023}]
    Let $P,Q$ be probability distributions supported on $\Omega$, with $P \ll Q$, and let $\mathcal{M}_b(\Omega)$ denote the space of bounded measurable functions. Then, for all $\alpha \in (0,+\infty)\setminus\{1\}$,
    \begin{equation}
        \frac{D_{\alpha}(P\|Q)}{\alpha} 
        = \sup_{g \in \mathcal{M}_b(\Omega),\, g < 0} \int g \, dQ 
        + \frac{1}{\alpha-1} \int |g|^{\frac{\alpha-1}{\alpha}}\, dP 
        + \frac{1}{\alpha}(\log \alpha + 1).
    \end{equation}
\end{theorem}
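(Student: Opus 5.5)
The plan is to compute the supremum on the right-hand side in closed form by pointwise optimization, and then compare the result with $D_{\alpha}(P\|Q)/\alpha = \frac{1}{\alpha(\alpha-1)}\log F_\alpha(P\|Q)$, where $F_\alpha(P\|Q)=\mathbb{E}_Q[(dP/dQ)^\alpha]$ as in \cref{lemma:gaussian_convolution_derivative}. I would first state one expectation and then check it, since it shapes everything below: \emph{the stated formula, which has no logarithm on the $P$-term, appears to be false.} What the computation actually produces is a multiple of $F_\alpha$, which is affine in $F_\alpha$ rather than logarithmic.

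Write $p=dP/dQ$ (this uses $P\ll Q$) and $\beta=\frac{\alpha-1}{\alpha}$. Take $\alpha>1$, so that $\beta\in(0,1)$; the case $\alpha<1$ is symmetric, with the signs of $\alpha-1$ and $\beta$ flipped. Both integrals are against $Q$ after the change of measure:
\begin{align*}
\int g\,dQ+\frac{1}{\alpha-1}\int|g|^{\beta}dP=\int\Bigl(g+\frac{p}{\alpha-1}|g|^{\beta}\Bigr)dQ .
\end{align*}
The supremum over bounded measurable $g<0$ can therefore be taken pointwise. This interchange is justified by approximating the pointwise maximizer with truncations and applying monotone convergence.

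For fixed $p>0$, write $g=-c$ with $c>0$ and maximize $-c+\frac{p}{\alpha-1}c^{\beta}$. The first-order condition gives $c^{\beta-1}=(\alpha-1)/(\beta p)$, hence $c^{*}=(p/\alpha)^{\alpha}$. Substituting back gives the maximal value $\frac{\alpha^{-\alpha}}{\alpha-1}p^{\alpha}$. Integrating against $Q$, the right-hand side of the statement equals
\begin{align*}
\frac{\alpha^{-\alpha}}{\alpha-1}F_\alpha(P\|Q)+\frac{\log\alpha+1}{\alpha}.
\end{align*}

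The main obstacle is that this is not $D_\alpha/\alpha$. The sanity check $P=Q$ makes the mismatch concrete. Here $F_\alpha=1$ and $D_\alpha=0$, but for $\alpha=2$ the expression evaluates to $\tfrac14+\tfrac{1+\log 2}{2}\neq 0$, so the identity as worded cannot hold. The natural repair is the form of \citet{birrell_function-space_2023}, in which the $P$-term reads $\frac{1}{\alpha-1}\log\int|g|^{\beta}dP$. With that form, I would run the same pointwise analysis but additionally optimize over the scale $g\mapsto\lambda g$. The $\log$ converts the multiplicative factor into an additive one, and the optimal $\lambda$ produces exactly $\frac{1}{\alpha(\alpha-1)}\log F_\alpha$ together with the constant $\alpha^{-1}(\log\alpha+1)$. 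Concretely, I would carry out the steps above, record the $P=Q$ counterexample, and then prove the corrected logarithmic identity via this scaling argument. I flag that only the corrected version, not the literal statement, can be established.
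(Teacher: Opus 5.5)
Your diagnosis is correct. The paper offers no argument to compare yours against: it does not prove this statement, it imports it from \citet{birrell_function-space_2023}, and the logarithm on the $P$-term was lost in transcription. In the cited result that term reads $\frac{1}{\alpha-1}\log\int|g|^{(\alpha-1)/\alpha}\,dP$. The left-hand side there, $\frac{1}{\alpha(\alpha-1)}\log\int (dP/dQ)^{\alpha}\,dQ$, is exactly $D_\alpha(P\|Q)/\alpha$ under \cref{def:renyi}.

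Your pointwise computation ($c^*=(p/\alpha)^\alpha$, maximal value $\frac{\alpha^{-\alpha}}{\alpha-1}p^{\alpha}$) is right for both $\alpha>1$ and $\alpha\in(0,1)$. Your counterexample does not even need the sup--integral interchange. For $P=Q$ and $\alpha=2$, the single admissible choice $g\equiv-\tfrac14$ already makes the objective equal to $\tfrac14+\tfrac{1+\log 2}{2}>0$, whereas $D_2(P\|P)/2=0$. The same omission appears in the main-text \cref{thm:variational} and in the appendix's discriminator objective and pseudo-code. As written, these target $\frac{\alpha^{-\alpha}}{\alpha-1}F_\alpha+\frac{1+\log\alpha}{\alpha}$ rather than $D_\alpha/\alpha$.

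For the corrected identity, optimizing the scale alone does not finish the upper bound. Fix $h=|g|$ and $\beta=\frac{\alpha-1}{\alpha}$. The optimal $\lambda$ in $g\mapsto\lambda g$ leaves $\frac{1}{\alpha-1}\log\int h^{\beta}\,dP-\frac{1}{\alpha}\log\int h\,dQ$, after the constant $\alpha^{-1}(\log\alpha+1)$ cancels. This still depends on the shape of $h$. Because the $P$-term is no longer an integral of a pointwise expression, your pointwise analysis does not apply as is.

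H\"older's inequality closes the argument:
\begin{itemize}
\item For $\alpha>1$, use $\bigl(\int h^{\beta}p\,dQ\bigr)^{\alpha}\le\bigl(\int h\,dQ\bigr)^{\alpha-1}F_\alpha$, with exponents $\frac{\alpha}{\alpha-1}$ and $\alpha$.
\item For $\alpha\in(0,1)$, the factor $\frac{1}{\alpha(\alpha-1)}<0$ flips the required direction. Use the reverse inequality, obtained from H\"older with exponents $\frac{1}{\alpha}$ and $\frac{1}{1-\alpha}$ applied to $p^{\alpha}=(h^{\beta}p)^{\alpha}h^{1-\alpha}$.
\end{itemize}
Equality holds at $h\propto p^{\alpha}$, which is your pointwise maximizer, and truncations of $p^{\alpha}$ give achievability.

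Alternatively, you can reuse your computation verbatim by writing $\log B=\inf_{\mu>0}(\log\mu+B/\mu-1)$. The $P$-term then becomes an integral, your calculation applies with $p$ replaced by $p/\mu$, and a one-dimensional optimization over $\mu$ returns $\frac{1}{\alpha(\alpha-1)}\log F_\alpha$.

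There is also a short derivation from the Donsker--Varadhan-type formula (\ref{eq:dv_renyi}). Replace its $Q$-side term $-\frac{1}{\alpha}\log\int e^{\alpha\phi}\,dQ$ with the Legendre representation $-\log x=\sup_{y<0}(xy+1+\log(-y))$, then substitute $g=-\frac{|y|}{\alpha}e^{\alpha\phi}$. This yields the corrected statement in two lines and makes plain that only the $Q$-side logarithm disappears.
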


This convex conjugate formulation removes the exponential dependence and provides more stable numerical estimates, making it preferable for our setting.

\paragraph{Neural network parameterization.}
To approximate $\Phi = \{g \in \mathcal{M}(\Theta) : g < 0\}$ we use the class $g_\theta$ of two-layer MLPs with spectral normalization \citep{miyato_spectral_2018}, LeakyReLU activations, and a polysoftplus output activation as in \citet{birrell_function-space_2023}. The polysoftplus activation offers superior numerical stability compared to ReLU. It is defined as
\begin{align}
    \mathrm{polysoftplus}(x) &= -\left(\frac{1}{1-x}\mathbf{1}_{x<0} + (1+x)\mathbf{1}_{x\geq0}\right).
\end{align}

The discriminator network $g_\theta$ is trained to maximize the variational bound in \cref{eq:cc_renyi} using samples 
$\{\theta_i^U\}_{i=1}^N \sim \pi_U^K$ and 
$\{\theta_j^R\}_{j=1}^N \sim \pi_R^{T+K}$. The optimization objective becomes:
\begin{equation}
    \max_\theta \left\{ 
    \frac{1}{N} \sum_{j=1}^N g_\theta(\theta_j^R) 
    + \frac{1}{\alpha-1}\frac{1}{N} \sum_{i=1}^N |g_\theta(\theta_i^U)|^{\frac{\alpha-1}{\alpha}} 
    + \frac{1}{\alpha}(\log \alpha + 1) 
    \right\}.
\end{equation}

To reduce estimator variance, we repeat the discriminator training five times with different random initializations and report the average. 
We use a learning rate of value $0.0001$ with Adam optimizer \citep{kingma2017adammethodstochasticoptimization}, and train the discriminators for $30000$ epochs with batch size $b=6000$. 

This procedure used $N=30{,}000$ model samples, which makes it computationally intensive and better suited for theoretical validation than for large-scale empirical benchmarking. Although regularization and repeated runs alleviate variance, Rényi divergence estimation remains a statistically challenging task. Developing scalable and lower-variance estimators is therefore an important direction for future work. The parameter $\alpha=2$ was chosen to have a stable statistical estimation of the Rényi divergence. Large values of $\alpha$ require estimating higher moments of the density ratio involved in the Rényi divergence, and yield poor estimations. Smaller values (e.g $\alpha < 2$, on the other hand, are weaker divergence measure (since the Rényi divergence is monotone in its order $\alpha$) and do not align with the context of machine unlearning. The number of unlearning steps was selected to show that unlearning with public data has a strict computational advantage over retraining. For example, in \cref{subseq:mias}, we find that with a small number of unlearning steps, we achieve an accuracy that is very close to the performance of 50 steps of retraining, using only up to 15 unlearning steps.

\subsubsection{Sampling from $\pi_U^K$ and $\pi_R^{T+K}$}
We conduct experiments on the DomainNet dataset (24-class image classification) \cref{fig:domainnetdata}. We choose the domain Clipart as the private data domain, which are stylized images, and Quickdraw, a collection of hand-drawn sketches as the public domain.
Image embeddings are extracted using DinoV2 \citep{oquab2024dinov2learningrobustvisual}, a self-supervised vision transformer. We specifically use vit\_small\_patch16\_224\_dino \citep{caron2021emergingpropertiesselfsupervisedvision}. All images are resized to $224\times224$ prior to feature extraction. 

On these embeddings, we train $30{,}000$ linear classifiers on the full dataset $D = D_{\mathrm{pub}} \cup D_{\mathrm{priv}}$ for $T=20$ iterations, and subsequently fine-tune them on the retain set $D_r = D \setminus D_{\mathrm{forget}}$ for $K \in \{1,5,10,15\}$ additional iterations. This procedure yields $30{,}000$ samples from the unlearning distribution $\pi_U^K$.  

For comparison, we train another $30{,}000$ linear classifiers directly on the retain set $D_r$ for $T+K$ iterations, producing samples from the retraining distribution $\pi_R^{T+K}$. All models are trained using the same projected noisy gradient descent (PNGD) update with noise scale $\sigma=0.01$, learning rate $\eta=0.001$, batch size $b=1024$, and radius $R=1.0$ using SGD.

To assess robustness across dataset splits, we fix the total training set size to $N_{\mathrm{train}}=42{,}000$, and vary the public and forget set sizes as $(|D_{\mathrm{pub}}|,|D_{\mathrm{forget}}|)\in \{(10{,}000, 12{,}000)$, $(15{,}000, 7{,}000)$, and $(20{,}000, 2{,}000)\}$. The remaining private data in the retain set is fixed to have size 20,000.
The resulting divergence estimates are reported in \cref{fig:renyi_estimation,fig:renyi_initi_estimation}.
\subsection{Pseudo-code}\label{subsubseq:renyi_pseudo}
\begin{algorithm}[htbp]
\caption{Rényi Divergence Estimation via Variational Representation}
\label{alg:renyi_estimation}
\begin{algorithmic}[1]
    \STATE \textbf{Input:} Samples $\{\theta_i^R\}_{i=1}^N \sim \pi_R^{T+K}$, $\{\theta_j^U\}_{j=1}^N \sim \pi_U^K$, order $\alpha$, discriminator architecture
    
    \STATE Initialize discriminator network $g_\phi$ with spectral normalization
    
    \FOR{epoch $= 1$ to $\text{num\_epochs}$}
        \STATE Sample minibatch from retraining samples $\{\theta_i^R\}$
        \STATE Sample minibatch from unlearning samples $\{\theta_j^U\}$
        
        \STATE Compute variational objective:
        \begin{align}
            \mathcal{L} = \frac{1}{N} \sum_{i=1}^N g_\phi(\theta_i^R) + \frac{1}{\alpha-1} \frac{1}{N} \sum_{j=1}^N |g_\phi(\theta_j^U)|^{\frac{\alpha-1}{\alpha}} + \frac{1}{\alpha}(\log \alpha + 1)
        \end{align}
        
        \STATE Update $\phi$ to maximize $\mathcal{L}$ via gradient ascent
    \ENDFOR
    
    \STATE \textbf{Output:} Estimated divergence $\widehat{D}_\alpha(\pi_U^K \| \pi_R^{T+K}) = \widehat{\mathcal{L}}^{1/\alpha}$
\end{algorithmic}
\end{algorithm}

\section{Evaluation with U-LiRA}{\label{subseq:ulira_details}}
\subsubsection{U-LiRA details}
U-LiRA, introduced by \citet{eleni_false_sense} as an adaptation of the LiRA membership inference attack \citep{carlini2021membership} to the unlearning setting, formalizes unlearning evaluation as a binary hypothesis test. The goal is to distinguish between two distributions over model parameters: the unlearning distribution $\pi_U^K$, obtained by training on the full dataset and subsequently applying the target unlearning algorithm to remove the influence of the forget set, and the retraining distribution $\pi_R^{T+K}$, obtained by training from scratch without the forget set. Letting $P(\theta \mid \cdot)$ denote the likelihood of observing model parameters $\theta$ under a given distribution, the Neyman–Pearson lemma \citep{neyman1933ix} implies that the most powerful test for this discrimination problem is achieved by thresholding the likelihood ratio
\begin{align*}
\frac{P(\theta | \pi_U^K)}{P(\theta | \pi_R^{T+K})}    
\end{align*}
for model parameters $\theta$.

Since directly computing $P(\theta \mid \pi_U^K)$ and $P(\theta \mid \pi_R^{T+K})$ is infeasible in practice, U-LiRA employs a series of approximations. First, the two distributions are approximated empirically by sampling: the adversary trains $N$ models under $\pi_U^K$ (full training followed by unlearning) and $N$ models under $\pi_R^{T+K}$ (training from scratch without the forget set).

To reduce the sample complexity required for a low-variance estimate, U-LiRA projects models into a one-dimensional representation space via a statistic $f: \Theta \to \mathbb{R}$ (since we only run the attack on forget sets of size 1, we follow \citet{eleni_false_sense} and choose $f$ to be the model's confidence score on the forget example, rescaled by the logit function $\phi(\omega) = \ln\left(\frac{\omega}{1-\omega}\right)$). The test is then conducted on the surrogate likelihood ratio
\begin{align*}
\frac{P(f(\theta) | f(\pi_U^K))}{P(f(\theta) | f(\pi_R^{T+K}))}.    
\end{align*}
As a final simplifying approximation, U-LiRA models the projected distributions as Gaussians
\begin{align*}
       f(\pi_U^K) \approx \mathcal{N}(\mu_U, \sigma^2_U), \hspace{0.5cm}f(\pi_R^{T+K}) \approx \mathcal{N}(\mu_R \sigma^2_R),
\end{align*}

where the parameters $(\mu_U, \sigma_U^2)$ and $(\mu_R, \sigma_R^2)$ are estimated directly from the $N$ sample models of each distribution.

In \ref{subseq:mias}, we presented the attack through the lens of Bayes’ rule (following Algorithm~1 of \citet{eleni_false_sense}), providing a more intuitive explanation for readers less familiar with hypothesis testing concepts.

\subsubsection{Experimental setup}
We evaluate unlearning in binary sentiment classification of IMDB reviews \citep{mass2011learning}, with Amazon product reviews \citep{zhang2015character} as public data. Models are 2-layer LSTMs \citep{hochreiter1997long}, trained to minimize cross-entropy loss with projected noisy gradient descent (Gaussian noise variance $\sigma^2 = 0.01$, projection onto an $\ell_2$ ball of radius $100$).  

For each trial, the forget set consists of a $100$ datapoints sampled uniformly from the IMDB reviews dataset. Following the U-LiRA framework, we generate $75$ model samples from two distributions:  
\begin{itemize}
    \item \textbf{Unlearning distribution $\pi_U^K$:} models trained on $25{,}000$ private datapoints plus the forget set for $T$ epochs, then finetuned without the forget set for $K$ epochs.  
    \item \textbf{Retraining distribution $\pi_R^{T+K}$:} models trained from scratch on the same $25{,}000$ private datapoints (excluding the forget set) for $T+K$ epochs.  
\end{itemize}

We repeat this sampling process both with and without the inclusion of the $50{,}000$ public datapoints during training and unlearning.


\end{document}